\iffalse
todo:
critical for final focs:
    IEEEproof
    copyright
    reviewer comments

after focs:
    clarify identifiability vs sample and algorithmic complexity
    how practical
    parameter R
    
critical for focs:
    M is also an input for membership...
    keywords
    correct restriction to spherical gaussian
    format
    empirical covariance?
    abstract
robust covariance
    novelty centroid as covariance
    remove vspaces
    check abstract
mean 0 ica
clear explanation of orthogonalization in intro

less critical
lutwak
ica areas
normalization is without loss of generality

membership test doesn't need symmetry
lambda in proof of theorem 30 and statements
scale discrepancy in two halves is confusing now.
unitary, orthogonal?
covariance issue inner vs true
%cvar only defined with first moment
poly(n) needs to be made more specific in algorithms. e.g. it probably depends on more than n
radial lemma is more general
random variable -> random vector
lambda -> sigma or condition number
lambda eps oracle
unexpected definition of inner parallel and consequences
xxx
citations:
centroid body: lutwak, milman pajor (zonoid of inertia), gardner's book, petty, dupin, blaschke
algorithmic aspects of centroid bodies
condition that there is mass away from the origin is not needed if one uses uniform distribution and vempala.
computational harndess of floating body: set of centerpoints
do we really need to normalize (divide by number of samples) when dealing with centroid body covariance estimation and alg?
r,R missing from weak oracles
xxx
\fi

\newif\ifextras
%\extrastrue
\extrasfalse
\newif\iflong
%\longfalse
\longtrue
\def\final{1}  % set this to 1 to get a comment-free version

\newif\ifshort

\iflong
\shortfalse
\else
\shorttrue
\fi
%\shortfalse

\newif\ifcolt
\coltfalse
%\colttrue
\ifcolt
\documentclass[10pt, conference, onecolumn, compsocconf]{IEEEtran}
\else
\documentclass[11pt]{article}\newcommand{\numberofauthors}[1]{}  \usepackage{amsthm} \newcommand{\email}[1]{\href{mailto:#1}{\texttt{#1}}} \newcommand{\myand}{\and} \newcommand{\alignauthor}{}  \usepackage{fullpage}
\fi

\iflong
\newcommand{\myparagraph}[1]{\emph{#1}}
\else
\newcommand{\myparagraph}[1]{\paragraph{#1}}
\usepackage{enumitem}
\setlist[enumerate]{noitemsep,nolistsep}
\fi

\usepackage{amsmath, amssymb, microtype}
\usepackage{bbm}
\usepackage{latexsym}
\usepackage{color}
%\begin{figure}[htb]
%\centerline{\input{lb.pstex_t}}
%\caption{}
%\end{figure}

\usepackage{algorithm}
\usepackage{algpseudocode}

\usepackage{float}

\floatstyle{ruled}
%\newfloat{program}{thp}{lop}
\newfloat{subroutine}{htbp}{loa}
\floatname{subroutine}{Subroutine}
\algdef{SE}[myFOR]{myFor}{myEndFor}[1]{\algorithmicfor\ #1\ \algorithmicdo}{\algorithmicend\ \algorithmicfor}%

\usepackage{mathtools}
\usepackage{amsfonts}
\usepackage{hyperref}
\usepackage{thmtools}
\usepackage{thm-restate}

%%%%%%%%%%%%%%%%%%Commenting macros%%%%%%%%%%%%%%%%%
%\def\final{1}  % set this to 1 to get a comment-free version

\ifnum\final=0  %namely if we allow comments in the output
\newcommand{\lnote}[1]{[{\small Luis: \bf #1}]}
\newcommand{\jnote}[1]{[{\small Joe: \bf #1}]}
\newcommand{\nnote}[1]{[{\small Navin: \bf #1}]}
\newcommand{\anonnote}[1]{[{\small anon: \bf #1}]}
\newcommand{\sidecomment}[1]{\marginpar{\tiny #1}}
\newcommand{\details}[1]{{\color{blue}\ [[#1]] }}
\else % in this case [final=1] we don't want any comments to show
\newcommand{\lnote}[1]{}
\newcommand{\jnote}[1]{}
\newcommand{\nnote}[1]{}
\newcommand{\anonnote}[1]{}
\newcommand{\sidecomment}[1]{}
\newcommand{\details}[1]{}
\fi  %ok, we're done with defining comment macros
%%%%%%%%%%%%%%%%%%%%%%%%%%%%%%%%%%%%%%%%%%%%%%%%%%%%

%%%%%%%%%%%%%%%%%%%%%%%%%%%%%%%%%%%%%%%%%%%%%%%%%%%%%%%%%%%%%%%%%%%%%%%%%%%%%%%%%%%%%%%%%%%
%\newcommand{\optionalbreak}{\\ &}

\ifcolt
% \coltauthor{\Name{Joseph Anderson} \\
% \addr \Email{andejose@cse.ohio-state.edu}, Computer Science and Engineering, \\ Ohio State University
% \AND
% \Name{Navin Goyal} \Email{navingo@microsoft.com}\\
% \addr Microsoft Research India
% \AND
% \Name{Luis Rademacher} \Email{lrademac@cse.ohio-state.edu}\\
% \addr Computer Science and Engineering \\ Ohio State University
% }

% \author{\IEEEauthorblockN{Joseph Anderson}
% \IEEEauthorblockA{Computer Science and Engineering\\
% The Ohio State University\\
% Columbus, OH\\
% andejose@cse.ohio-state.edu}
% \and
% \IEEEauthorblockN{Navin Goyal}
% \IEEEauthorblockA{Microsoft Research\\
% Bangalore, India\\
% navingo@microsoft.com}
% \and
% \IEEEauthorblockN{Anupama Nandi}
% \IEEEauthorblockA{Computer Science and Engineering\\
% The Ohio State University\\
% Columbus, OH\\
% nandi.10@osu.edu}
% \and
% \IEEEauthorblockN{Luis Rademacher}
% \IEEEauthorblockA{Computer Science and Engineering\\
% The Ohio State University\\
% Columbus, OH\\
% lrademac@cse.ohio-state.edu}
% }

\author{\IEEEauthorblockN{Joseph Anderson\IEEEauthorrefmark{1},
Navin Goyal\IEEEauthorrefmark{2},
Anupama Nandi\IEEEauthorrefmark{1}, and 
Luis Rademacher\IEEEauthorrefmark{1}}
\IEEEauthorblockA{\IEEEauthorrefmark{1} Department of Computer Science and Engineering\\
The Ohio State University,
Columbus, Ohio 43210\\ Email: andejose@cse.ohio-state.edu, nandi.10@osu.edu, lrademac@cse.ohio-state.edu}
\IEEEauthorblockA{\IEEEauthorrefmark{2}Microsoft Research\\
Email: navingo@microsoft.com}}

\newcommand{\mykeywords}[1]{\begin{IEEEkeywords}#1\end{IEEEkeywords}}

\else
\numberofauthors{4}
\author{
\alignauthor
Joseph Anderson \thanks{\email{andejose@cse.ohio-state.edu}, Dept. of Computer Science and Engineering, Ohio State University}\label{osu} \\
% \affaddr{Computer Science and Engineering} \\
% \affaddr{Ohio State University} \\
\myand Navin Goyal  \thanks{\email{navingo@microsoft.com}, Microsoft Research}\label{msr} \\
% \affaddr{Microsoft Research India} \\
\myand Anupama Nandi \thanks{\email{nandi.10@osu.edu}, Dept. of Computer Science and Engineering, Ohio State University} \\
% \affaddr{Computer Science and Engineering} \\
% \affaddr{Ohio State University} \\
\myand Luis Rademacher \thanks{\email{lrademac@cse.ohio-state.edu}, Dept. of Computer Science and Engineering, Ohio State University} \\
% \affaddr{Computer Science and Engineering} \\
% \affaddr{Ohio State University} \\
%\affaddr{2015 Neil Avenue, Dreese Labs 495.}  \\
%\affaddr{Columbus, OH 43210} \\
}
%%%%%%%%%%%%%%%%%%%%%%%%%%%%%%%%%%%%%%%%%%%%%%%%%%%%%%%%%%%%%%%%%%%%%%%%%%%%%%%%%%%%%%%%%%%

\newcommand{\mykeywords}[1]{}
\fi

\declaretheorem[name=Theorem]{theorem}
\newtheorem{lemma}[theorem]{Lemma}

\newtheorem{definition}[theorem]{Definition}

\renewcommand{\dim}{{n}}
\newcommand{\an}{an}
\newcommand{\An}{An}

\newcommand{\RR}{\ensuremath{\mathbb{R}}}
\newcommand{\QQ}{\ensuremath{\mathbb{Q}}}
\renewcommand{\S}{\mathbb{S}}

\newcommand{\expectation}{\operatorname{\mathbb{E}}}
\newcommand{\e}{\expectation}
\newcommand{\conv}{\operatorname{conv}}
\newcommand{\suchthat}{\mathrel{:}}
\newcommand{\norm}[1]{{\lVert#1\rVert}}
\newcommand{\diag}{\operatorname{diag}}
\newcommand{\eps}{\epsilon}

\newcommand{\cov}{\operatorname{Cov}}

\newcommand{\poly}{\operatorname{poly}}

\newcommand{\abs}[1]{\lvert#1\rvert}
\newcommand{\lrabs}[1]{\left \lvert #1 \right \rvert}

\newcommand{\noname}[1]{}

\newcommand{\polar}{\circ}

\newcommand{\measure}{\mathbb{P}}

\newcommand{\inner}[2]{\langle{#1},{#2}\rangle}

\newcommand{\E}{\mathbb{E}}
\newcommand{\R}{\mathbb{R}}
\newcommand{\var}{\operatorname{var}}
\newcommand{\Var}{\var}

\newcommand{\cum}{\mathsf{cum}}

\newcommand{\centroid}{\Gamma}

\newcommand{\hX}{\hat{X}}
\newcommand{\hE}{\hat{E}}

\newcommand{\giventhat}{\mid}
\newcommand{\ud}{\mathop{}\!\mathrm{d}}

\usepackage{verbatim}

%% Set to always remove proofs in the short version
\iflong
%%% uncomment next line to remove all proofs
% \renewenvironment{proof}{\expandafter\comment}{\expandafter\endcomment}
\else
\renewenvironment{proof}{\expandafter\comment}{\expandafter\endcomment}
\fi
%%%

\usepackage[capitalise]{cleveref}

\title{Heavy-tailed Independent Component Analysis}

\date{}

% Bad thing?
\allowdisplaybreaks

\begin{document}
\maketitle

\begin{abstract}
Independent component analysis (ICA) is the problem of efficiently recovering a matrix $A \in \R^{n\times n}$ from i.i.d. observations of $X=AS$ where $S \in \R^n$ is a random vector with mutually independent coordinates.
This problem has been intensively studied, but all existing efficient algorithms with provable guarantees require that the coordinates $S_i$ have finite fourth moments.
We consider the heavy-tailed ICA problem where we do not make this assumption, about the second moment.
This problem also has received considerable attention in the applied literature.
In the present work, we first give a provably efficient algorithm that works under the assumption that for constant $\gamma > 0$, each $S_i$ has finite $(1+\gamma)$-moment, thus substantially weakening the moment requirement condition for the ICA problem to be solvable.
We then give an algorithm that works under the assumption that matrix $A$ has orthogonal columns but requires no moment assumptions.
Our techniques draw ideas from convex geometry and exploit standard properties of the multivariate spherical Gaussian distribution in a novel way.
\end{abstract}
\mykeywords{Independent Component Analysis, heavy-tailed distributions, centroid body}

\section{Introduction}
\details{
	noise robustness, outliers, polytope learning, robust statistics (refer to the Huber book?), mention why we don't use the floating body(?), talk about prewhitening approaches and why they fail here, talk about mixtures of heavy-tailed distributions(?),
	compare with isotropic PCA (which uses a reweighting similar to our Gaussian damping), preprocessing by symmetrization. Clarify what kind of r.v.s are the $s_i$'s: do we assume that they have density? Does the 
	density have to be non-zero everywhere? 

	Confusingly, in some ICA literature the word ``heavy-tailed'' is used to mean distribution with
	positive Kurtosis. This should be clarified right in the beginning and in the abstract too.
	If you do a search for ``heavy-tailed'' the second result is the Wikipedia page on kurtosis.

	Examples and utility of heavy-tailed distributions.
}

The blind source separation problem is the general problem of recovering 
underlying ``source signals'' that have been mixed in some unknown way and are presented to an observer. 
Independent component analysis (ICA) is a popular model for blind source separation where the mixing is performed linearly. 
Formally, if $S$ is an $\dim$-dimensional random vector from an unknown product distribution and $A$ is an invertible linear transformation, one is tasked with recovering the matrix $A$ and the signal $S$, using only access to i.i.d. samples of the transformed signal, namely $X = AS$.
 % with $\poly(n)$ sample and time complexity.
Due to natural ambiguities, the recovery of $A$ is possible only up to 
the signs and permutations of the columns. Moreover, for the recovery to be possible the distributions of 
the random variables $S_i$ must not be a Gaussian distribution. 
ICA has applications in diverse areas such as neuroscience, signal processing, 
statistics, machine learning. There is vast literature on ICA; see, e.g., \cite{Comon94, ICA01, ComonJutten}.

Since the formulation of the ICA model, a large number of algorithms have been devised employing a diverse set of 
techniques.
Many of these existing algorithms break the problem into two phases: first, find a transformation which, when applied to the observed samples, gives a new distribution which is \emph{isotropic}, i.e. a rotation of a (centered) product distribution; second, one typically uses an optimization procedure for a functional applied to the samples, such as the fourth directional moment, to recover the axes (or basis) of this product distribution.

Our focus in this paper will be on efficient algorithms with provable guarantees and finite sample analysis.
% We now briefly survey such algorithms. \nnote{...}
To our knowledge, all known efficient algorithms for ICA with provable guarantees require higher moment assumptions such as finiteness of the fourth or higher moments for each component $S_i$.
%This holds not only for the algorithms with finite sample guarantees but also for algorithms for which only asymptotic consistency is proven\lnote{what about mutual information based algorithms?}\nnote{afaik, they have no guarantees unless some further assumptions are made}.
Some of the most relevant works, e.g. algorithms of \cite{dl95, FJK}, explicitly require the fourth moment to be finite.
Algorithms in \cite{Yeredor, GVX}, which make use of the characteristic function also seem to require at least the fourth moment to be finite: while the characteristic function exists for distributions without moments, the algorithms in these papers use the second or higher derivatives of the (second) characteristic function, and for this to be well-defined one needs the moments of that order to exist.
Furthermore, certain anticoncentration properties of these derivatives are  needed which require that fourth or higher moments exist.

Thus the following question arises: is ICA provably efficiently solvable when the moment condition is weakened so that, say, only the second moment exists, or even when no moments exist? 
By \emph{heavy-tailed ICA} we mean the ICA problem with weak or no moment conditions (the precise
moment conditions will be specified when needed). 

While we consider this problem to be interesting in its own right, it is also of interest in practice
in a range of applications, e.g. \cite{Kidmose01, KidmoseThesis, shereshevski2001super, ChenBickel04, chen2005consistent, sahmoudi2005blind, wang2009ica}. \nnote{perhaps mention the areas in these papers?}
The problem could also be interesting from the perspective of robust statistics because of the following
 informal connection: 
algorithms solving heavy-tailed ICA might work by focusing on samples in a small (but not low-probability) region in order to get reliable statistics
about the data and ignore the long tail. Thus 
if the data for ICA is corrupted by outliers, the outliers are less likely to affect such an algorithm. 
%% that solve this problem could be useful for ICA
%% when the data has been corrupted by outliers. This is because such algorithms could be more robust to outliers 
%% data because they focus on the samples in a small region in order to get reliable statistics about the data. %%We will elaborate on this informal connection in Sec.~xxx. 
\nnote{revisit}

In this paper, \emph{heavy-tailed} distributions on the real line are %% , roughly speaking, those whose tail probabilities do not have at least an exponential decay; consequently,
those for which low order moments are not finite. Specifically, we will be interested in the case when the fourth
or lower order moments are not finite as this is the case that is not covered by previous algorithms. 
% \nnote{ more precise? This definition is different from the one on Wikipedia}
We hasten to clarify that in some ICA literature the word heavy-tailed is used with a different and less standard meaning, namely distributions with positive kurtosis; this meaning will not be used in the present work.

\nnote{It seems to me that this sentence does not fit anywhere: perhaps even more troublesome in the ICA context, however, is that if the first moment of the distribution diverges, one cannot directly put the distribution into isotropic position because the ``center'' is no longer defined.}
%\jnote{may want to move the preceeding sentence to the paragraph above, but i'm not sure.}

Heavy-tailed distributions arise in a wide variety of contexts including signal processing and finance;
see \cite{nolan:2015, Rachev2003} for an extensive bibliography.
Some of the prominent examples of heavy-tailed distributions are the Pareto distribution with shape parameter
$\alpha$ which has moments of order less than $\alpha$, the Cauchy distributions, which has moments of order
less than $1$; many more examples can be found on the Wikipedia page for heavy-tailed distributions.
An abundant (and important in applications) supply of heavy-tailed distributions comes from stable distributions;
see, e.g., \cite{nolan:2015}. There is also some theoretical work on learning mixtures of heavy-tailed 
distributions, e.g., \cite{DasguptaHKS05, ChaudhuriR08a}. 

In several applied ICA models with heavy tails it is reasonable to assume that the distributions have finite first moment.
In applications to finance (e.g., \cite{Chen2007594}), heavy tailed distributions are commonly used to model catastrophic but somewhat unlikely scenarios. A standard measures of risk in that literature, the so called conditional value at risk \cite{RockafellarUryasev}, is only finite when the first moment is finite. Therefore, it is reasonable to assume for some of our results that the distributions have finite first moment.

%\anonnote{need to find affine equivariant truncation that plays well with independence.}
\iffalse
\begin{theorem}
Let $X = AS$ where $A \in \RR^{\dim \times \dim}$ and $S \in \RR^\dim$ is a random vectors whose coordinates are mutually independent.
Mention explicitly that $S$ may be heavy-tailed?
Explicit non-Gaussianity?
\lnote{No: assumptions like non-gaussianity will come from section reweighting. symmetry missing}
Then Algorithm X can learn the columns of $A$ up to an additive $\eps$ error with probability $1-\delta$ given $poly(n)$ samples of $X$.
\end{theorem}
\fi

\subsection{Our results}
Our main result is an efficient algorithm that can recover the independent components when each $S_i$ has $1+\gamma$ moments for $\gamma$ a positive constant. The following theorem states more precisely the guarantees of 
our algorithm. The theorem below refers to the algorithm \emph{Fourier PCA} \cite{} which solves ICA under the 
fourth moment assumption. The main reason to use this algorithm is that finite sample guarantees have been proved
for it; we could have plugged in any other algorithm with such guarantee. The theorem below also refers to 
\emph{Gaussian damping}, which is an algorithmic technique we introduce in this paper and will be explained 
shortly. \nnote{We will assume, for 
simplicity, that our probability distributions have density, although we believe this assumption is not essential.}
%\begin{theorem}\label{thm:putting_together}
\begin{restatable}[Heavy-tailed ICA]{theorem}{main}\label{thm:putting_together}
Let $X=AS$ be an ICA model such that the distribution of $S$ is absolutely continuous, for all $i$ we have $\e (\abs{S_i}^{1+\gamma}) \leq M < \infty$ and normalized so that $\e \abs{S_i} = 1$, and the columns of $A$ have unit norm. 
Let $\Delta > 0$ be such that for each $i \in [n]$ if $S_i$ has finite fourth moment then its fourth cumulant satisfies 
$\abs{\cum_4(S_i)} \geq \Delta$.
Then, given $0<\eps \leq \dim^2$, $\delta > 0$, $s_M \geq \sigma_{\max}(A)$, $s_m \leq \sigma_{\min}(A)$, Algorithm~\ref{alg:orthogonalization_uniform} combined with Gaussian damping and Fourier PCA\lnote{add formal algorithm environment} outputs
$b_1, \ldots, b_n \in \R^n$ such that there are signs $\alpha_i \in \{-1,1\}$ and a permutation
$\pi:[n] \to [n]$ satisfying
\(
    \norm{A_i - \alpha_i b_{\pi(i)}} \le  \epsilon,
\)
with $\poly_\gamma(n, M, 1/s_m, s_M, 1/\Delta, R, 1/R,1/\epsilon, 1/\delta)$ time and sample complexity and 
with probability at least
$1-\delta$. Here $R$ is a parameter of the distributions of the $S_i$ as described below. The degree of the 
polynomial is $O(1/\gamma)$. 
%\end{theorem}
\end{restatable}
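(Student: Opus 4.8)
The plan is to run a three-stage pipeline and track error and sample complexity through each stage. \emph{Stage 0 (symmetrization).} First I would reduce to the case where the distribution of $S$ is symmetric: replacing $X$ by $X-X'$ for an independent copy $X'$ replaces $S$ by $S-S'$, which is symmetric, still has mutually independent coordinates, and whose coordinates still satisfy a finite $(1+\gamma)$-moment bound of the same polynomial order in $M$; non-Gaussianity and the fourth-cumulant lower bound are preserved up to constants. This costs only constant factors in all parameters, so from now on assume $S$ symmetric.

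\emph{Stage 1 (orthogonalization via the centroid body).} Since each $S_i$ has a finite first moment, $\theta\mapsto \e\abs{\inner{\theta}{X}}$ is finite and (a scaling of) the support function of a symmetric convex body $\centroid X$, the centroid body of $X$, and it satisfies $\centroid X = A\,\centroid S$. Because $S$ has independent symmetric coordinates, $\centroid S$ is unconditional, hence the covariance matrix of the uniform distribution on $\centroid S$ is diagonal; consequently the covariance $\Sigma$ of the uniform distribution on $\centroid X$ has the ``whitenable'' form $\Sigma = A D A^T$ with $D$ diagonal and positive. Algorithm~\ref{alg:orthogonalization_uniform} estimates the support function of $\centroid X$ from empirical first absolute moments (which concentrate because a $(1+\gamma)$-moment bound suffices for weak laws of large numbers), uses these estimates as a weak membership oracle to sample approximately uniformly from $\centroid X$, estimates $\Sigma$, and outputs $B$ close to $\Sigma^{-1/2}$. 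Then $Y:=BX = (BAD^{1/2})(D^{-1/2}S)$, where $BAD^{1/2}$ is within some $\eps_1$ of an orthogonal matrix $U$ and $D^{-1/2}S$ is again an independent (heavy-tailed) vector. I would invoke the finite-sample guarantee for this step to get $\eps_1$ as small as needed with $\poly(n,M,1/s_m,s_M,R,1/R,1/\eps_1,1/\delta)$ samples and to bound $\norm{B},\norm{B^{-1}}$ by $\poly(s_M,1/s_m,R,n)$; here the two-sided parameter $R$ governs how well-conditioned $\centroid X$ is and enters these bounds.

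\emph{Stage 2 (Gaussian damping).} With $Y\approx US''$ and $U$ orthogonal, I would reweight the law of $Y$ by $e^{-\norm{y}^2/(2\rho^2)}$, implemented by rejection sampling: keep $y$ with probability $e^{-\norm{y}^2/(2\rho^2)}$. Since $U$ is orthogonal, $\norm{y}^2=\sum_i (S''_i)^2$, so the damping factor factorizes over coordinates and the damped distribution is again an ICA model $U S^{(\rho)}$ where the law of $S^{(\rho)}_i$ is proportional to (the law of $S''_i$)$\cdot e^{-t^2/(2\rho^2)}$. The Gaussian factor makes all moments of $S^{(\rho)}_i$ finite, and I would choose $\rho=\rho(\gamma,M,R,n)$ so that simultaneously: (i) the acceptance probability $\e[e^{-\norm{Y}^2/(2\rho^2)}]\ge 1/\poly$; (ii) $\e (S^{(\rho)}_i)^4 \le \poly$; and (iii) $\abs{\cum_4(S^{(\rho)}_i)}\ge 1/\poly$, using the hypothesis $\abs{\cum_4(S_i)}\ge\Delta$ plus a perturbation estimate for components with finite fourth moment, and a separate lemma that damping a distribution with infinite fourth moment leaves the fourth cumulant bounded away from $0$ for the genuinely heavy-tailed components. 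Bounding $\e (S^{(\rho)}_i)^4$ against only a $(1+\gamma)$-moment bound forces $\rho$ to scale with $\gamma$ in a way that, traded off against the acceptance probability, produces the final polynomial degree $O(1/\gamma)$. I would also absorb the fact that $BAD^{1/2}$ is only $\eps_1$-close to $U$ by taking $\eps_1$ inverse-polynomially small, so that the damped law is within $1/\poly$ (in the relevant parameter distance) of a genuine product distribution, which Fourier PCA tolerates.

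\emph{Stage 3 (Fourier PCA and composition).} Applying Fourier PCA to the accepted (damped) samples, its finite-sample guarantee recovers the orthogonal mixing $U$ up to a signed permutation with target accuracy $\eps_2$ using $\poly(n,1/\eps_2,1/\delta)$ samples and the bounds from Stage 2 on the fourth moment and fourth cumulant. Composing, the estimate of the columns of the original $A$ is obtained by applying $B^{-1}$ to the recovered directions and renormalizing to unit length; propagating $\eps_1,\eps_2$ through $B^{-1}$ (with the norm bound from Stage 1) and setting $\eps_1,\eps_2=\eps/\poly$ yields $\norm{A_i-\alpha_i b_{\pi(i)}}\le\eps$ for suitable signs $\alpha_i$ and permutation $\pi$, with overall sample and time complexity the product of the per-stage costs, namely $\poly_\gamma(n,M,1/s_m,s_M,1/\Delta,R,1/R,1/\eps,1/\delta)$ of degree $O(1/\gamma)$, and success probability $1-\delta$ after a union bound. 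I expect the main obstacle to be Stage 2: choosing $\rho$ so that an inverse-polynomial acceptance probability and an inverse-polynomial lower bound on the damped fourth cumulant hold \emph{simultaneously} from only a $(1+\gamma)$-moment assumption, and making this robust to the approximate orthogonality produced in Stage 1.
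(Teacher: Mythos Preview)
Your high-level pipeline (symmetrize $\to$ orthogonalize via the centroid body $\to$ Gaussian damping $\to$ Fourier PCA) is exactly the paper's approach, and your description of Stages~0 and~1 is accurate. However, two misattributions and one genuine technical gap deserve correction.

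\textbf{Misattributions.} First, the $O(1/\gamma)$ degree of the final polynomial comes entirely from Stage~1, not Stage~2: it arises from the Chebyshev-type estimate used to approximate the support function $h_{\Gamma X}(\theta)=\e\abs{\inner{X}{\theta}}$ from samples when only a $(1+\gamma)$-moment is available. Gaussian damping requires no moment assumptions whatsoever; the damped fourth moment is bounded by $R^4/K_{X_R}$ independently of $\gamma$. Second, the parameter $R$ is not a conditioning parameter for $\Gamma X$; it is the damping radius in Stage~2, chosen so that $K_{X_R}\geq 1/2$ and $\abs{\cum_4(S_{i,R})}\geq\Omega(\Delta)$ for every $i$. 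The roundness of $\Gamma X$ is controlled instead by $s_m,s_M$ and $n$.

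\textbf{The real technical issue.} Your remark that after Stage~1 ``the damped law is within $1/\poly$ of a genuine product distribution, which Fourier PCA tolerates'' hides the step where the paper does the most work. When $BA$ is only approximately orthogonal, the damped variable $\hat X_R$ is \emph{not} an ICA model at all, and it is not close to $X_R$ in any standard distributional distance (total variation, Wasserstein, etc.) that Fourier PCA is known to be robust to. What the paper shows is that the \emph{Hessian of the second characteristic function} $\Psi_Z(u)=D^2\log\phi_Z(u)$ satisfies $\norm{\Psi_{\hat X_R}(u)-\Psi_{X_R}(u)}_F\leq O(n^4 R^4 \eps_1^{1/2})$ for $\norm{u}\leq 1$, via explicit estimates on $\phi$, $D\phi$, $D^2\phi$ under the perturbation $E\to\hat E$. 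This requires stating and proving a \emph{robust} version of the Fourier PCA theorem in which the input is only assumed to have $\Psi$ close to that of a genuine ICA model. You correctly flag this as ``the main obstacle,'' but you should be aware that the resolution is specifically through $\Psi$-stability, not through any generic distributional closeness.
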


(The assumption that $S$ has an absolutely continuous distribution is mostly for convenience in the analysis of Gaussian damping and not essential. In particular, it is not used in Algorithm~\ref{alg:orthogonalization_uniform})

Intuitively, $R$ in the theorem statement above measures how large a ball we need to restrict the 
distribution to, which has at least a constant (actually $1/\poly(n)$ suffices) probability mass 
and, moreover, each $S_i$ when restricted to the interval $[-R, R]$ has fourth cumulant at least 
$\Omega(\Delta)$. We show that all sufficiently large $R$ satisfy the above conditions and we can efficiently
compute such an $R$; see the discussion 
after Theorem~\ref{thm:ICA-orthogonal-damping} (the restatement in Sec.~\ref{sec:gaussian_damping}). 
For standard heavy-tailed distributions, such as the Pareto distribution, 
$R$ behaves nicely. For example, consider the 
Pareto distribution with shape parameter $= 2$ and scale parameter $=1$, i.e. the distribution with density 
$2/t^3$ for $t \geq 1$ and $0$ otherwise. For this distribution it's easily seen that 
$R = \Omega(\Delta^{1/2})$ suffices for the cumulant condition to be satisfied.

Theorem~\ref{thm:putting_together} requires that the $(1+\gamma)$-moment of the components $S_i$ be finite.
However, if the matrix $A$ in the ICA model is unitary (i.e. $A^TA = I$, or in other words, $A$ is a rotation matrix) then we do not need any moment assumptions:

\begin{restatable}{theorem}{gaussiandamping}\label{thm:ICA-orthogonal-damping}
Let $X=AS$ be an ICA model such that $A \in \R^{n\times n}$ is unitary (i.e., $A^TA = I$) and 
the distribution of $S$ is absolutely continuous.
Let $\Delta > 0$ be such that for each $i \in [n]$ if $S_i$ has finite fourth moment then $\abs{\cum_4(S_i)} \geq \Delta$.
Then, given $\eps, \delta > 0$,
Gaussian damping combined with Fourier PCA outputs
$b_1, \ldots, b_n \in \R^n$ such that there are signs $\alpha_i \in \{-1,1\}$ and a permutation
$\pi:[n] \to [n]$ satisfying \(
% \begin{align} 
    \norm{A_i - \alpha_i b_{\pi(i)}} \le  \epsilon,
% \end{align}
\)
in $\poly(n, R, 1/\Delta, 1/\epsilon, 1/\delta)$ time and sample complexity and 
with probability at least
$1-\delta$. Here $R$ is a parameter of the distributions of the $S_i$ as described above. 
\iffalse
------------------------------------
Given an ICA model $X = AS$\lnote{with density?} where $A$ is a unitary matrix (i.e., $A^TA = I$) and we make no assumptions about the 
existence of $\e \abs{S_i}^{r}$ for all $i,r$. \lnote{language here needs to be more theorem-like} 
%moments of any order of the components of $S$.
Let $R>0$ be such that for each $i$ the random variable $S_{i,R}$ satisfies 
$\abs{\cum_{4}(S_{i,R})} \ge \Delta$. 
Then for $\epsilon > 0$, Gaussian damping combined
with Fourier PCA will recover vectors $\{b_1, \ldots, b_n\}$
  such that there exist signs $\alpha_i = \pm 1$ satisfying
  \begin{align*}
    \norm{A_i - \alpha_i b_i} \le \epsilon
  \end{align*}
  using $\poly(n, R, 1/K_{X_R}, 1/\Delta, 1/\epsilon, 1/\delta)$ samples %% $(nM_4/\Delta)^{C_2}/\epsilon^2$ 
  with probability $1-\delta$. The running time of the
algorithm is also of the same form. 
%% $(nM_4/\Delta)^{C_4}/\epsilon^2$. Here $C_1, C_2, C_3, C_4$ are positive constants, $C_3$ can
%% be chosen as large as possible and that will in turn affect $C_1, C_2, C_4$.
\fi
\end{restatable}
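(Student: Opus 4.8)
The plan is to use \emph{Gaussian damping} to reduce the given, possibly moment-free, ICA instance to one with the \emph{same} unitary mixing matrix $A$ whose independent components have all moments finite, and then to invoke Fourier PCA as a black box. Concretely, for the parameter $R$ define the damped random vector $X_R$ to be $X$ reweighted by the factor $e^{-\norm{x}^2/R^2}$: if $X$ has density $\rho$, then $X_R$ has density proportional to $\rho(x)\, e^{-\norm{x}^2/R^2}$. The key observation, and the only place the unitarity of $A$ is used, is that damping commutes with a unitary change of coordinates: since $\norm{As}^2 = \norm{s}^2 = \sum_i s_i^2$ and $\abs{\det A}=1$, writing $x = As$ shows that $X_R \deq A S_R$, where $S_R$ has independent coordinates with densities proportional to $\rho_{S_i}(t)\, e^{-t^2/R^2}$. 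Thus $X_R$ is again an ICA model with the same matrix $A$. (This is exactly what fails for a general $A$: reweighting by $e^{-\norm{x}^2/R^2}$ no longer factors over the original components, and restricting $X$ to a ball fails for the same reason — hence the unitary hypothesis here.)

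Next I would check that the damped sources are well behaved. Each $S_{i,R}$ has a density that decays like a Gaussian, so $\e\abs{S_{i,R}}^k < \infty$ for all $k$, with moments bounded by $\poly(R)$ (comparable to the corresponding Gaussian moments). By the defining property of $R$ (as in the discussion following the statement), each $S_{i,R}$ satisfies $\abs{\cum_4(S_{i,R})} = \Omega(\Delta)$; in particular $S_{i,R}$ is non-Gaussian. Hence $X_R = A S_R$ meets the hypotheses under which Fourier PCA has finite-sample guarantees: applied to $N = \poly(n, R, 1/\Delta, 1/\epsilon, 1/\delta)$ samples of $X_R$ it outputs $b_1,\ldots,b_n$ matching the columns of $A$ up to signs and a permutation within $\epsilon$, with probability at least $1-\delta/2$.

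It remains to produce samples of $X_R$ from samples of $X$. Since $e^{-\norm{x}^2/R^2}\le 1$, I would use rejection sampling: draw $x \sim X$ and accept it with probability $e^{-\norm{x}^2/R^2}$; accepted points are distributed exactly as $X_R$. The acceptance probability is $p = \e[e^{-\norm{X}^2/R^2}] \ge e^{-1}\,\pr[\norm{X}\le R] \ge 1/\poly(n)$ by the mass condition built into the choice of $R$. So $O(N\,\poly(n))$ samples of $X$ suffice to obtain $N$ accepted samples except with probability $\delta/2$ (Chernoff), and the running time is of the same order. A union bound over the two failure events gives total time and sample complexity $\poly(n, R, 1/\Delta, 1/\epsilon, 1/\delta)$, as claimed.

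The main difficulty is not the reduction itself, which is short, but the two supporting facts that make it go through: (i) that damping genuinely preserves the product structure with the original $A$ — conceptually the crux, and the reason the theorem is restricted to unitary $A$; and (ii) controlling $\cum_4(S_{i,R})$ — one must argue both that a valid $R$ exists and is efficiently computable (for heavy-tailed $S_i$ the damped fourth cumulant must be shown to stay bounded away from $0$, indeed to grow, as $R\to\infty$, while for $S_i$ with finite fourth moment it converges to $\cum_4(S_i)$) and that the moment-dependent part of Fourier PCA's sample complexity is genuinely polynomial in $R$ once expressed through the Gaussian-like moment bounds on the $S_{i,R}$.
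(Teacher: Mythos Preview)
Your proposal is correct and follows essentially the same approach as the paper: define $X_R$ via Gaussian reweighting, use unitarity of $A$ to show $X_R = A S_R$ with independent damped components, bound the fourth moments of the $S_{i,R}$ by $\poly(R)$ (the paper gets $\E[S_{i,R}^4] \le R^4/K_{X_R}$), generate samples of $X_R$ by rejection sampling with acceptance probability $K_{X_R}$, and invoke the Fourier PCA guarantee from \cite{GVX}. Your identification of the two supporting facts---product-structure preservation and the cumulant lower bound for $S_{i,R}$ (handled in the paper by a separate theorem showing $\cum_4(S_{i,R})\to\infty$ when $\E S_i^4=\infty$)---matches the paper's emphasis as well.
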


\myparagraph{Idea of the algorithm.} Like many ICA algorithms, our algorithm has two phases: first orthogonalize the independent components (reduce to the pure rotation case), and then determine the rotation.
In our heavy-tailed setting, each of these phases requires a novel approach and analysis in the heavy-tailed setting.

A standard orthogonalization algorithm is to put $X$ in isotropic position using the covariance matrix 
$\cov(X) := \E(XX^T)$. 
This approach requires finite second moment of $X$, which, in our setting, is not necessarily finite.
Our orthogonalization algorithm (Section \ref{sec:orthogonalization_uniform}) only needs finite 
$(1+\gamma)$-absolute moment and that each $S_i$ is symmetrically distributed. 
(The symmetry condition is not needed for our ICA algorithm, as one can reduce the general case to the symmetric case, see Section \ref{sec:symmetrization}). In order to understand the first absolute moment, it is helpful to look at certain convex bodies induced by the first and second moment. The directional second moment 
$\E_X \bigl((u^TX)^2\bigr)$ is a quadratic form in $u$ and its square root is the support function of a convex body, Legendre's inertia ellipsoid, up to some scaling factor (see \cite{MilmanPajor} for example). 
Similarly, one can show that the directional absolute first moment is the support function of a convex body, the centroid body. 
When the signals $S_i$ are symmetrically distributed, the centroid body of $X$ inherits these symmetries making it absolutely symmetric (see Section \ref{sec:preliminaries} for definitions) up to an affine transformation. 
In this case, a linear transformation that puts the centroid body in isotropic position also orthogonalizes the independent components (Lemma \ref{lemma:uniform-orthogonalizer}).
In summary, the orthogonalization algorithm is the following: find a linear transformation that puts the centroid body of $X$ in isotropic position. One such matrix is given by the inverse of the square root of the covariance matrix of the uniform distribution in the centroid body. Then apply that transformation to $X$ to orthogonalize the independent components.

\details{Moreover, we do not need to use the uniform distribution in the centroid body for this purpose. 
We can use any distribution that has the same symmetries restricted to the centroid body. In particular, we could use $X$ itself restricted to the centroid body. 
There is one important caveat: the mass of $X$ inside its centroid body could be very small. 
We fix this issue by scaling up the centroid body appropriately. In summary, the orthogonalization algorithm is the following: find a linear transformation that puts $X$ restricted to a scaling of the centroid body in isotropic position. Then apply that transformation to $X$ to orthogonalize the independent components.}

We now discuss how to determine the rotation (the second phase of our algorithm). The main idea is to reduce heavy-tailed case to a case where all moments exist and to use an existing ICA algorithm (from \cite{GVX} in our case) to handle the resulting ICA instance. We use \emph{Gaussian damping} to achieve such a reduction. 
By Gaussian damping we mean to multiply the density of the orthogonalized ICA model by a spherical Gaussian density. 

We elaborate now on our contributions that make the algorithm possible.

\myparagraph{Centroid body and orthogonalization.}
The centroid body of a compact set was first defined in \cite{petty1961}. It is defined as the convex set whose support function equals the directional absolute first moment of the given compact set. We generalize the notion of centroid body to any probability measure having finite first moment (see Section \ref{sec:preliminaries} for the background on convexity and Section \ref{sec:centroidbody} for our formal definition of the centroid body for probability measures). 
\nnote{The following text can be included (I started writing it but didn't finish) For putting the centroid body 
in isotropic position we need uniformly random samples from the centroid body. There are well-known algorithms to 
do this if we have membership access to the centroid body, that is to say, there
is an efficient algorithm that given a point answers whether the point is in the body. We give such an algorithm using the ellipsoid algorithm and results from \cite{GLS}.}
In order to put the centroid body in approximate isotropic position, we estimate its covariance matrix. For this, we use uniformly random samples from the centroid body.
There are known methods to generate approximately random points from a convex body given by a membership oracle. We implement an efficient membership oracle for the centroid body of a probability measure with $1+\gamma$ moments. The implementation works by first implementing a membership oracle for the polar of the centroid body via sampling and then using it via the ellipsoid method (see \cite{GLS}) to construct a membership oracle for the centroid body.
\details{Is there an argument without polarity? Would need uniform convergence.}%
As far as we know this is the first use of the centroid body as an algorithmic tool.
%As far as we know this is the first use of the centroid body as an analog of the covariance matrix for statistical estimation.

An alternative approach to orthogonalization in ICA one might consider is to use the empirical covariance matrix of $X$ even when the distribution is heavy-tailed. A specific problem with this approach is that when the second moment does not exist, the diagonal entries would be very different and grow without bound. This problem gets worse when one collects more samples. This wide range of diagonal values makes the second phase of an ICA algorithm very unstable.

\myparagraph{Linear equivariance and high symmetry.} 
A fundamental property of the centroid body, for our analysis, is that the centroid body is \emph{linearly equivariant}, that is, if one applies an invertible linear transformation to a probability measure then the corresponding centroid body transforms in the same way (already observed in \cite{petty1961}). In a sense that we make precise (Lemma \ref{lem:orthogonalizer}), high symmetry and linear equivariance of an object defined from a given probability measure are sufficient conditions to construct from such object a matrix that orthogonalizes the independent components of a given ICA model. This is another way to see the connection between the centroid body and Legendre's ellipsoid of inertia for our purposes: Legendre's ellipsoid of inertia of a distribution is linearly equivariant and has the required symmetries.

\myparagraph{Gaussian damping.} 
Here we confine ourselves to the special case of ICA when the ICA matrix is unitary, that is $A^TA = I$.  
A natural idea to deal with heavy-tailed distributions is to truncate the 
distribution in far away regions and hope that the truncated distribution still gives us a way to extract 
information. In our setting, this could mean, for example, that we consider the random variable obtained from $X$ 
conditioned on the even that $X$ lies in the ball of radius $R$ centered at the origin. Instead of the ball we 
could restrict to other sets. Unfortunately, in general the resulting random variable does not come from an 
ICA model (i.e., does not have independent components in any basis). Nevertheless one may still be able to use this 
random variable for recovering $A$. We do not know how to get an algorithmic handle on it even in the case of 
unitary $A$. Intuitively, 
restricting to a set breaks the product structure of the distribution that is crucial for recovering the 
independent components. 

We give a novel technique to solve heavy-tailed ICA for unitary $A$. No moment assumptions on the components
are needed for our technique.
We call this technique \emph{Gaussian damping}. Gaussian damping can also
be thought of as restriction, but instead of being restriction to a set it is a ``restriction to a
spherical Gaussian distribution.'' Let us explain. Suppose we have a distribution on $\R^n$ with density 
$\rho_X(\cdot)$. If we restrict this distribution
to a set $A$ (which we assume to be nice: full-dimensional and without any measure theoretic issues) then the
density of the restricted distribution is $0$ outside $A$ and is proportional to $\rho_X(x)$ for $x \in A$. One
can also think of the density of the restriction as being proportional to the product of $\rho_X(x)$ and the
density of the uniform distribution on $A$. 
In the same vein, by restriction to the Gaussian distribution with density proportional to 
$e^{-\norm{x}^2/R^2}$ we simply mean the 
distribution with density proportional to $\rho_X(x) \, e^{-\norm{x}^2/R^2}$. In other words, the density of the 
restriction is obtained by multiplying the two densities. By Gaussian damping of a distribution we mean 
the distribution obtained by this operation.

Gaussian damping provides a tool to solve the ICA problem for unitary $A$ by virtue of the following properties: 
(1) \emph{The damped distribution has finite moments of all orders.} 
This is an easy consequence of the fact that Gaussian density decreases super-polynomially. More precisely, 
one dimensional moment of order $d$ given by the integral $\int_{t \in \R} t^d \rho(t) e^{-t^2/R^2} \, dt$ is 
finite for all $d \geq 0$ for any distribution. 
(2) \emph{Gaussian damping retains the product structure.} Here we use the property of spherical Gaussians 
that it's the (unique) class of spherically symmetric distributions with independent components,
i.e., the density factors: $e^{-\norm{x}^2/R^2} = e^{-x_1^2/R^2}\dotsm e^{-x_n^2/R^2}$ (we are hiding a normalizing
constant factor). Hence the damped density also factors when expressed in terms of the components of 
$s = A^{-1}x$ (again ignoring normalizing constant factors):
\begin{align*}
\rho_X(x) e^{-\norm{x}^2/R^2} 
= \rho_S(s) e^{-\norm{s}^2/R^2} = \rho_{S_1}(s_1) e^{-x_1^2/R^2} \dotsm \rho_{S_n}(S_n) e^{-x_n^2/R^2}.
\end{align*}
Thus we have converted our heavy-tailed ICA model $X=AS$ into another ICA model $X_R = A S_R$ where $X_R$ 
and $S_R$ are obtained by Gaussian damping of $X$ and $S$, resp. To this new model we can apply the existing
ICA algorithms which require at least the fourth moment to exist. This allows us to estimate matrix $A$ (up
to signs and permutations of the columns). 

It remains to explain how we get access to the damped random variable $X_R$. This is done by a simple rejection
sampling procedure. Damping does not come free and one has to pay for it in terms of higher sample 
and computational complexity, but this increase in complexity is mild in the sense that the dependence on 
various parameters of the problem is still of similar nature as for the non-heavy-tailed case. 
A new parameter $R$ is introduced here which 
parameterizes the Gaussian distribution used for damping. 
We explained the intuitive meaning of $R$ after the statement of Theorem~\ref{thm:putting_together}
and that it's a well-behaved quantity for standard distributions. 

%% In particular, $R$ needs to satisfy the condition that for any component distribution without finite fourth moment, 
%% denote its density by $\rho(\cdot)$, the fourth cumulant of the damped density, which is proportional to $\rho(t) e^{-t^2/R^2}$ goes
%% to infinity as $R$ goes to infinity. This turns out to be surprisingly subtle. 

\myparagraph{Gaussian damping as contrast function.}
Another way to view Gaussian damping is in terms of \emph{contrast functions}, a general idea that in particular
has been used fruitfully in the ICA literature. Briefly, given a function $f: \R \to \R$, for $u$ on the unit 
sphere in $\R^n$, we compute $g(u) := \E f(u^T X)$. Now the properties of the function $g(u)$ as $u$ varies 
over the unit sphere, such as its local extrema, can help us infer properties of the underlying distribution.
In particular, one can solve the ICA problem for appropriately chosen \emph{contrast function} $f$. In ICA, 
algorithms with provable guarantees use contrast functions such as moments or cumulants (e.g., \cite{dl95, FJK}). 
Many other contrast functions are also used. 
\nnote{References? This could be a good place to refer to the Belkin et al. papers.}
Gaussian damping furnishes a novel class of contrast functions that also leads to provable guarantees. 
E.g., the function given by $f(t) = t^4 e^{-t^2/R^2}$ is in this class. We do not use the contrast function view 
in this paper.

\myparagraph{Previous work related to damping.} To our knowledge the damping technique, and more generally
the idea of reweighting the data, is new in the context of ICA. But the general idea of reweighting is not new 
in other somewhat related contexts as we now discuss.
In robust statistics (see, e.g.,~\cite{Huber}), reweighting idea is used for outlier removal by giving less 
weight to far away data points. Apart from this high-level similarity we are not aware of any closer 
connections to our setting; in particular, the weights used are generally different. 

Another related work is \cite{BrubakerVempala}, on isotropic PCA, affine invariant clustering, and learning mixtures of Gaussians. This work uses Gaussian reweighting. However, again we are unaware of any more specific connection to our problem. 

Finally, in \cite{Yeredor, GVX} a different reweighting, using a ``Fourier weight'' $e^{i u^T x}$ (here $u \in \R^n$
is a fixed vector and $x \in \R^n$ is a data point) is used in the computation of the covariance matrix. This 
covariance matrix is useful for solving the ICA problem. But as discussed before, results here do not seem to be
amenable to our heavy-tailed setting.

\nnote{Discussion of how to get a handle on the centroid body: dual body and ellipsoid.
Discussion of floating body versus centroid.
Discussion of why dual characterization of cvar is not used.}

\myparagraph{Organization.} After preliminaries in the next section, in Sec.~\ref{sec:centroidbody} we 
define the centroid body and prove some useful properties. In Sec.~\ref{subsec:membership-centroid-body}
we show how to construct the membership oracle for the centroid body of a distribution produced by 
a symmetric ICA model. In Sec.~\ref{sec:orthogonalization_uniform} we use this membership oracle to compute
the covariance matrix of the uniform distribution on the centroid body and using this matrix we orthogonalize
the independent components. Finally, in Sec.~\ref{sec:symmetrization} we show why working with symmetric ICA
model is without loss of generality. 
\ifshort{Because of the space constraints, we omit all the proofs in the
following sections. Gaussian damping, which was discussed in some detail in the introduction, has to be omitted
from this extended abstract; also omitted is the composition of orthogonalization with Gaussian damping to 
prove our main theorem. Our choice of sections to include in this extended abstract does not reflect the 
relative importance of the ideas in these sections and was rather dictated by space constraints.}\fi

\section{Preliminaries}\label{sec:preliminaries}
This section contains some basic notation, definitions, and results from previous work.
% \nnote{the previous lines will change after rearrangement of material. Preliminaries is normally only for setting up
% notation, definitions, and theorems from previous work. Original results should go to a different section(s).}
% \nnote{The notation for inner product has not been defined.}

We will denote random variables by capital letters, e.g. $X, S$, and the values they might take by
corresponding lower case letters, e.g., $x, s$. For a random variable $X$, let $\measure_X$ denote the probability measure induced by $X$. 
We take all vectors to be column vectors, and for a vector $x$, by $\norm{x}$ we mean $\norm{x}_2$.
For two vectors $x, y \in \RR^{\dim}$ we let $\inner{x}{y} = x^T y$ denote their inner product. 
For $n \in \mathbb{N}$, let $[n]$ denote the set of integers $1, \dots, n$. The symbol
 $B_{p}^{\dim}$ stands for
the $\dim$-dimensional unit $\ell_p$-ball and $S^{n-1}$ for the $\ell_2$-unit sphere in $\R^{\dim}$.

\An\ $\dim$-dimensional \emph{convex body} is a compact convex subset of $\RR^\dim$ with non-empty interior.
We say a convex body $K \subseteq \RR^\dim$ is \emph{absolutely symmetric} if $(x_1, \dots, x_{\dim}) \in K \Leftrightarrow (\pm x_1, \dots, \pm x_{n}) \in K$.
Similarly, we say random variable $X$ (and its distribution) is absolutely symmetric if, for any choice of signs $\alpha_i \in \{-1, 1\}$, $(x_1, \dots, x_n)$ has the same distribution as $(\alpha_1 x_1, \dots, \alpha_n x_n)$.
We say that \an\ $\dim$-dimensional random vector $X$ is \emph{symmetric} if %$X \deq -X$.
$X$ has the same distribution as $-X$.
Note that if $X$ is symmetric with independent components (mutually independent coordinates) then its components are also symmetric.
% \lnote{Still wrong. fix}
% \nnote{later we also talk about absolutely symmetric r.v. and distribution, so that also needs to be defined; probably the two definitions can be combined}
%Given $K \subseteq \RR^\dim$, the polar set of $K$ is denoted $K^\polar$ and defined by $K^\polar = \{ y \in \RR^\dim \suchthat (\forall x \in K) \inner{x}{y} \leq 1 \}$.
% \nnote{To define: support and radial functions of convex bodies; absolutely symmetric convex body }

%\begin{definition}\label{def:polar}
Let $K \subseteq \RR^d$ be a non-empty set. 
%\lnote{restrictions may be needed here, see comment in source}
% be a convex body containing the origin \lnote{in its interior?}.
The set
\(
K^\polar := \{x \in \RR^\dim \suchthat \langle x, y \rangle \leq 1 \; \forall y \in K\}
\)
is called the \emph{polar} of $K$. 
%\lnote{the following sentence is wrong, but I don't know if it is needed. If it is needed, it needs to be corrected and brought back:If $K$ is convex with the origin in its interior then the same is true for $K^\polar$.} 
%\nnote{add reference to Schneider}
%\end{definition}
The \emph{support function} of $K$ is $h_{K}: \RR^\dim \to \RR$ and defined by
\( h_{K}(\theta) = \sup_{x \in K}\, \inner{x}{\theta}. \)
The \emph{radial function} of $K$ is $r_{K}: \RR^{\dim}\setminus \{0\} \to \RR$ and defined by
\(
r_{K}(\theta) = \sup \, \{\alpha \in \RR \suchthat \alpha \theta \in K \}.
\)
%% \begin{definition}
%% Let $K \subseteq \RR^\dim$ be a convex body.
%% \begin{enumerate}
%% \item{The \emph{support function} of $K$ is $h_{K}: \RR^\dim \to \RR$ and defined as \[ h_{K}(\theta) = \sup_{x \in K} \inner{x}{\theta}. \]}
%% \item{The \emph{radial function} of $K$ is $r_{K}: \RR^{\dim}\setminus \{0\} \to \RR$ and defined as
%% \[
%% r_{K}(\theta) = \sup\{\alpha \in \RR \suchthat \alpha \theta \in K \}.
%% \]}
%% \end{enumerate}
%% \end{definition}
%% Let $X \in \RR$ be a random variable.
%% We will denote its characteristic function by $\phi_{X}: \RR \to \mathbb{C}$ defined as $\phi_{X}(t) = \e (e^{itX})$.
%% We then define the \emph{second characteristic function} of $X$ by $\psi_{X}= \log \phi_{X}$, where we use the complex logarithm function so that $\phi_{X}(0) = 0$.
%% A useful tool defined from the second characteristic function will be the \emph{cumulants} of the random variable $X$.
%% In the case where all moments of $X$ exist (i.e. $\e (\abs{X}^\gamma) < \infty$ for all $\gamma$), the second characteristic function can be written as
%% \[
%% \psi_{X}(u) = \sum_{k = 1}^{\infty} \cum_{i}(X) \frac{(iu)^j}{j!}
%% \]
%% where, in the above Taylor expansion, the coefficient $\cum_{i}(X)$ is the $i$th cumulant of $X$.
%% % In the case of a random vector, $X \in \RR^\dim$, we use the natural generalization of the characteristic function to be $\phi_{X}: \RR^\dim \to \mathbb{C}$ defined as $\phi_{X}(u) = \e e^{iu^T X}$.
For a random variable $X \in \R$, let $m_i(X) = \E X^i$ be its $i$th moment.
For each positive integer $i$ the cumulant of order $i$ of r.v. $X$, denoted $\cum_i(X)$, is a polynomial involving the moments of $X$. 
The \emph{fourth cumulant} of $X$ is equal to 
$m_4(X)-4m_3(X)m_1(X)-3m_2(X)^2+12m_2(X)m_1(X)^2 - 6m_1(X)^4$. When $X$ is symmetric, this simplifies 
to $m_4(X) - 3m_2(X)^2$. Cumulants can be thought of as a higher degree generalization of variance, and they have 
some nice properties that make them useful in ICA, e.g, if $X$ and $Y$ are independent random 
variables then 
$\cum_4(X+Y) = \cum_4(X) + \cum_4(Y)$.  Another useful property is that if $X$ is a Gaussian random variable then
$\cum_4(X) = 0$. For this reason, the absolute value of the fourth cumulant is often used to quantify the distance
of the distribution of a random variable from the set of Gaussian distributions. 
%% Let $X \in \RR$ be a random variable and denote its characteristic function by $\phi_{X}: \RR \to \mathbb{C}$ defined as $\phi_{X}(t) = \e e^{itX}$.
%% We then define the \emph{second characteristic function} of $X$ by $\psi_{X}= \log \phi_{X}$, where we use the complex logarithm function so that $\phi_{X}(0) = 0$.
%% A useful tool defined from the second characteristic function will be the \emph{cumulants} of the random variable $X$.
%% In the case where all moments of $X$ exist (i.e. $\e X^n < \infty$ for all $n$), the second characteristic function can be written as
%% \[
%% \psi_{X}(u) = \sum_{k = 1}^{\infty} \cum_{i}(X) \frac{(iu)^j}{j!}
%% \]
%% where, in the above Taylor expansion, the coefficient $\cum_{i}(X)$ is the $i$th cumulant of $X$.
%% % In the case of a random vector, $X \in \RR^\dim$, we use the natural generalization of the characteristic function to be $\phi_{X}: \RR^\dim \to \mathbb{C}$ defined as $\phi_{X}(u) = \e e^{iu^T X}$.

%We say that a random vector $S \in \R^d$ is \emph{symmetric} if its density function $\phi(\cdot)$ (which we assume exists) is symmetric: $\phi(t) = \phi(-t)$ for $t \in \R$.

% \lnote{what are independent components?}

% We will formalize our ICA setting as follows:\lnote{this is not our ica setting, it is only for phase one}
%% In our algorithm design, we supply as input access to random samples from a particular model.
%% We call this our Symmetric ICA model and define it as follows:
\begin{definition}[(Symmetric) ICA model]\label{def:ica}
Let $A \in \RR^{\dim \times \dim}$ be an invertible matrix and let $S \in \RR^{\dim}$ be a random vector 
whose coordinates $S_i$ are mutually independent.
% , mean 0, and non-Gaussian.
% Without loss of generality, we also assume that for each $i$, $\e \abs{S_i} = 1$.
We then say that the random vector $X = AS$ is given by an \emph{ICA model}. If, in addition, $S$ is symmetric
(equivalently, using the independence of the components $S_i$, each component $S_i$ is symmetric) then we say 
that the random vector $X = AS$ is given by a \emph{symmetric ICA model}.
\end{definition}

%\begin{definition}\label{def:orthogonalizer}
%Let $X = AS$ for $A \in \RR^{\dim}\times \RR^{\dim}$ and random vector $S \in \RR^\dim$ with independent cooridnates.
For $X$ given by a symmetric ICA model
we say the matrix $B$ is an \emph{orthogonalizer} of $X$ if the columns of $BA$ are orthogonal.
%\end{definition}

We say that a matrix $A \in \R^{n\times n}$ is \emph{unitary} if $A^TA = I$, or in other words $A$ is a 
rotation matrix. (Normally for matrices
with real-valued entries such matrices are called orthogonal matrices and the word unitary is reserved for  
their complex counterparts, however the word orthogonal matrix can lead to confusion in the present paper.)
For matrix $C \in \R^{n \times n}$, denote by $\norm{C}_2$ the spectral norm and
by $\norm{C}_F$ the Frobenius norm.
\iflong
We will need the following inequality about the stability of matrix inversion (see for example 
% \cite[Sec. 5.8]{horn37topics}
\cite[Chapter III, Theorem 2.5]{stewart1990matrix}).

\begin{lemma}\label{lem:inversion}
Let $\norm{\cdot}$ be a matrix norm such that $\norm{AB} \leq \norm{A} \norm{B}$. Let matrices $C, E \in \R^{n\times n}$ be such that $\norm{C^{-1} E}_2 \leq 1$, and
let $\tilde{C} = C + E$. Then
\begin{equation}
\frac{\norm{\tilde{C}^{-1} - C^{-1}}} {\norm{C^{-1}}} \leq \frac{\norm{C^{-1} E}}{1 - \norm{C^{-1} E}}.
\end{equation}
\end{lemma}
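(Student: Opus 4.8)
The plan is to reduce the claim to a Neumann-series estimate for $(I+C^{-1}E)^{-1}$. First I would factor $\tilde C = C + E = C(I + C^{-1}E)$ and abbreviate $F := C^{-1}E$. The hypothesis $\norm{F}_2 \le 1$ — which I will use in the slightly stronger form $\norm{F}_2 < 1$, the boundary case $\norm{F}_2 = 1$ making the right-hand side of the asserted inequality infinite and hence the statement vacuous — guarantees that $I + F$ is invertible and that its inverse is given by the convergent series $(I+F)^{-1} = \sum_{k \ge 0} (-F)^k$. Hence $\tilde C$ is invertible with $\tilde C^{-1} = (I+F)^{-1} C^{-1}$.

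Next I would write the perturbation of the inverse in the form $\tilde C^{-1} - C^{-1} = \bigl((I+F)^{-1} - I\bigr) C^{-1}$ and use the algebraic identity $(I+F)^{-1} - I = \sum_{k \ge 1}(-F)^k = -F (I+F)^{-1}$. Taking the norm $\norm{\cdot}$ of both sides and applying submultiplicativity twice yields $\norm{\tilde C^{-1} - C^{-1}} \le \norm{F}\,\norm{(I+F)^{-1}}\,\norm{C^{-1}}$, i.e.
\[
\frac{\norm{\tilde C^{-1} - C^{-1}}}{\norm{C^{-1}}} \le \norm{F}\,\norm{(I+F)^{-1}}.
\]

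Finally I would bound $\norm{(I+F)^{-1}}$ by applying the triangle inequality and submultiplicativity term by term to the Neumann series, obtaining $\norm{(I+F)^{-1}} \le \sum_{k \ge 0}\norm{F}^k = 1/(1 - \norm{F})$; substituting this into the previous display gives exactly $\norm{F}/(1 - \norm{F})$, as desired. I expect this last step to be the only delicate point: the geometric-series bound for $\norm{(I+F)^{-1}}$ needs $\norm{F} < 1$ in the norm $\norm{\cdot}$ appearing in the conclusion, whereas the hypothesis only controls $\norm{F}_2$ directly. This causes no trouble in the way the lemma is used here (the perturbation is always taken small enough that $\norm{F} < 1$, and it is precisely in that regime that the stated bound is non-vacuous), and the cited reference handles the general case by fixing a single submultiplicative norm throughout; the algebraic identities themselves are entirely routine, so the whole argument is short once the norm bookkeeping is pinned down.
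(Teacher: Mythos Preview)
The paper does not actually prove this lemma: it is quoted as a known perturbation bound with a citation to Stewart--Sun, \emph{Matrix Perturbation Theory}, Chapter III, Theorem 2.5. Your Neumann-series argument is exactly the standard proof given there (factor $\tilde C = C(I+F)$ with $F=C^{-1}E$, expand $(I+F)^{-1}$, and sum the geometric series), so your approach matches what the cited reference does. You have also correctly flagged the one genuine wrinkle: the lemma as stated mixes $\norm{\cdot}_2$ in the hypothesis with a generic submultiplicative norm $\norm{\cdot}$ in the conclusion, and the geometric-series step really wants $\norm{F}<1$ in the latter norm. That is a quirk of the paper's transcription rather than a flaw in your argument; the reference fixes one submultiplicative norm throughout, and in every use the paper makes of the lemma (e.g.\ the derivation of \eqref{eq:inverse-stability}) the perturbation is taken small enough that this is harmless.
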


This implies that if $\norm{E}_2 = \norm{\tilde{C} - C}_2 \leq 1/(2 \norm{C^{-1}}_2)$, then
\begin{equation}\label{eq:inverse-stability}
\norm{\tilde{C}^{-1} - C^{-1}}_2 \leq 2 \norm{C^{-1}}_{2}^{2} \norm{E}_{2}.
\end{equation}
\fi

\subsection{Results from convex optimization}

% \lnote{paraphrase results from GLS}
% \nnote{are the following definitions verbatim from GLS? If so, they should be labeled as such.}
% \nnote{introduce the notation for rational numbers}
%\lnote{given that this is so long, it should probably be an appendix now}

We need the following result: Given a membership oracle for a convex body $K$ one can implement efficiently a membership oracle for $K^\polar$, the polar of $K$. 
This follows from applications of the ellipsoid method from \cite{GLS}.
Specifically, we use the following facts: (1) a validity oracle for $K$ can be constructed from a membership oracle for $K$ \cite[Theorem 4.3.2]{GLS}; (2) a membership oracle for $K^\polar$ can be constructed from a validity oracle for $K$ \cite[Theorem 4.4.1]{GLS}.

The definitions and theorems in this section all come (occasionally with slight rephrasing) from \cite{GLS} except for the notion of $(\epsilon, \delta)$-weak oracle. 
[The definitions below use rational numbers instead of real numbers. 
This is done in \cite{GLS} as they work out in detail the important low level issues of how the numbers 
in the algorithm are represented as general real numbers cannot be directly handled by computers. 
These low-level details can also be worked out for the arguments in this paper, but as is customary, we will 
not describe these and use real numbers for the sake of exposition.]
In this section $K \subseteq \R^n$ is a convex body. For $y \in \R^n$, the distance of $y$ to $K$ is given by
$d(y, K) := \min_{z \in K}\,\norm{y-z}$. 
Define $S(K,\eps) := \{y \in \RR^n \suchthat d(y,K) \leq \eps \}$
and
$S(K,-\eps) := \{y \in \RR^n \suchthat S(y,\eps) \subseteq K \}$. Let $\QQ$ denote the set of rational numbers.
%% For convenience, we will sometimes use $K_\eps := S(K, \eps)$ and $K_{-\eps} := S(K, -\eps)$ to abbreviate the above when there is no ambiguity.
%For consistency, denote the ball around $x$ of radius $r$ as $S(x,r) := S(\{x\}, r)$.
%% \nnote{as far as I can see, S(x,r) in the last part of the above definition
%% is not used in the paper. If that's correct it should be removed.}

\begin{definition}[\cite{GLS}]\label{def:eps-weak-oracle}
The \emph{$\epsilon$-weak membership problem} for $K$ is the following:
Given a point $y \in \QQ^n$ and a rational number $\eps > 0$, 
% \lnote{membership?}
%% oracle}
%%for a set $K\subseteq \RR^n$ is an oracle that 
either (i) assert that $y \in S(K,\eps)$, or (ii) assert that $y \not \in S(K,-\eps)$.
%% \begin{enumerate}
%% \item asserts that $y \in S(K,\eps)$, or
%% \item asserts that $y \not \in S(K,-\eps)$.
%% \end{enumerate}
An \emph{$\epsilon$-weak membership oracle} for $K$ is an oracle that solves the weak membership problem for $K$.
For $\delta \in [0,1]$, an \emph{$(\eps, \delta)$-weak membership oracle} for $K$ acts as 
follows: Given a point $y \in \QQ^n$, with probability at least $1-\delta$ it solves the  
$\eps$-weak membership problem for $y, K$, and otherwise its output can be arbitrary.  
\end{definition}
%% \iffalse
%% \nnote{Is the weak optimization problem actually used in the paper?}
%% \begin{definition}[\cite{GLS}]
%% The \emph{Weak Optimization Problem} is the following:

%% Given a vector $c \in \QQ^n$ and a rational number $\eps > 0$, either
%% \begin{enumerate}
%% \item find a vector $y \in \QQ^n$ such that $y \in S(K,\eps)$ and $c^T x \leq c^T y + \eps$ for all $x \in S(K,-\eps)$ (i.e., $y$ is almost in $K$ and almost maximizes $c^T x$ over the points deep in $K$), or
%% \item assert that $S(K,-\eps)$ is empty.
%% \end{enumerate}
%% \end{definition}
%% \fi
%% \begin{definition}[\cite{GLS}]\label{def:wviol}
%% The \emph{Weak Violation Problem} is the following:

%% Given a vector $c \in \QQ^n$, a rational number $\gamma$, and a rational number $\eps > 0$, either
%% \begin{enumerate}
%% \item assert that $c^T x \leq \gamma + \eps$ for all $x \in S(K, -\eps)$, or
%% \item find a vector $y \in S(K, \eps)$ with $c^T y \geq \gamma - \eps$.
%% \end{enumerate}
%% \end{definition}

\begin{definition}[\cite{GLS}]\label{def:wval}
The \emph{$\eps$-weak validity problem} for $K$ is the following: 
Given a vector $c \in \QQ^n$, a rational number $\gamma$, and a rational number $\eps > 0$, either
(i) assert that $c^T x \leq \gamma + \eps$ for all $x \in S(K, -\eps)$, or
(ii) assert that $c^T x \geq \gamma - \eps$ for some $x \in S(K, \eps)$.
%% \begin{enumerate}
%% \item assert that $c^T x \leq \gamma + \eps$ for all $x \in S(K, -\eps)$, or
%% \item assert that $c^T x \geq \gamma - \eps$ for some $x \in S(K, \eps)$.
%% \end{enumerate}
The notion of $\epsilon$-weak validity oracle and $(\epsilon, \delta)$-weak validity oracle can be defined
similarly to Def.~\ref{def:eps-weak-oracle}. 
\end{definition}

%% \begin{definition}[\cite{GLS}]\label{def:eps-weak-val-oracle}
%% Given a vector $c \in \QQ^n$, a rational number $\gamma$, and a rational number $\eps > 0$, either, an \emph{$\eps$-weak validity
%% % \lnote{membership?}
%% oracle} for a set $K\subseteq \RR^n$ is an oracle that either
%% \begin{enumerate}
%% \item assert that $c^T x \leq \gamma + \eps$ for all $x \in S(K, -\eps)$, or
%% \item assert that $c^T x \geq \gamma - \eps$ for some $x \in S(K, \eps)$.
%% \end{enumerate}
%% Furthermore, for $\delta>0$, an \emph{$(\eps, \delta)$-weak validity oracle} for a set $K\subseteq \RR^n$ acts as 
%% follows: For a given point $y \in \QQ^n$, with probability at least $1-\delta$ its output satisfies the conditions of a $\eps$-weak validity oracle, and otherwise its output can be arbitrary.  
%% \end{definition}

%For an algorithm that uses access to an oracle which solves one of the above problems, we say the algorithm is \emph{oracle-polynomial} time if it runs in polynomial time with polynomially many calls to the oracle.

\begin{definition}[{\cite[Section 2.1]{GLS}}]
We say that an oracle algorithm is an \emph{oracle-polynomial time algorithm} for a certain problem defined on a class of convex sets if the running time of the algorithm is bounded by a polynomial in the encoding length of $K$ and in the encoding length of the possibly existing further input, for every convex set $K$ in the given class.
\end{definition}
The \emph{encoding length} of a convex set will be specified below, depending on how the convex set is presented.
\nnote{check that this has been done}

\begin{theorem}[Theorem 4.3.2 in \cite{GLS}]\label{thm:wmem-to-wviol}
Let $R > r > 0$ and $a_0 \in \R^n$. 
There exists an oracle-polynomial time algorithm that solves the weak validity problem for every convex body $K \subseteq \RR^n$ contained in the ball of radius $R$ and containing a ball of radius $r$ centered at $a_0$ given by a weak membership oracle. The encoding length of $K$ is $n$ plus the length of the binary encoding of $R, r$, and $a_0$.
\end{theorem}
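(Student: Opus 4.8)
\emph{Proof sketch.} The weak validity problem for $(c,\gamma,\eps)$ amounts, up to the built-in slack, to deciding whether $h_K(c)=\sup_{x\in K} c^Tx$ is at most $\gamma$ or at least $\gamma$: if $h_K(c)\le\gamma$ then every $x\in S(K,-\eps)\subseteq K$ has $c^Tx\le\gamma+\eps$, which is assertion (i); if $h_K(c)\ge\gamma$ then the maximizer lies in $K\subseteq S(K,\eps)$ and has $c^Tx\ge\gamma-\eps$, which is assertion (ii); only in a narrow band around $h_K(c)=\gamma$ is either answer acceptable. So the plan is to approximately maximize the linear function $c^Tx$ over $K$ to additive accuracy comparable to $\eps\norm{c}$, using only the membership oracle together with the data $a_0,r,R$ (which locate $K$ between $B(a_0,r)$ and $B(a_0,R)$). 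By rescaling $c,\gamma,\eps$ we may assume $\norm{c}=1$.

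I would run a shallow-cut version of the ellipsoid method with $E_0:=B(a_0,R)\supseteq K$ and a number of iterations $N=\poly(n,\log(R/r),\log(1/\eps))$ (plus encoding lengths). At the center $x_k$ of the current ellipsoid $E_k$ one either makes geometric progress or terminates. Concretely: if $c^Tx_k$ is below the best objective value certified so far, replace $E_k$ by the minimum-volume ellipsoid containing $E_k\cap\{x:c^T(x-x_k)\ge 0\}$, which still contains $\{x\in K:c^Tx\ge c^Tx_k\}$ (an \emph{objective cut}, essentially free). Otherwise, query the membership oracle at $x_k$ with a polynomially small tolerance $\beta$. If it certifies $x_k\in S(K,\beta)$, then (taking $\beta\le\eps$) we have a point of $S(K,\eps)$ with a good objective value and output assertion (ii) with witness $x_k$. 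If it certifies $x_k\notin S(K,-\beta)$, we must instead produce a \emph{feasibility cut} — a halfspace through (a neighborhood of) $x_k$ that still contains essentially all of $K$ — and shrink the ellipsoid accordingly. After $N$ rounds the ellipsoid is so small that it cannot contain a ball of the radius that a ``thick'' superlevel set $K\cap\{c^Tx\ge\gamma\}$ would have to contain if $h_K(c)$ exceeded $\gamma$ by more than the target margin; hence no such point was missed, $h_K(c)$ is within the margin of $\gamma$, and we may safely output assertion (i). Tracking how $\beta$, the final ellipsoid volume, and $N$ must scale with $n,r,R,\eps$ is routine but needed to confirm oracle-polynomial running time; the $(\eps,\delta)$ variant follows by a union bound over the at most $N$ oracle calls, replacing $\delta$ by $\delta/N$.

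The main obstacle is the feasibility cut when the oracle reports $x_k\notin S(K,-\beta)$: a bare membership oracle returns no separating hyperplane, so the cut must be manufactured from the geometry, and this is precisely where the hypotheses $B(a_0,r)\subseteq K\subseteq B(a_0,R)$ are essential. Knowing an interior ball lets one binary-search along the segment $[a_0,x_k]$ to locate (weakly) where it pierces $\partial K$, and then use convexity together with the inner ball to argue that the hyperplane through that boundary point, with normal in the direction $x_k-a_0$, is a valid shallow cut separating $x_k$ from a slightly shrunk copy of $K$; one must take $\beta$ small enough (polynomially in $r/R$ and $\eps$) that this shallow cut is deep enough for the ellipsoid volume to decrease at the required geometric rate. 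Making this simultaneously quantitative in the shrinkage of $K$, the precision $\beta$, and the iteration count is the technical heart of the argument; with it in hand, the volume/iteration bookkeeping above closes the proof. (This is Theorem~4.3.2 of \cite{GLS}, where these details are carried out with exact rational arithmetic.)
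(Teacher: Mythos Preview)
The paper does not prove this statement at all: it is quoted verbatim as Theorem~4.3.2 of \cite{GLS} and used as a black box, with only the remark that the algorithm makes polynomially many weak-membership queries and that the accuracy parameter of those queries is polynomially related to the validity accuracy. So there is no ``paper's own proof'' to compare against; your sketch is an attempt to reproduce the GLS argument itself.

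Your high-level plan --- reduce weak validity to approximate linear maximization, and run a shallow-cut ellipsoid method driven by the membership oracle --- is exactly the structure of the GLS proof. The one substantive gap is your feasibility-cut construction. Taking the hyperplane through the (approximate) boundary point on $[a_0,x_k]$ with normal $x_k-a_0$ does \emph{not} in general give a valid shallow cut: for an elongated $K$ this hyperplane can slice off a large portion of $K$, far more than the $\tfrac{1}{n+1}$-shrunk copy that the shallow-cut method requires to be preserved. What GLS actually does is more delicate: after locating an approximate boundary point, they probe the membership oracle at roughly $n$ additional nearby points (small perturbations in coordinate directions) to approximate a supporting-hyperplane normal, and then argue this yields a shallow separation valid for $S(K,-\beta)$. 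You correctly identified this step as ``the technical heart,'' but the specific mechanism you proposed would fail; everything else in your sketch is sound.
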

We remark that the Theorem~4.3.2 as stated in \cite{GLS} is stronger than the above statement 
in that it constructs a weak violation oracle (not defined here) 
which gives a weak validity oracle which suffices for us.
The algorithm given by Theorem 4.3.2 makes a polynomial (in the encoding length of $K$) number of queries to
the weak membership oracle. 

%% \nnote{I think the above theorem needs more details (even though GLS stated it as above): 
%% Basically epsilon should be there, both in quantifying the oracles and also in the running time.
%% I think the former is quite important as otherwise I don't know how the error parameter of the 
%% input oracle relates to that of the output oracle. I suppose that to get a certain error parameter for the
%% output oracle we need much smaller error parameter for the input oracle.}

% Note that Theorem~\ref{thm:wmem-to-wviol}

%% \nnote{Need to define oracle-polynomial time; also it would be useful to have a more explicit expression
%% for the run time than just saying that it's a polynomial. In particular, what's the dependence on r and R?}\lnote{I believe this is fixed now.}
% \nnote{do we want to state the theorem above? The weak violation problem has not been defined.}

\iffalse
\begin{theorem}[Corollary 4.3.12 in \cite{GLS}]\label{thm:ellipsoid}
There exists an oracle-polynomial time algorithm called the \emph{Ellipsoid Algorithm}  that solves the weak optimization problem for every centered convex body $K \subseteq \RR^n$ contained in the ball of radius $R$ and containing a ball of radius $r$ centered at $a_0$ given by a weak membership oracle. The encoding length of $K$ is $n$ plus the length of the binary encoding of $R, r$ and $a_0$.
\end{theorem}
\fi

\begin{lemma}[Lemma 4.4.1 in \cite{GLS}]\label{lem:wviol-polar-wmem}
There exists an oracle-polynomial time algorithm that solves the weak membership problem for $K^\polar$, where $K$ is a convex body contained in the ball of radius $R$ and containing a ball of radius $r$ centered at $0$ given by a weak validity oracle.
The encoding length of $K$ is $n$ plus the length of the binary encoding of $R$ and $r$.
\end{lemma}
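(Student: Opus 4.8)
\begin{proofidea}
The plan is to exploit the fact that membership in $K^\polar$ is literally a validity question for $K$: a point $c$ lies in $K^\polar$ exactly when $\inner{c}{x}\le 1$ for every $x\in K$, which is precisely a validity assertion for $K$ with threshold $1$. So, given $y\in\QQ^n$ and a rational $\eps>0$ for the weak membership problem on $K^\polar$, I would run the weak validity oracle for $K$ once, on input $c=y$, $\gamma=1$, and a sufficiently small rational $\eps'$ to be fixed as a polynomial in $\eps$ and $r$ (its encoding length polynomial in those of $\eps,r,R$), and then translate the two possible answers. Before that, since $rB_2^n\subseteq K\subseteq RB_2^n$ implies by polarity that $\tfrac1R B_2^n\subseteq K^\polar\subseteq\tfrac1r B_2^n$, I can dispose of the case $\norm{y}>1/r$ immediately by asserting $y\notin S(K^\polar,-\eps)$ (as $S(K^\polar,-\eps)\subseteq K^\polar\subseteq\tfrac1r B_2^n$); so assume $\norm{y}\le 1/r$.

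The two elementary geometric facts that make the translation work are the sandwich inclusions $(1-\eps'/r)K\subseteq S(K,-\eps')$ and $S(K,\eps')\subseteq(1+\eps'/r)K$, both of which follow from $rB_2^n\subseteq K$, $0\in K$, and convexity (writing an $\eps'$-ball as $(\eps'/r)$ times $rB_2^n$ and taking convex combinations). Now suppose the oracle returns answer (i), i.e. $\inner{y}{x}\le 1+\eps'$ for all $x\in S(K,-\eps')$. Then by the first inclusion $\inner{y}{x}\le(1+\eps')/(1-\eps'/r)$ for all $x\in K$, so $\lambda y\in K^\polar$ with $\lambda=(1-\eps'/r)/(1+\eps')$, and $d(y,K^\polar)\le\norm{y}(1-\lambda)\le\tfrac1r\cdot\eps'(1+1/r)$, which is at most $\eps$ once $\eps'$ is chosen small enough; so I assert $y\in S(K^\polar,\eps)$. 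Suppose instead the oracle returns answer (ii), i.e. $\inner{y}{x}\ge 1-\eps'$ for some $x\in S(K,\eps')$; by the second inclusion there is $x'\in K$ with $\inner{y}{x'}\ge\beta:=(1-\eps')/(1+\eps'/r)$, and $x'\ne 0$ (once $\eps'<1$), so $\norm{x'}\ge\beta/\norm{y}\ge r\beta$. If $y$ were in $S(K^\polar,-\eps)$, then $y+\eps\,x'/\norm{x'}\in K^\polar$, forcing $\inner{y}{x'}+\eps\norm{x'}\le 1$, i.e. $\inner{y}{x'}\le 1-\eps r\beta$; combined with $\inner{y}{x'}\ge\beta$ this gives $\beta(1+\eps r)\le 1$, a contradiction once $\eps'$ is small enough that $\beta>1/(1+\eps r)$. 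So I assert $y\notin S(K^\polar,-\eps)$. In both branches the assertion is correct, and the algorithm uses one oracle call plus a polynomial amount of arithmetic, hence runs in oracle-polynomial time.

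I expect the main obstacle to be purely the error bookkeeping: pinning down a single choice of $\eps'$, polynomial in $\eps$ and $r$ (and compatible with the encoding length bounds that involve $R$), that simultaneously makes branch (i) deliver $d(y,K^\polar)\le\eps$ and branch (ii) deliver $y\notin S(K^\polar,-\eps)$, while also handling the degenerate subcases where $\norm{y}$ is very small and where the validity witness $x'$ is close to $0$. Everything else is a direct unwinding of the definitions of polar, support function, and the weak oracles.
\end{proofidea}
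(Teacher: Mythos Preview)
Your proposal is correct and matches the approach the paper relies on. The paper does not give its own proof of this lemma; it simply invokes \cite[Lemma~4.4.1]{GLS} and records (see Step~\ref{step2} in the detailed description of Subroutine~\ref{sub:cvar-oracle}) exactly the two features you use: a \emph{single} call to the weak validity oracle with $c=y$, $\gamma=1$, and a precision $\eps'$ that is polynomially bounded below in terms of $\eps$, $r$, and $\norm{y}$ (the paper's hidden formula is $\eps' = \tfrac{1}{2}\,\tfrac{r\eps}{\norm{y}+r\norm{y}+r\eps}$). Your sandwich inclusions $(1-\eps'/r)K\subseteq S(K,-\eps')$ and $S(K,\eps')\subseteq(1+\eps'/r)K$ and the subsequent case split on answers (i)/(ii) are the standard way this reduction is argued, and your handling of the easy case $\norm{y}>1/r$ and of the witness lower bound $\norm{x'}\ge r\beta$ via Cauchy--Schwarz is clean. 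There is no genuinely different route here to compare against.
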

Our algorithms and proofs will need more quantitative details from the proofs of the 
above theorem and lemma. These will be mentioned when we need them.

\iflong
\subsection{Results from algorithmic convexity}
We state here a special case of a standard result in algorithmic convexity: There is an efficient algorithm to estimate the covariance matrix of the uniform distribution in a centrally symmetric convex body given by a weak membership oracle.
\nnote{Does the body need to be centrally symmetric? Also need to say that by the covariance matrix of a convex body we mean the 
covariance matrix of the uniform distribution on the body.}
The result follows from the random walk-based algorithms to generate approximately uniformly random points from a convex body \cite{DBLP:journals/jacm/DyerFK91, MR1608200, DBLP:journals/jcss/LovaszV06}. 
Most papers use access to a membership oracle for the given convex body. In this paper we only have access to an $\eps$-weak membership oracle. 
As discussed in \cite[Section 6, Remark 2]{DBLP:journals/jacm/DyerFK91}, essentially the same algorithm implements efficient sampling when given an $\eps$-weak membership oracle.
The problem of estimating the covariance matrix of a convex body was introduced in \cite[Section 5.2]{MR1608200}. 
That paper analyzes the estimation from random points. 
There has been a sequence of papers studying the sample complexity of this problem \cite{MR1665576, MR1694526, MR2190337, MR2276533, MR2276637, MR2956207, adamczak, MR3127875}.
\nnote{I'm not sure we need all these references. Maybe just refer to one of the latest ones and
say see references therein.}

\begin{theorem}\label{thm:covariance_estimation}
Let $K \subseteq \RR^\dim$ be a centrally symmetric convex body given by a weak membership oracle so that $r B_2^\dim \subseteq K \subseteq R B_2^\dim$. Let $\Sigma = \cov(K)$. Then there exists a randomized algorithm that, when given access to the weak membership of $K$ and inputs $r, R, \delta, \eps_c >0$, it outputs a matrix $\tilde \Sigma$ such that with probability at least $1-\delta$ over the randomness of the algorithm\details{and for any response by the weak membership oracle to queries in the ambiguous region}, we have 
\begin{equation}\label{equ:covariance}
(\forall u \in \RR^\dim) \qquad (1-\eps_c) u^T \Sigma u \leq u^T \tilde \Sigma u \leq (1+\eps_c) u^T \Sigma u.
\end{equation}
The running time of the algorithm is $\poly(\dim, \log(R/r), 1/\eps_c, \log(1/\delta))$.
\end{theorem}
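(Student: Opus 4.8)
The plan is the standard one from algorithmic convexity: put $K$ into near-isotropic position with a sampling-based rounding procedure, draw many approximately uniform samples from the rounded body, form their empirical second-moment matrix, and undo the rounding. The reason this yields the \emph{relative} (spectral) guarantee \eqref{equ:covariance} and not merely an additive one is an elementary observation: a two-sided Loewner relation $(1-\eps_c)\,\Sigma \preceq \tilde\Sigma \preceq (1+\eps_c)\,\Sigma$ (which is exactly the "for all $u$" statement in \eqref{equ:covariance}) is preserved under every invertible congruence $M \mapsto TMT^T$, because $M \succeq 0 \iff TMT^T \succeq 0$. Hence it suffices to estimate, with relative error $\eps_c$, the covariance of a body that is \emph{already} near-isotropic (where a relative error is the same as an additive error up to a constant), and then conjugate by the inverse of the rounding map: since $K$ is centrally symmetric its centroid is $0$, so $\cov(K) = T^{-1}\cov(TK)\,(T^{-1})^{T}$ for any linear $T$, and the relative error carries over verbatim.

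\emph{Rounding.} Starting from $rB_2^\dim \subseteq K \subseteq RB_2^\dim$, the iterative rounding algorithm of \cite{DBLP:journals/jacm/DyerFK91, MR1608200, DBLP:journals/jcss/LovaszV06} computes, in $O(\log(R/r))$ phases and time $\poly(\dim, \log(R/r))$, a linear map $T$ such that $TK$ is, say, $2$-isotropic ($\tfrac12 I \preceq \cov(TK) \preceq 2I$) together with a refined sandwiching $r'B_2^\dim \subseteq TK \subseteq R'B_2^\dim$ with $R'/r' = \poly(\dim)$. A membership oracle for $TK$ is obtained from the given one for $K$ by pre-composing with $T^{-1}$; that we only have an $\eps$-weak membership oracle is harmless, since the oracle receives the accuracy $\eps$ as a per-query input (Definition~\ref{def:eps-weak-oracle}) and, as noted in \cite[Section~6, Remark~2]{DBLP:journals/jacm/DyerFK91}, the ball walk and hit-and-run samplers mix correctly with a weak membership oracle provided the ambiguous shell is thin, which we ensure by querying with $\eps$ exponentially small in the relevant parameters, at only polynomial extra cost.

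\emph{Sampling and concentration.} With $TK$ near-isotropic we run the sampler to produce $N$ points $Y_1,\dots,Y_N$ whose joint law is within total variation distance $\eta$ of $N$ independent uniform samples from $TK$, in time $\poly(\dim, \log(R/r), \log(1/\eta))\cdot N$; by a coupling we may analyze the idealized case of exactly uniform $Y_i$ and pay an extra $\eta \le \delta/3$ in the failure probability. Set $\tilde\Sigma_1 := \frac1N \sum_{i=1}^N Y_iY_i^{T}$. For a near-isotropic convex body the matrix-concentration bounds for isotropic log-concave measures (\cite{adamczak}; see also \cite{MR2956207, MR3127875} and references therein) give $\norm{\tilde\Sigma_1 - \cov(TK)}_2 \le \eps_c/4$ with probability $\ge 1-\delta/3$ as soon as $N = C\,\dim\,\polylog(\dim)\,\eps_c^{-2}\,\log(1/\delta)$; since $\lambda_{\min}(\cov(TK)) \ge \tfrac12$ this is a relative error of at most $\eps_c/2$. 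Finally output $\tilde\Sigma := T^{-1}\tilde\Sigma_1(T^{-1})^{T}$. By the congruence-invariance above and $\cov(K) = T^{-1}\cov(TK)(T^{-1})^{T}$, the matrix $\tilde\Sigma$ satisfies \eqref{equ:covariance}; a union bound over the at most three failure events gives probability $\ge 1-\delta$, and the total running time is $\poly(\dim, \log(R/r), 1/\eps_c, \log(1/\delta))$.

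The main obstacle is not any single step in isolation but the bookkeeping that couples them under a \emph{weak} (and possibly randomized) membership oracle: one must check that the rounding procedure and the sampler still function when membership is decided only up to a thin slab, and that the three accumulated errors --- the sampler's total-variation error, the oracle's accuracy, and the finite-sample fluctuation of $\tilde\Sigma_1$ --- can all be driven below the required thresholds at cost polynomial in $\dim$, $\log(R/r)$, $1/\eps_c$, and $\log(1/\delta)$. Once $TK$ is genuinely near-isotropic, the covariance-concentration step is essentially a black box.
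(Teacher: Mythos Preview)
The paper does not prove this theorem; it states it as a known ``special case of a standard result in algorithmic convexity'' and simply cites the random-walk sampling literature \cite{DBLP:journals/jacm/DyerFK91, MR1608200, DBLP:journals/jcss/LovaszV06} for the sampling step, \cite[Section~6, Remark~2]{DBLP:journals/jacm/DyerFK91} for the weak-oracle issue, and \cite{MR1608200} and the subsequent line of work (through \cite{adamczak, MR3127875}) for covariance estimation from samples. No argument is given beyond these pointers.

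Your sketch is a faithful reconstruction of exactly that standard pipeline---rounding to near-isotropic position, approximate uniform sampling, empirical second moment, undoing the rounding---and your key observation that the two-sided Loewner guarantee \eqref{equ:covariance} is invariant under congruence $M\mapsto TMT^{T}$ is precisely what the paper exploits immediately after the theorem statement (the remark that ``the guarantee in \eqref{equ:covariance} has the advantage of being invariant under linear transformations''). So your approach is not different from the paper's; it \emph{is} the approach the paper is gesturing at, just made explicit. The only caveat is that your write-up is a sketch rather than a proof: the error accounting across rounding phases under a weak oracle, and the coupling between total-variation sampling error and the covariance-concentration bound, are asserted rather than carried out, which is also the level of detail the paper itself settles for.
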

Note that \eqref{equ:covariance} implies $\norm{\tilde \Sigma - \Sigma}_2 \leq \eps_c \norm{\Sigma}_2$. \details{\eqref{equ:covariance} $\implies$ $\abs{u^T \tilde \Sigma u - u^T \Sigma u} \leq \eps_c u^T \Sigma u \implies \abs{u^T (\tilde \Sigma -  \Sigma) u} \leq \eps_c u^T u \norm{\Sigma}$. The claim follows from the Rayleigh quotient characterization of eigenvalues and the spectral norm.}
The guarantee in \eqref{equ:covariance} has the advantage of being invariant under linear transformations in the following sense: 
if one applies an invertible linear transformation $C$ to the underlying convex body, the covariance matrix and its estimate become $C \Sigma C^T$ and $C \tilde \Sigma C^T$, respectively. These matrices satisfy 
\begin{equation*}
(\forall u \in \RR^\dim) \qquad (1-\eps_c) u^T C \Sigma C^T u \leq u^T C \tilde \Sigma C^T u \leq (1+\eps_c) u^T C \Sigma C^T u.
\end{equation*}
This fact will be used later. \nnote{Where is this used? As far as I can see, only the sentence right after
the theorem statement gets used.}
\fi

\section{The centroid body}\label{sec:centroidbody}

\ifextras
 \begin{definition}[Centroid Body]\label{def:centroid-body}
 Let $X \in \RR^\dim$ be distributed according to $F$.
 The \emph{centroid body} of $X$ is denoted and defined as
 \[
 % \Gamma X = \{ \e Y \suchthat \forall T \subseteq \RR^{\dim}, P_{Y}(T) \leq 2 P_{X}(T) \}
 \Gamma X = \{ \e Y \suchthat \forall t, f_{Y}(t) \leq 2 f_{X}(t) \}
 \]
 % where $P_X$ and $P_Y$ are the measures induced by $X$ and $Y$, respectively.
 where $f_X$ and $f_Y$ are the densities of $X$ and $Y$, respectively.
 \end{definition}
 \lnote{this is a nonsymmetric centroid body and maybe it should be called that to avoid conflict with the more standard definitions of petty, gardner and milman pajor. the standard definition is symmetric even for non symmetric X, as it uses the absolute value.}
 \begin{lemma}\label{lem:centroid-body-support}
 Let $X \in \RR^\dim$ be a centrally symmetric random vector with distribution $F_X$ and density $f_X$.
 Then  $\Gamma X$ is convex and has support function $h_{\Gamma X}(\theta) = \e( \inner{X}{\theta} | \inner{X}{\theta} \geq 0)$.
 \end{lemma}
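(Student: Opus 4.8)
The plan is to identify $\Gamma X$ with the image, under the expectation functional, of the convex set of probability densities that are pointwise dominated by $2 f_X$, and then to compute its support function by solving the resulting (infinite-dimensional) linear program explicitly, the central symmetry of $X$ being exactly what makes the optimal density clean. Throughout I use that $X$ has finite first moment — the standing assumption under which $\Gamma X$ is defined — so that all the expectations below are finite. For convexity, suppose $z_1 = \e Y_1$ and $z_2 = \e Y_2$ lie in $\Gamma X$, so their densities satisfy $f_{Y_1}, f_{Y_2} \le 2 f_X$ pointwise, and let $\lambda \in [0,1]$. The mixture $g := \lambda f_{Y_1} + (1-\lambda) f_{Y_2}$ is again a probability density and still satisfies $g \le 2 f_X$; a random vector with density $g$ has expectation $\lambda z_1 + (1-\lambda) z_2$, so that point is in $\Gamma X$. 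Hence $\Gamma X$ is convex.

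For the support function, fix $\theta \in \RR^\dim$. Since every point of $\Gamma X$ has the form $\e Y$ and $\inner{\e Y}{\theta} = \int_{\RR^\dim} \inner{t}{\theta}\, f_Y(t)\, \ud t$, we have
\[
h_{\Gamma X}(\theta) \;=\; \sup_{z \in \Gamma X} \inner{z}{\theta} \;=\; \sup_{g}\ \int_{\RR^\dim} \inner{t}{\theta}\, g(t)\, \ud t ,
\]
where the supremum ranges over all probability densities $g$ on $\RR^\dim$ with $g \le 2 f_X$ pointwise. This is a linear maximization in $g$, and I would solve it by a bathtub-principle argument: one wants to put as much of the budget $2 f_X$ as possible where $\inner{t}{\theta}$ is largest, i.e.\ on the half-space $H := \{t : \inner{t}{\theta} \ge 0\}$. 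The decisive observation is that central symmetry of $X$, together with the existence of a density (so the hyperplane $\{\inner{t}{\theta} = 0\}$ is $\measure_X$-null), forces $\measure_X(H) = 1/2$; hence $g^\star := 2 f_X \, \ind_H$ satisfies $\int g^\star = 2\,\measure_X(H) = 1$ and $g^\star \le 2 f_X$, so it is a feasible density and $\e Y^\star \in \Gamma X$ for $Y^\star$ with density $g^\star$. To see $g^\star$ is optimal, for any feasible $g$ write
\[
\int_{\RR^\dim} \inner{t}{\theta}\,\bigl(g^\star(t) - g(t)\bigr)\, \ud t \;=\; \int_{H} \inner{t}{\theta}\,\bigl(2 f_X(t) - g(t)\bigr)\, \ud t \;-\; \int_{\RR^\dim \setminus H} \inner{t}{\theta}\, g(t)\, \ud t \;\ge\; 0 ,
\]
because on $H$ the integrand is a product of the nonnegative factors $\inner{t}{\theta}$ and $2 f_X - g \ge 0$, while on $\RR^\dim \setminus H$ we have $\inner{t}{\theta} < 0$ and $g \ge 0$. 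Therefore $h_{\Gamma X}(\theta) = \int_H \inner{t}{\theta}\cdot 2 f_X(t)\, \ud t = 2\,\e\bigl(\inner{X}{\theta}\,\ind_{\{\inner{X}{\theta}\ge 0\}}\bigr) = \e\bigl(\inner{X}{\theta} \mid \inner{X}{\theta} \ge 0\bigr)$, the last equality again using $\measure_X(H) = 1/2$.

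The step I expect to require the most care is the interaction of the pointwise cap $g \le 2 f_X$ with the normalization $\int g = 1$: it is precisely central symmetry that makes the greedy choice $g^\star$ have total mass exactly $1$, so that no mass is forced onto the region $\inner{t}{\theta} < 0$ where it would lower the objective. (In the general non-symmetric case one would instead obtain a truncation of $2 f_X$ to a translated half-space and recover the classical symmetric centroid body with an $\e\lvert\inner{X}{\theta}\rvert$-type support function; that is the version of Petty, Gardner, and Milman--Pajor.) I would also remark that since $h_{\Gamma X} = h_{\overline{\conv}(\Gamma X)}$ in general, there is no need to argue separately that $\Gamma X$ is closed: the displayed formula is the support function of $\Gamma X$ exactly as defined.
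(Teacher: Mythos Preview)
Your proof is correct, and in fact cleaner than the paper's own attempt, which the authors themselves flag as incomplete. Both arguments rest on the same key density $g^\star = 2 f_X \,\ind_{\{\inner{\cdot}{\theta} \ge 0\}}$, but the organization differs. The paper proceeds by double inclusion: it posits a body $K$ with the target support function, bounds $\inner{\e Y}{\theta}$ from above to get $\Gamma X \subseteq K$ (the displayed chain there is not quite right as written, since it uses $\int \inner{x}{\theta} f_Y \le 2\int \inner{x}{\theta} f_X$, which fails when $\inner{x}{\theta}$ changes sign), and then tries to realize boundary points of $K$ via $g^\star$ followed by an unfinished argument for interior points. You instead prove convexity directly by mixtures and compute $h_{\Gamma X}(\theta)$ as a linear program over admissible densities, verifying optimality of $g^\star$ in one line. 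This avoids two issues the paper's draft leaves open: you never need to check a priori that the claimed expression is a valid support function, and you never need a separate interior-point argument, since convexity plus the support function is all the lemma asserts.
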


 \begin{proof}
 Denote by $K$ the convex body with support function $h_{K}(\theta) = \e(\inner{X}{\theta} | \inner{X}{\theta} \geq 0)$.
 Let $Y \in \RR^\dim$ be a random variable with density $f_Y$ such that $f_Y \leq 2 f_X$.
 Then for $\theta \in \S^{\dim -1}$,
 \[
 % \inner{EY}{\theta} = \sum_{i} \int y f_{y_i}(y) \leq \sum_{i} \int y 2 f_{x_i}
 \inner{\e Y}{\theta} = \int \inner{x}{\theta} f_Y dx \leq 2 \int \inner{x}{\theta} f_{X} dx \leq 2 \int_{x \geq 0} \inner{x}{\theta} f_X dx = \e(\inner{X}{\theta} | \inner{X}{\theta} \geq 0).
 \]
 Thus we have $\Gamma X \subseteq K$.

 % Now take a point $x \in K$.
 % Then we have $\norm{x} \leq h_{K}(\hat{x})$ by definition of the support function.
 To see the second part of the proof, note that for a point $x$ so that $\norm{x} = h_{K}(\hat{x})$ (i.e. a point on the boundary), one can construct a random vector with density function $2 f_{X}$ supported on $\{z \suchthat \inner{z}{\hat{x}} \geq 0\}$.
 The mean of this will then be $x$ by definition and hence maximizes the support along this direction, giving the boundary of $\Gamma X$.
 \jnote{Not sure about this next part.}
 To construct the points on the interior of $\Gamma X$ from a point in $K$, one can add points to the support from the set of $z$ where $\inner{z}{\hat{x}} < 0$. xxx
 \end{proof}

 (

 The proof above is incorrect Say, how do you know that $h$ is a support function? Alternative definitions and lemmas from Petty's paper:
 Let $H(u) = \e (\abs{\inner{u}{X}})$. Since $H(\mu u) = \mu H(u)$ for $\mu \geq 0$ and $H(u+v) \leq H(u) + H(v)$, we have that $H$ is the support function of a compact convex set [Gardner 0.6].
\fi

The main tool in our orthogonalization algorithm is the centroid body of a distribution, which we use as a first moment analogue of the covariance matrix.
In convex geometry, the centroid body is a standard (see, e.g., \cite{petty1961,MilmanPajor,gardner1995geometric})  convex body associated to (the uniform distribution on) a given convex body.
Here we use a generalization of the definition from the case of the uniform distribution on a convex body to more general probability measures. \nnote{why switch the language to probability measures?}
Let $X \in \RR^\dim$ be a random vector with finite first moment, that is, for all $u \in \RR^\dim$ we have $\e (\abs{\inner{u}{X}}) < \infty$. Following \cite{petty1961}, consider the function
\(
h(u) = \e (\abs{\inner{u}{X}})
\).
Then it is easy to see that $h(0) = 0$, $h$ is positively homogeneous, and $h$ is subadditive. Therefore, it is the support function of a compact convex set \cite[Section 3]{petty1961}, \cite[Theorem 1.7.1]{MR1216521}, \cite[Section 0.6]{gardner1995geometric}. \nnote{I think the fact that the centroid body of a distribution is convex is a key fact and did not appear in a previous paper even though the proof is basically the same as that for convex bodies (correct? I actually haven't checked it); so it should be emphasized by making it a lemma/prop. and the proof included although it can go to the appendix. Also perhaps explain why it works for the first moment and what happens for higher and lower ones.}
This justifies the following definition:
\begin{definition}[Centroid body%% , {\cite[Section 3, centroid surface]{petty1961}} {\cite[page 67, zonoid of inertia]{MilmanPajor}, }
]\label{def:centroid-body2}
Let $X \in \RR^\dim$ be a random vector with finite first moment, that is, for all $u \in \RR^\dim$ we have $\e (\abs{\inner{u}{X}}) < \infty$. 
The \emph{centroid body} of $X$ is the compact convex set, denoted $\Gamma X$, whose support function is $h_{\Gamma X}(u) = \e (\abs{\inner{u}{X}})$.
For a probability measure $\measure$, we define $\Gamma \measure$, the centroid body of $\measure$, as the centroid body of any random vector distributed according to $\measure$.
\end{definition}
% But this is missing the dual characterization (is it needed anywhere?) and equivariance. Petty claims that the equivariance is evident (for the restricted case of uniform distribution on a compact set).

%Note that the function $h_{\Gamma X}$ defined above is indeed a support function because it is \emph{coherent}, i.e. it is homogeneous and sub-additive (See \cite{gardner1995geometric}).\lnote{this needs a better discussion. where does coherence come from?}
The following lemma says that the centroid body is equivariant under linear transformations. 
It is a slight generalization of statements in \cite{petty1961} and \cite[Theorem 9.1.3]{gardner1995geometric}.

\begin{lemma}\label{lem:equivariance}
Let $X$ be a random vector on $\RR^\dim$.
Let $A: \RR^\dim \to \RR^\dim$ be an invertible linear transformation.
Then $\Gamma (AX) = A (\Gamma X)$.
\end{lemma}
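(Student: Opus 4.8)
<br>

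The plan is to prove the identity $\Gamma(AX) = A(\Gamma X)$ by showing that the two convex bodies have the same support function, since a compact convex set is uniquely determined by its support function. Both sides are compact convex sets: $\Gamma(AX)$ is one by Definition~\ref{def:centroid-body2} applied to the random vector $AX$ (which has finite first moment because $\e\abs{\inner{u}{AX}} = \e\abs{\inner{A^Tu}{X}} < \infty$ for all $u$), and $A(\Gamma X)$ is the image of a compact convex set under a linear map, hence compact and convex.

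First I would recall the standard fact that for any compact convex set $K$ and invertible linear map $A$, the support function transforms as $h_{AK}(u) = \sup_{x \in K}\inner{Ax}{u} = \sup_{x\in K}\inner{x}{A^Tu} = h_K(A^Tu)$. Applying this with $K = \Gamma X$ gives $h_{A(\Gamma X)}(u) = h_{\Gamma X}(A^Tu)$. Next I would compute the support function of the left-hand side directly from the definition: $h_{\Gamma(AX)}(u) = \e\bigl(\abs{\inner{u}{AX}}\bigr)$. The key manipulation is the adjoint identity $\inner{u}{AX} = u^T A X = (A^T u)^T X = \inner{A^Tu}{X}$, so that $h_{\Gamma(AX)}(u) = \e\bigl(\abs{\inner{A^Tu}{X}}\bigr) = h_{\Gamma X}(A^Tu)$, where the last equality is again just Definition~\ref{def:centroid-body2}, this time for $X$. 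Combining the two displays, $h_{\Gamma(AX)}(u) = h_{\Gamma X}(A^Tu) = h_{A(\Gamma X)}(u)$ for all $u \in \RR^\dim$, and since support functions determine compact convex sets uniquely, $\Gamma(AX) = A(\Gamma X)$.

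There is essentially no hard part here — the proof is a one-line computation once the right framework (equality of support functions) is set up; the only things to be careful about are: (i) checking that $AX$ indeed has finite first moment so that $\Gamma(AX)$ is well-defined, (ii) correctly tracking the transpose so that $A^T$, not $A$ or $A^{-1}$, appears in the argument, and (iii) noting that invertibility of $A$ is used only to guarantee that $A(\Gamma X)$ is a body (full-dimensional) — the support-function identity itself holds for arbitrary linear $A$. If one wants the measure-theoretic version stated in Definition~\ref{def:centroid-body2}, one simply notes the computation does not depend on which representative random vector is chosen, since it only uses the distribution of $\inner{u}{X}$.
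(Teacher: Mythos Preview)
Your proof is correct and essentially the same as the paper's: both rest on the adjoint identity $\inner{u}{AX} = \inner{A^Tu}{X}$ together with the fact that a compact convex set is determined by its support function. The paper phrases this as a chain of pointwise membership equivalences $x \in \Gamma(AX) \Leftrightarrow \dotsb \Leftrightarrow x \in A(\Gamma X)$ (using invertibility of $A$ to reparametrize $u \mapsto v = A^T u$), whereas you compare support functions directly; these are two framings of the same one-line computation.
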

\begin{proof}
$
x \in \Gamma(AX) \Leftrightarrow
\forall u \inner{x}{u} \leq \e (\abs{\inner{u}{AX}}) \Leftrightarrow
\forall u \inner{x}{u} \leq \e (\abs{\inner{A^T u}{X}}) \Leftrightarrow
\forall v \inner{x}{A^{-T}v} \leq \e (\abs{\inner{v}{X}}) \Leftrightarrow
\forall v \inner{A^{-1}x}{v} \leq \e (\abs{\inner{v}{X}}) \Leftrightarrow
A^{-1}x \in \Gamma(X) \Leftrightarrow
x \in A\Gamma(X)
$.
\end{proof}
\details{it seems to me (Luis) that linear equivariance is true even in the non-invertible case, but not through that proof} 
\begin{lemma}[{\cite[Section 0.8]{gardner1995geometric}, \cite[Remark 1.7.7]{MR1216521}}]\label{lem:polar-radial}
% Let $X$ be a random vector in $\RR^\dim$.
% Then the radial function of $(\Gamma X)^{\polar}$, over $\theta \in \S^{\dim-1}$, is given by
% \[
% r_{(\Gamma X)^{\polar}}(\theta) = \frac{1}{h_{\Gamma X}(\theta)}.
% \]
Let $K \subseteq \RR^\dim$ be a convex body with support function $h_K$ and such that the origin is in the interior of $K$.
Then $K^\polar$ is a convex body and has radial function $r_{K^\polar}(u) = 1/h_{K}(u)$ for $u \in S^{n-1}$.
\end{lemma}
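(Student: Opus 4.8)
The plan is to verify the two assertions in order, both by directly unwinding the definition $K^\polar := \{x \in \RR^\dim \suchthat \inner{x}{y} \leq 1 \ \forall y \in K\}$ and using the hypothesis $0 \in \operatorname{int} K$. First I would check that $K^\polar$ is a convex body, and then compute its radial function; neither step involves a real difficulty, so I expect the proof to be short.

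For the first step, $K^\polar$ is an intersection of closed halfspaces $\{x \suchthat \inner{x}{y}\le 1\}$ (one per $y \in K$), hence closed and convex. To get boundedness, I would use that $0 \in \operatorname{int} K$ supplies some $r>0$ with $rB_2^\dim \subseteq K$: then for $x \in K^\polar$, $x \neq 0$, plugging $y = rx/\norm{x} \in K$ gives $r\norm{x} \le 1$, so $K^\polar \subseteq (1/r)B_2^\dim$; being closed and bounded, $K^\polar$ is compact. For nonemptiness of the interior, since $K$ is compact there is $R$ with $K \subseteq RB_2^\dim$, and then $\norm{x}\le 1/R$ forces $\inner{x}{y}\le \norm{x}\norm{y}\le 1$ for all $y\in K$, i.e. $(1/R)B_2^\dim \subseteq K^\polar$; in particular $0 \in \operatorname{int} K^\polar$. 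Hence $K^\polar$ is a convex body.

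For the radial function, fix $u \in S^{n-1}$. For $\alpha \ge 0$ one has $\alpha u \in K^\polar$ iff $\alpha\inner{u}{y}\le 1$ for all $y\in K$, i.e. iff $\alpha\, h_K(u) \le 1$, where $h_K(u) = \sup_{y\in K}\inner{u}{y}$. Using again $rB_2^\dim \subseteq K$ we get $h_K(u) \ge r\norm{u} = r > 0$, so this condition is equivalent to $\alpha \le 1/h_K(u)$. By convexity and boundedness of $K^\polar$ together with $0 \in \operatorname{int} K^\polar$, the set $\{\alpha \in \RR \suchthat \alpha u \in K^\polar\}$ is a bounded interval containing $0$ in its interior, so its supremum is attained on the branch $\alpha \ge 0$ and therefore $r_{K^\polar}(u) = 1/h_K(u)$.

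The only places that need any care — the closest thing to an obstacle — are (i) the strict positivity $h_K(u) > 0$ on $S^{n-1}$, which is precisely where $0 \in \operatorname{int} K$ (and not merely $0 \in K$) enters, and (ii) the bookkeeping that, although the radial function is a supremum over all real $\alpha$, only the nonnegative branch matters; both become trivial once $0 \in \operatorname{int} K^\polar$ is in hand. Alternatively one could simply quote \cite[Section 0.8]{gardner1995geometric} or \cite[Remark 1.7.7]{MR1216521}, but the self-contained argument above is short enough to spell out.
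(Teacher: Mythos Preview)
Your argument is correct. In the compiled paper there is no proof of this lemma at all: it is simply stated with citations to \cite[Section 0.8]{gardner1995geometric} and \cite[Remark 1.7.7]{MR1216521} and treated as a known fact. (The LaTeX source contains a draft proof inside an \texttt{\textbackslash iffalse\,\ldots\,\textbackslash fi} block, which goes via the bipolar identity $(K^\polar)^\polar=K$ to obtain $r_K=1/h_{K^\polar}$ and then swaps roles; but this is commented out and never appears in the paper.)

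Your approach is slightly more direct than that draft: you establish compactness and nonempty interior of $K^\polar$ from the ball inclusions $rB_2^\dim\subseteq K\subseteq RB_2^\dim$, and then compute $r_{K^\polar}$ straight from the definition of the polar, without invoking the bipolar theorem. Both routes are standard; yours has the minor advantage of being fully self-contained, while the draft's route is marginally shorter once the bipolar theorem is taken for granted. Either is acceptable here.
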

\iffalse
\lnote{the remark and proof will have to change slightly as this is well known standard stuff}
\begin{proof}
We can see that $K^\polar$ is convex by definition, being an intersection of halfspaces.

% Next, observe that $\norm{x} \leq 1/h_K(\hat{x})$ implies $h_K(\hat{x}) \leq 1/ \norm{x}$ and so $\max_{k \in K} \inner{\hat{x}}{k} \leq 1/\norm{x}$.
% Therefore, $\max_{k \in K}\inner{x}{k} \leq 1$, so $x \in K^\polar$ when $\norm{x} \leq 1/h_K(\hat{x})$.

% Then we can write
% \begin{align*}
% K^\polar &= \{x \in \RR^\dim \suchthat \inner{x}{k} \leq 1 \; \forall k \in K\} \\
% &= \bigcap_{k \in K} \{x \in \RR^\dim \suchthat \inner{x}{k} \leq 1 \} \\
% &= \bigcap_{k \in K} \left\{x \in \RR^\dim \suchthat \inner{x}{\hat{k}} \leq \frac{1}{\norm{k}} \right\} \\
% &= \bigcap_{\theta \in \S^{\dim-1}} \left\{ x \in \RR^\dim \suchthat \inner{x}{\theta} \leq \frac{1}{h_K(\theta)} \right\}
% \end{align*}
% since for the inequality to hold for all $k \in K$, it suffices to minimize $1/\norm{k}$ along each direction $\theta$.

First, let $x \in \RR^\dim$.
Then
\begin{align*}
& \norm{x} \leq r_K(\hat{x}) \\
& \Leftrightarrow x \in K \\
& \Leftrightarrow x \in (K^\polar)^\polar \\
& \Leftrightarrow \inner{x}{y} \leq 1 \; \forall y \in K^\polar \\
& \Leftrightarrow \norm{x} \leq \frac{1}{\inner{\hat{x}}{y}} \; \forall y \in K^\polar \\
& \Leftrightarrow \norm{x} \leq \frac{1}{\max_{k \in K^\polar}\inner{\hat{x}}{k}} = \frac{1}{h_{K^\polar(\hat{x})}}.
\end{align*}

Thus we have $r_K(\hat{x}) = 1/h_{K^\polar}(\hat{x})$ for all $x$.
Finally, note that
\[
r_K(x) = \frac{1}{\norm{x}} r_K(\hat{x}) = \frac{1}{\norm{x} h_{K^\polar}(\hat{x})} = \frac{1}{h_{K^\polar}(x)}.
\]
\end{proof}
\fi

% Combined with Lemma \ref{lemma:cvar-support2}
% \lnote{lemma not used in this argument, what do you mean then? maybe issue with labels?}
Lemma~\ref{lem:polar-radial} implies that testing membership in $K^{\polar}$ %% $(\Gamma X)^{\polar}$
is a one-dimensional problem if we have access to the support function $h_K(\cdot)$:
We can decide if a point $z$ is in  $K^{\polar}$ %% $(\Gamma X)^\polar$
by testing if $\norm{z} \leq 1/h_K({z}/{\norm{z}})$ instead of needing to check $\inner{z}{u} \leq 1$ for all $u \in K$.
In our application $K = \Gamma X$. We can estimate $h_{\Gamma X}({z}/{\norm{z}})$ by taking the empirical average of  
$\abs{\inner{x^{(i)}}{{z}/{\norm{z}}}}$ where the $x^{(i)}$ are samples of $X$. This leads to an approximate oracle for $(\Gamma X)^\polar$
which will suffice for our application. The details are in Sec.~\ref{subsec:membership-centroid-body}. 
%% By definition,
%% estimating whether $x \in (\Gamma X)^\polar$ involves only taking samples of $X$,
%% projecting onto direction $\hat{x}$, taking absolute values, and using the inverse of the empirical mean as an estimate of $r_{(\Gamma X)^\polar}$.
%% If the estimate is greater than $\norm{x}$, it lies inside $(\Gamma X)^{\polar}$.
%% We analyze this idea in detail in the next section.
%\subsection{Computation with the centroid body}

%% By definition, to directly compute the support function of the centroid body for a random variable, one needs the first moment in every direction.
%% However, since we aim to work with heavy-tailed distributions, the first moment may not always exist; furthermore, if it does exist, one may not be able use standard inqualities such as the Chebyshev inequality (which assumes existence of the \emph{second} moment) to show it is efficiently estimated empirically.
%% We present here a slightly more general inequality that will play the role of the Chebyshev inequality for efficiently estimating the mean of a distribution from samples.

\iflong
\section{Mean estimation using \texorpdfstring{$1 + \gamma$}{1+g} moments}
We will need to estimate the support function of the centroid body in various directions. To this end
we need to estimate the first absolute moment of the projection to a direction. Our assumption that 
each component $S_i$ has finite $(1+\gamma)$-moment will allow us to do this with a reasonable small 
probability of error. This is done via the following Chebyshev-type inequality.

\nnote{I think this section, except for the statement of the lemma, should go to the appendix. Not sure yet where the lemma statement
should go.}

Let $X$ be a real-valued symmetric random variable such that $\E \abs{X}^{1+\gamma} \leq M$ for some $M > 1$ and 
$0 < \gamma < 1$. Then we will prove that the empirical average of the expectation of $X$ converges to the expectation of $X$.

Let $\tilde{\E}_N[\abs{X}]$ be the empirical average obtained from $N$ independent samples 
$X^{(1)}, \ldots, X^{(N)}$,
i.e., $(\abs{X^{(1)}}+\dotsb+\abs{X^{(N)}})/N$.

\begin{lemma}\label{lem:1-plus-eps-chebyshev}
Let $\epsilon \in (0,1)$. With the notation above, for 
$N \geq \left(\frac{8M}{\epsilon}\right)^{\frac{1}{2}+\frac{1}{\gamma}}$,
we have
\begin{align*}
\Pr[\abs{\tilde{\E}_N[\abs{X}]-\E[\abs{X}]} > \epsilon] \leq \frac{8M}{\epsilon^2 N^{\gamma/3}}.
\end{align*}
\end{lemma}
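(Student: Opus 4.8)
\emph{Proof plan.} The only property of $X$ that matters here is that $Y := \abs{X}$ is a nonnegative random variable with $\E Y^{1+\gamma}\le M$; the symmetry of $X$ is not used in this lemma. Writing $Y^{(i)} = \abs{X^{(i)}}$, so that $\tilde{\E}_N[\abs{X}] = \frac1N\sum_{i=1}^N Y^{(i)}$ and $\E\abs{X} = \E Y$, I would run a standard truncation argument. Fix a threshold $T>0$ (to be pinned down at the end) and split $Y^{(i)} = Y^{(i)}_{\le} + Y^{(i)}_{>}$ with $Y^{(i)}_{\le} = Y^{(i)}\ind[Y^{(i)}\le T]$ and $Y^{(i)}_{>} = Y^{(i)}\ind[Y^{(i)}> T]$. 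The truncated variables have a finite second moment controlled by $M$ and $T$, so their empirical average concentrates by Chebyshev; the tail variables have small expectation, so their empirical average is small by Markov and they contribute only a small bias $\E Y_{>}$.

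First I would record two elementary moment bounds that follow from $\E Y^{1+\gamma}\le M$ and $0<\gamma<1$: on $\{Y\le T\}$ one has $Y^2\le T^{1-\gamma}Y^{1+\gamma}$, hence $\E[(Y_{\le})^2]\le M T^{1-\gamma}$; on $\{Y> T\}$ one has $Y\le T^{-\gamma}Y^{1+\gamma}$, hence $\E[Y_{>}]\le M T^{-\gamma}$. Then I would decompose
\[
\tilde{\E}_N[\abs{X}] - \E\abs{X} = \Bigl(\tfrac1N\sum_{i=1}^N Y^{(i)}_{\le} - \E Y_{\le}\Bigr) + \Bigl(\tfrac1N\sum_{i=1}^N Y^{(i)}_{>} - \E Y_{>}\Bigr).
\]
The first bracket has variance at most $M T^{1-\gamma}/N$, so by Chebyshev it has absolute value exceeding $\epsilon/2$ with probability at most $4M T^{1-\gamma}/(\epsilon^2 N)$. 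For the second bracket, once $T$ is chosen so that $M T^{-\gamma}\le\epsilon/4$ we have $0\le\E Y_{>}\le\epsilon/4$, so whenever the bracket has absolute value exceeding $\epsilon/2$ we must have $\frac1N\sum_i Y^{(i)}_{>} > \epsilon/4$, an event of probability at most $4M T^{-\gamma}/\epsilon$ by Markov.

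Finally I would take $T = N^{2/(\gamma+2)}$. Using $1/2 + 1/\gamma = (\gamma+2)/(2\gamma)$, the hypothesis $N\ge(8M/\epsilon)^{1/2+1/\gamma}$ gives $N^{2\gamma/(\gamma+2)}\ge 8M/\epsilon\ge 4M/\epsilon$, i.e.\ $M T^{-\gamma}\le\epsilon/4$, so the previous paragraph applies. Substituting this $T$, the Chebyshev term becomes $(4M/\epsilon^2)\,N^{-3\gamma/(\gamma+2)}$ and the Markov term becomes $(4M/\epsilon)\,N^{-2\gamma/(\gamma+2)}$; since $\gamma\in(0,1)$ both exponents $3\gamma/(\gamma+2)$ and $2\gamma/(\gamma+2)$ are at least $\gamma/3$, and since $\epsilon<1$ we have $1/\epsilon\le1/\epsilon^2$, so each term is at most $(4M/\epsilon^2)\,N^{-\gamma/3}$, and a union bound yields $8M/(\epsilon^2 N^{\gamma/3})$. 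The only delicate point is this exponent bookkeeping: the Chebyshev term grows in $T$ while the Markov term shrinks in $T$, so $T$ must be tuned to bring both below the target simultaneously, and the stated lower bound on $N$ is precisely what is needed to make the truncation bias $\E Y_{>}$ negligible compared with $\epsilon$.
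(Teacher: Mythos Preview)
Your proof is correct. Both your argument and the paper's use the same truncation-plus-Chebyshev skeleton, but the implementations differ in two places. First, to control the tail you apply Markov's inequality to the average $\frac{1}{N}\sum_i Y^{(i)}_{>}$, obtaining a term $4MT^{-\gamma}/\epsilon$ that does not grow with $N$; the paper instead takes a union bound on the event $\{\exists i:\ \abs{X^{(i)}}>T\}$, obtaining a term $NM/T^{1+\gamma}$ that grows in $N$ and must then be balanced against the Chebyshev term. Second, you bound the truncated second moment via $Y^2\le T^{1-\gamma}Y^{1+\gamma}$ to get $\E Y_{\le}^2\le MT^{1-\gamma}$, whereas the paper uses $\abs{X_T}^2\le T\abs{X_T}$ together with $\E\abs{X}\le M^{1/(1+\gamma)}$ to get $\E X_T^2\le T M^{1/(1+\gamma)}$. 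Both choices lead, after essentially the same optimization $T\sim N^{2/(\gamma+2)}$, to the stated bound; your route is slightly cleaner because the tail term is monotone in $T$ and independent of $N$, and (as you observe) the symmetry of $X$ is never needed---the paper invokes it in its variance bound but the inequality $\Var\abs{X_T}\le\E X_T^2$ holds regardless.
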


\begin{proof}
Let $T > 1$ be a threshold whose precise value we will choose later. We have
\[ \Pr[\abs{X} \geq T] \leq \frac{\E[\abs{X}^{1+\gamma}]}{T^{1+\gamma}} = \frac{M}{T^{1+\gamma}}. \]
By the union bound,
\begin{align} \label{eqn:union}
\Pr[\exists i \in [N] \mbox{ such that } \abs{X^{(i)}} > T] \leq \frac{NM}{T^{1+\gamma}}.
\end{align}
Define a new random variable $X_T$ by 
\begin{align*}
X_T = \begin{cases} X & \text{if $\abs{X} \leq T$,} \\
                    0 & \text{otherwise.}
\end{cases}
\end{align*}
%Similarly, coresponding to the samples $X^{(i)}$ of $X$ we have samples $X^{(i)}_T$ of $X_T$. 
Using the symmetry of $X_T$ we have
\begin{align}\label{eqn:X_T_upper}
\Var[\abs{X_T}] \leq \E[X_T^2] \leq T \, \E[\abs{X_T}] \leq T\, \E[\abs{X}] \leq T M^{1/(1+\gamma)}. 
\end{align}
By the Chebyshev inequality and \eqref{eqn:X_T_upper} we get 
\begin{align} \label{eqn:chebyshev}
\Pr[\abs{\tilde{\E}_N[\abs{X_T}] - \E[\abs{X_T}]} > \epsilon'] 
\leq \frac{\Var[\abs{X_T}]}{N \epsilon'^2} 
\leq \frac{T M^{1/(1+\gamma)}}{N \epsilon'^2}.
\end{align}
Putting \eqref{eqn:union} and \eqref{eqn:chebyshev} together for $0 < \epsilon' < 1/2$ we get
\begin{align} \label{eqn:union+chebyshev}
  \Pr[\abs{\tilde{E}_N[\abs{X}]-\E[\abs{X_T}]} > \epsilon'] \leq \frac{NM}{T^{1+\gamma}} 
+ \frac{T M^{1/(1+\gamma)}}{N \epsilon'^2}.
\end{align}
Choosing 
\begin{align}\label{eqn:N_T}
N := \frac{T^{1+\gamma/2}}{\epsilon' M^{\gamma/(2(1+\gamma))}}, 
\end{align}
the RHS of the previous equation becomes $\frac{2 M^{1-\gamma/(2(1+\gamma))}}{\epsilon' T^{\gamma/2}}$. The choice of $N$ is made to minimize the RHS; we ignore integrality issues.
Pick $T_0>0$ so that $\abs{\E[\abs{X}]-\E[\abs{X_{T_0}}]} < \epsilon'$. 
To estimate $T_0$, note that
\begin{align*}
M = \E[\abs{X}^{1+\gamma}] \geq T_0^\gamma \, \E[\abs{\abs{X}-\abs{X_{T_0}}}],
\end{align*}
Hence
\begin{align} \label{eqn:mean_X_X_T}
\abs{\E[\abs{X}]-\E[\abs{X_{T_0}}]} \leq \E[\abs{\abs{X} - \abs{X_{T_0}}}] \leq M/T_0^\gamma.
\end{align}
We want $M/T_0^\gamma \leq \epsilon'$ which is equivalent to $T_0 \geq (\frac{M}{\epsilon'})^{1/\gamma}$. 
We set $T_0 := (\frac{M}{\epsilon'})^{1/\gamma}$. 
Then, for $T \geq T_0$ putting together \eqref{eqn:union+chebyshev} and \eqref{eqn:mean_X_X_T} gives
\begin{align*}
\Pr[\abs{\tilde{\E}_N[\abs{X}]-\E[\abs{X}]} > 2\epsilon'] \leq \frac{2 M^{1-\gamma/(2(1+\gamma))}}{\epsilon' T^{\gamma/2}}.
\end{align*}
Setting $\epsilon = 2\epsilon'$ (so that $\epsilon \in (0,1)$) and expressing the RHS of the last 
equation in terms of $N$ via 
\eqref{eqn:N_T}
(and eliminating $T$), 
and using our assumptions $\gamma, \epsilon \in (0,1)$, $M>1$ to get a simpler upper bound, we get
\begin{align*}
\Pr[\abs{\tilde{\E}_N[\abs{X}]-\E[\abs{X}]} > \epsilon] &\leq
\frac{2^{2+\frac{\gamma}{(2+\gamma)}} M^{1-\frac{\gamma}{2(1+\gamma)} + \frac{\gamma^2}{2(2+\gamma)(1+\gamma)}}}
{\epsilon^{1+\frac{\gamma}{2+\gamma}} N^{\frac{\gamma}{2+\gamma}}} \leq \frac{8M}{\epsilon^2 N^{\gamma/3}}.
\end{align*}
Condition $T \geq T_0$, when expressed in terms of $N$ via \eqref{eqn:N_T}, becomes
\begin{align*}
N \geq \frac{2^{\frac{3}{2}+\frac{1}{\gamma}}}{\epsilon^{\frac{1}{2}+\frac{1}{\gamma}}} 
M^{\frac{1}{2} + \frac{1}{\gamma} - \frac{\gamma}{2(1+\gamma)}} \geq \left(\frac{8M}{\epsilon}\right)^{\frac{1}{2}+\frac{1}{\gamma}}.
\end{align*}
\end{proof}

% \end{document}

\fi

\ifextras
\input{direct_membership}
\input{orthogonalization_X_restricted_to_body}
\fi

\section{Membership oracle for the centroid body} \label{subsec:membership-centroid-body}
In this section we provide an efficient weak membership oracle (Subroutine~\ref{sub:cvar-oracle}) for the
centroid body $\Gamma X$ of the r.v. $X$.
This is done by first providing a weak membership oracle (Subroutine~\ref{sub:polar-cvar-oracle}) for the polar body
$(\Gamma X)^\polar$.
% In this section we will mostly work in the case where the sampled random variable $X$ comes from a standard ICA model. \nnote{Is the previous sentence needed?}
%We begin with some useful consequences of the above sections which will simplify the algorithmic assumptions and guarantees.
% \nnote{The notation for balls (both l1 and l2) used below has not been defined}
We begin with a lemma that shows that under certain general conditions the centroid body is 
``well-rounded.'' This property will prove useful in the membership tests.

\begin{lemma}\label{lem:centroid-scaling}
Let $S = (S_1, \dots, S_n) \in \RR^n$ be an absolutely symmetrically distributed random vector such that
% $\e S = 0$\lnote{superfluous?}
$\e (\abs{S_i}) = 1$ for all $i$.
Then $B_{1}^{\dim} \subseteq \Gamma S \subseteq [-1,1]^\dim$.
%\lnote{we may want to state the stronger version with $B_1^n$}
%That is, the centroid body of $S$ contains the standard $\ell_1$-ball and is contained in the unit hypercube.
Moreover, $\dim^{-1/2} B_2^\dim \subseteq (\Gamma S)^\polar \subseteq \sqrt{\dim}B_2^\dim$.
\end{lemma}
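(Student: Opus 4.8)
\emph{Proof plan.} The plan is to reduce the lemma to a pair of pointwise bounds on the support function of $\Gamma S$, namely
\[
\norm{u}_\infty \;\le\; h_{\Gamma S}(u) = \e\bigl(\abs{\inner{u}{S}}\bigr) \;\le\; \norm{u}_1 \qquad \text{for all } u \in \R^\dim,
\]
and then to convert these into the claimed set containments using standard facts about support functions and polarity. First I would recall that for convex bodies $K \subseteq L$ iff $h_K \le h_L$ pointwise, and that $h_{B_1^\dim}(u) = \norm{u}_\infty$ while $h_{[-1,1]^\dim}(u) = \norm{u}_1$; granting the displayed inequalities, this gives $B_1^\dim \subseteq \Gamma S \subseteq [-1,1]^\dim$ at once (we already know $\Gamma S$ is a convex body from Definition~\ref{def:centroid-body2}).

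Next I would establish the two inequalities. The upper bound is just the triangle inequality applied to $\e\bigl\lvert\sum_j u_j S_j\bigr\rvert$, together with $\e\abs{S_j}=1$. For the lower bound I would fix $u$ and a coordinate $i$, introduce independent Rademacher signs $(\epsilon_j)_{j\ne i}$ independent of $S$, and use absolute symmetry of $S$: for every fixed sign pattern, $\inner{u}{S}$ has the same distribution as $u_i S_i + \sum_{j\ne i}\epsilon_j u_j S_j$, so $\e\abs{\inner{u}{S}} = \e_S\e_\epsilon\bigl\lvert u_i S_i + \sum_{j\ne i}\epsilon_j u_j S_j\bigr\rvert$. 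Jensen's inequality in the $\epsilon$'s, with $S$ fixed, bounds the inner expectation below by $\abs{u_i S_i}$, and then $\e_S\abs{u_i S_i} = \abs{u_i}$. Maximizing over $i$ yields $h_{\Gamma S}(u)\ge\norm{u}_\infty$. I expect this symmetrization to be the only step requiring an actual idea; everything else is bookkeeping. It is worth noting that the argument uses absolute symmetry but not mutual independence of the $S_i$.

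Finally, for the ``moreover'' statement I would apply polarity to $B_1^\dim \subseteq \Gamma S \subseteq [-1,1]^\dim$: since $B_1^\dim \subseteq \Gamma S$ places the origin in the interior of $\Gamma S$, the polar $(\Gamma S)^\polar$ is a convex body, and since polarity reverses inclusions with $(B_1^\dim)^\polar = [-1,1]^\dim$ and $([-1,1]^\dim)^\polar = B_1^\dim$, I obtain $B_1^\dim \subseteq (\Gamma S)^\polar \subseteq [-1,1]^\dim$. I would then sandwich with Euclidean balls using $\dim^{-1/2}B_2^\dim \subseteq B_1^\dim$ (from $\norm{x}_1\le\sqrt{\dim}\,\norm{x}_2$) and $[-1,1]^\dim = B_\infty^\dim \subseteq \sqrt{\dim}\,B_2^\dim$ (from $\norm{x}_2\le\sqrt{\dim}\,\norm{x}_\infty$), which closes the proof.
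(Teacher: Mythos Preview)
Your proof is correct. The paper takes a slightly different route for the key inclusion $B_1^\dim \subseteq \Gamma S$: instead of your Rademacher--symmetrization/Jensen argument on the support function, it argues geometrically. From $h_{\Gamma S}(\pm e_i)=\e\abs{S_i}=1$ (which is also how the paper gets $\Gamma S\subseteq[-1,1]^\dim$, since containment in the cube only needs to be checked at the $\pm e_i$), compactness of $\Gamma S$ yields a point $p\in\Gamma S$ with $p_1=1$. Absolute symmetry of $S$ makes $\Gamma S$ itself absolutely symmetric (via linear equivariance under sign matrices), so every sign-flip of $p$ lies in $\Gamma S$, and averaging them by convexity puts $e_1$ into $\Gamma S$; likewise for all $\pm e_i$, whence $B_1^\dim=\conv\{\pm e_i\}\subseteq\Gamma S$. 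Your analytic route delivers the sharper pointwise bound $h_{\Gamma S}(u)\ge\norm{u}_\infty$ in one stroke and is arguably cleaner; the paper's geometric argument has the small side benefit of making explicit that $\Gamma S$ is absolutely symmetric, a fact used elsewhere. The derivation of the polar containments is essentially the same in both proofs.
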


\begin{proof}
The support function of $\Gamma S$ is $h_{\Gamma S}(\theta) = \e \abs{\inner{S}{\theta}}$ (Def.~\ref{def:centroid-body2}).
% $\e( \inner{S}{\theta} | \inner{S}{\theta} \geq 0)$ for all $\theta \in \S^{\dim -1}$.
% \nnote{why is the first version of the support given when only the second one is needed?} \jnote{Need proof, continuity?}
Then, for each canonical vector $e_i$, $h_{\Gamma S}(e_i) = h_{\Gamma S}(-e_i) =  \E\abs{S_i} = 1$.
%\nnote{Remind the reader how we know that the centroid body is convex. Say something like ``by the remark after Definition 11''}
Thus, $\Gamma S$ is contained in $[-1,1]^\dim$. 
Moreover, since $[-1,1]^\dim \subseteq \sqrt{\dim} B_2^\dim$, we get $\Gamma S \subseteq \sqrt{\dim} B_2^\dim$.

% As mentioned, each canonical vector $e_i$ is contained in $B$. \lnote{why?}
We claim now that each canonical vector $e_i$ is contained in $\Gamma S$.
To see why, first note that since the support function is $1$ along each canonical direction, $\Gamma S$ will touch the facets of the unit hypercube.
Say, there is a point $(1, x_2, x_3, \dots, x_n)$ that touches facet associated to canonical vector $e_1$.
But the symmetry of the $S_i$s implies that $\Gamma S$ is absolutely symmetric, so that $(1,\pm x_2, \pm x_3, ..., \pm x_n)$ is also in the centroid body.
Convexity implies that $(1,0,\dots,0) = e_1$ is in the centroid body.
The same argument applied to all $\pm$ canonical vectors implies that they are all contained in the centroid body, and this with convexity implies that the centroid body contains $B_1^\dim$.
In particular, it contains $n^{-1/2} B_2^\dim$.
%, the Euclidean ball of radius $1/\sqrt{\dim}$.
%The second conclusion follows because $A \subseteq B \implies B^\polar \subseteq A^\polar$.
\end{proof}

% Since $\Gamma X$ is convex, we have $x \in \Gamma X$ iff $\langle x, b \rangle \leq 1 \; \forall b \in (\Gamma X)^{\polar}$. This equivalence is the basis of a membership test for
%generating samples from
% $\Gamma X$, which in turn is used to estimate the covariance matrix of $X$ restricted to $\Gamma X$.
% \nnote{covariance matrix of which distribution on Gamma X?}
% The test can be implemented via the Ellipsoid algorithm. In order to use that algorithm, we describe a weak membership oracle for $(\Gamma X)^\polar$ in Subroutine~\ref{sub:polar-cvar-oracle}.
% This process is captured by Subroutine~\ref{sub:cvar-oracle}:

\subsection{Membership oracle for the polar of the centroid body}
\nnote{This section doesn't seem to need symmetry. Or does it?}
As mentioned before, our membership oracle for $(\Gamma X)^\polar$ (Subroutine~\ref{sub:polar-cvar-oracle}) 
is based on the fact that $1/h_{\Gamma X}$ is the radial function of $(\Gamma X)^\polar$, and that $h_{\Gamma X}$ is the directional absolute first moment of $X$, which can be efficiently estimated by sampling. 
\begin{subroutine}[ht]
\caption{Weak Membership Oracle for $(\Gamma X)^{\polar}$}
%\lnote{I think we want to simplify this to the symmetric case. Then $T$ is not needed.}
\label{sub:polar-cvar-oracle}
\begin{algorithmic}[1]
\Require
Query point $y \in \QQ^\dim$, samples from symmetric ICA model $X = AS$,
bounds $s_M \geq \sigma_{\max}(A)$, $s_m \leq \sigma_{\min}(A)$,
closeness parameter $\eps$, failure probability $\delta$.
\Ensure Weak membership decision for $y \in (\Gamma X)^{\polar}$.
\State Generate iid samples $x^{(1)}, x^{(2)}, \dots, x^{(N)}$ of $X$ for $N = \poly_{\gamma}(n, M, 1/s_m, s_M, 1/\eps, 1/\delta)$.
% \State Use grid search along the direction of $x$ to estimate $\tilde{f_\alpha}$ of $f_{\alpha}(\hat{x})$.
% \State Let $T = \{s_i \suchthat \inner{\hat y}{s_i} \geq 0, i = t+1, \dots, 2t \}$.
% \lnote{ cannot be  called S}
\State Compute 
\[ 
\tilde{h} = \frac{1}{N} \sum_{i=1}^N \abs{\inner{x^{(i)}}{\frac{y}{\norm{y}}}}.
\]
\State If $\norm{y} \leq 1/\tilde{h}$, report $y$ as feasible. Otherwise, report $y$ as infeasible.
% \If{$\norm{y} \leq 1/\tilde{h}$}
% \State Report $y$ as feasible.
% \Else
% \State Report $y$ as infeasible.
% \EndIf
\end{algorithmic}
\end{subroutine}

\begin{lemma}[Correctness of Subroutine~\ref{sub:polar-cvar-oracle}]
Let $\gamma > 0$ be a constant and $X=AS$ be given by a symmetric ICA model such that for all $i$ we have $\e (\abs{S_i}^{1+\gamma}) \leq M < \infty$ and normalized so that $\e \abs{S_i} = 1$.
Let $\eps, \delta > 0$.
%\lnote{fix proof to use this}
Given $s_M \geq \sigma_{\max}(A)$ , $s_m \leq \sigma_{\min}(A)$,
%If $M = \e \abs{X}^{1 + \gamma} < \infty$, then
Subroutine~\ref{sub:polar-cvar-oracle} is an $(\eps, \delta)$-weak membership oracle for $(\Gamma X)^{\polar}$ with
using time and sample complexity
$\poly_{\gamma}(n, M, 1/s_m, s_M, 1/\eps, 1/\delta)$. The degree of the polynomial is $O(1/\gamma)$.
\nnote{Revisit the dependence on gamma. Propagate this dependence to other things.}
% \nnote{I changed the eps-weak membership oracle to (eps, delta)-weak.}
\end{lemma}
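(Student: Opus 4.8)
The goal is to show that Subroutine~\ref{sub:polar-cvar-oracle} correctly solves the $(\eps,\delta)$-weak membership problem for $(\Gamma X)^\polar$. The plan is to reduce the multidimensional membership question to a one-dimensional moment-estimation question using Lemma~\ref{lem:polar-radial}, and then control the estimation error using the Chebyshev-type bound of Lemma~\ref{lem:1-plus-eps-chebyshev}. First I would recall that, by Lemma~\ref{lem:polar-radial}, a point $y\neq 0$ lies in $(\Gamma X)^\polar$ if and only if $\norm{y}\le r_{(\Gamma X)^\polar}(y/\norm{y}) = 1/h_{\Gamma X}(y/\norm{y})$, i.e. iff $\norm{y}\, h_{\Gamma X}(y/\norm{y}) \le 1$; and $h_{\Gamma X}(\theta) = \e\abs{\inner{X}{\theta}}$ by Definition~\ref{def:centroid-body2}. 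So the exact membership oracle would compute $h := h_{\Gamma X}(y/\norm{y})$ and report feasible iff $\norm{y}\le 1/h$. The subroutine does exactly this with $h$ replaced by the empirical estimate $\tilde h = \frac1N\sum_i \abs{\inner{x^{(i)}}{y/\norm{y}}}$.

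The main quantitative step is to show that with probability $\ge 1-\delta$ we have $\abs{\tilde h - h}$ small enough that the test based on $\tilde h$ only errs inside the ``ambiguous shell'' $S((\Gamma X)^\polar,\eps)\setminus S((\Gamma X)^\polar,-\eps)$. Here I would proceed as follows. Write $\theta = y/\norm{y}$ and $Z := \inner{X}{\theta} = \inner{A^T\theta}{S} = \sum_i (A^T\theta)_i S_i$. Each $S_i$ is symmetric with $\e\abs{S_i}^{1+\gamma}\le M$; I would bound $\e\abs{Z}^{1+\gamma}$ in terms of $M$, $n$, and $\norm{A^T\theta}$ (using, e.g., convexity/Jensen or a standard $\ell_{1+\gamma}$ moment inequality for sums of independent symmetric variables, together with $\norm{A^T\theta}\le s_M$), obtaining $\e\abs{Z}^{1+\gamma}\le M' := \poly(n,M,s_M)$. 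Since $Z$ is symmetric, Lemma~\ref{lem:1-plus-eps-chebyshev} applies to $\abs{Z}$: taking $N = \poly_\gamma(n,M,1/s_m,s_M,1/\eps,1/\delta)$ large enough (with exponent $O(1/\gamma)$ coming from the $1/2+1/\gamma$ exponent in that lemma), we get $\abs{\tilde h - h}\le \eps'$ with probability $\ge 1-\delta$, where $\eps'$ is a suitably small multiple of $\eps$ to be fixed below.

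Finally I would translate the estimate $\abs{\tilde h - h}\le\eps'$ into the weak-membership guarantee. Using Lemma~\ref{lem:centroid-scaling} (applied after the equivariance Lemma~\ref{lem:equivariance}, noting $X = AS$ so $\Gamma X = A\,\Gamma S$ with $\Gamma S$ sandwiched between $B_1^n$ and $[-1,1]^n$), one controls $h = h_{\Gamma X}(\theta)$ from above and below by quantities $\poly(s_M), 1/\poly(n,s_m)$, so that $\Gamma X$ and hence $(\Gamma X)^\polar$ contain and are contained in explicit balls whose radii depend only on $s_m,s_M,n$. This lets me convert additive error in $h$ into additive error in the radius $1/h$, and thence into the $\pm\eps$ shell: if $\norm{y}\le 1/\tilde h$ but $y\notin S((\Gamma X)^\polar,\eps)$, then $\norm{y}$ exceeds the true radius by more than $\eps$, which (given the lower bound on $h$ and a small enough choice of $\eps'$ relative to $\eps$ and the geometric bounds) contradicts $\abs{\tilde h - h}\le\eps'$; symmetrically for the reverse inclusion. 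Hence the reported decision is a valid $\eps$-weak membership answer with probability $\ge 1-\delta$, and the overall time and sample complexity is that of drawing and processing $N$ samples, namely $\poly_\gamma(n,M,1/s_m,s_M,1/\eps,1/\delta)$ with degree $O(1/\gamma)$. The main obstacle I anticipate is the bookkeeping in this last step: choosing $\eps'$ as the right function of $\eps$ and the roundedness parameters so that the empirical test never errs outside the ambiguous shell, and cleanly handling the map $h\mapsto 1/h$ near its extreme values — but since $h$ is bounded away from $0$ and $\infty$ by Lemma~\ref{lem:centroid-scaling}, this is routine.
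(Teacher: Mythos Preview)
Your proposal is correct and follows essentially the same approach as the paper: reduce membership in $(\Gamma X)^\polar$ to estimating the radial function $1/h_{\Gamma X}(\theta)$ via Lemma~\ref{lem:polar-radial}, bound $\e\abs{\inner{X}{\theta}}^{1+\gamma}$ in terms of $M,n,s_M$, apply Lemma~\ref{lem:1-plus-eps-chebyshev} to control $\abs{\tilde h-h}$, and then use the lower bound $h\ge s_m/\sqrt{n}$ from Lemmas~\ref{lem:centroid-scaling} and~\ref{lem:equivariance} to convert additive error in $h$ into additive error in $1/h$. The paper carries out exactly these steps, with the explicit moment bound $\e\abs{\inner{X}{\theta}}^{1+\gamma}\le Mn\,s_M^{1+\gamma}$ (via $\e\abs{\sum_i u_i S_i}^{1+\gamma}\le\sum_i\abs{u_i}^{1+\gamma}\e\abs{S_i}^{1+\gamma}$ for independent symmetric summands and $1+\gamma\le 2$) and the explicit choice $\eps_1=\min\{r^2\eps/2,\,r/2\}$ with $r=s_m/\sqrt{n}$, which is precisely the ``routine bookkeeping'' you anticipate at the end.
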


\begin{proof}
%Suppose $\Gamma X$ contains the ball centered at the origin of radius $r$ and is contained in the ball centered at the origin of radius $R$.
Recall from Def.~\ref{def:eps-weak-oracle} that we need to show that, with probability at least $1-\delta$, Subroutine~\ref{sub:polar-cvar-oracle} outputs TRUE when $y \in S((\Gamma X)^{\polar}, \eps)$ and FALSE when $y \not \in S((\Gamma X)^{\polar}, -\eps)$; otherwise, the output can be either TRUE or FALSE arbitrarily.

Fix a point $y \in \QQ^\dim$ and let $\theta := y/\norm{y}$ denote the direction of $y$.
The algorithm estimates the radial function of $(\Gamma X)^{\polar}$ along $\theta$, which is $1/h_{\Gamma X}(\theta)$ (see Lemma~\ref{lem:polar-radial}).
In the following computation, we simplify the notation by using $h = h_{\Gamma X}(\theta)$.
It is enough to show that with probability at least $1-\delta$ the algorithm's estimate, $1/\tilde h$, of the radial function is within $\eps$ of the true value, $1/h$.
\details{Suppose $y$ is in the inner parallel body. Then $1/h = r_{\centroid X}(\theta) \geq \norm{y} + \eps$. Suppose $y$ is not in the outer parallel body. Then $r_{\centroid X}(\theta) +\eps < \norm{y}$.}

% Construct the random variable
% \[
% Z := \frac{\langle X, \theta \rangle \ind(\langle X, \theta \rangle \geq 0) }{P(\langle X, \theta \rangle \geq 0)}.
% \]
% The estimation of $h = \e Z$ by averaging independent samples $Z_i \sim Z$ is then accomplished by using a Chebyshev-like inequality.
For $X^{(1)}, \dots, X^{(N)}$, i.i.d. copies of $X$, the empirical estimator for $h$ is
% \tilde{h} = \frac{1}{N} \sum_{i} Z_i = \frac{2}{N} \sum_{i} \langle X_i, \theta \rangle \ind(\langle X_i, \theta \rangle \geq 0).
$\tilde{h} := \frac{1}{N} \sum_{i=1}^{N} \abs{\inner{X^{(i)}}{\theta}}$.
% where $1/2 = P(\langle X, \theta \rangle \geq 0)$ by continuity.
%This gives $\e \tilde{h} = h$.
% and
% \[
% \Var(\tilde{h}) = \e (\tilde{h} - h)^2 = \frac{\e (Z - h)^2}{N} = \frac{\Var(Z)}{N}.
% \]

We want to apply Lemma~\ref{lem:1-plus-eps-chebyshev} to $\inner{X}{\theta}$. For this we need a bound on its 
$(1+\gamma)$-moment. 
The following simple bound is sufficient for our purposes: Let $u = A^T \theta$. Then $\abs{u_i} \leq \sigma_{\max}(A)$ for all $i$. Then
\begin{equation}\label{eq:momentX}
\begin{aligned}
\e \lrabs{\inner{X}{\theta}}^{1+\gamma} 
&= \e \lrabs{\theta^T A S}^{1+\gamma} 
= \e \lrabs{\inner{S}{u}}^{1+\gamma} 
%&\leq \norm{A}^{1+\gamma} \e \abs{\inner{S}{\hat u} \\
= \e \abs{\sum_{i=1}^\dim S_i u_i}^{1+\gamma} 
\leq \sum_{i=1}^\dim \e \lrabs{S_i u_i}^{1+\gamma} \\
&= \sum_{i=1}^\dim \e \lrabs{S_i}^{1+\gamma} \lrabs{u_i}^{1+\gamma} 
\leq M \sum_{i=1}^\dim \lrabs{u_i}^{1+\gamma} 
\leq M \dim \sigma_{\max}(A)^{1+\gamma} \leq M \dim s_M^{1+\gamma}.
\end{aligned}
\end{equation}
Lemma~\ref{lem:1-plus-eps-chebyshev} implies that, for $\eps_1>0$ to be fixed later, and for
\begin{align} \label{eqn:N-lower-bound}
N > \left(\frac{8 M \dim s_M^{1+\gamma}}{\eps_1^2 \delta}\right)^{3/\gamma}, 
%%\geq \left(\frac{8 M \dim \sigma_{\max}(A)^{1+\gamma}}{\eps_1^2 \delta}\right)^{3/\gamma},
\end{align}
we have $P(|\tilde{h} - h| > \eps_1) \leq \delta$.

\nnote{I think some more details should be given above.}

Lemmas~\ref{lem:centroid-scaling} and \ref{lem:equivariance} give that $r B_2^\dim \subseteq \Gamma X$ for $r := s_m/\sqrt{\dim} \leq \sigma_{\min}(A)/\sqrt{\dim}$.
It follows that $h \geq r$. If $|\tilde{h} - h| \leq \eps_1$ and $\eps_1 \leq r/2$, then we have
\[
\left\lvert\frac{1}{h} - \frac{1}{\tilde h} \right\rvert= \frac{\abs{h - \tilde h}}{h \tilde h} \leq \frac{\eps_1}{r (r-\eps_1)} \leq \frac{2 \eps_1}{r^2},
\]
which in turn gives, when $\eps_1 = \min \{r^2 \eps/2, r/2\}$,
\[
P\left(\left|\frac{1}{\tilde{h}} - \frac{1}{h}\right| \leq \eps\right)
\geq P\left(\left|\frac{1}{\tilde{h}} - \frac{1}{h}\right| \leq \frac{2\eps_1}{r^2}\right)
\geq P(|\tilde{h} - h| \leq \eps_1) \geq 1 - \delta.
\]
\nnote{I don't follow ``details'' below and also some of the main text which I moved to ``details'' and I don't see the need for it. If you agree then ``details'' below can be removed.}
\details{To conclude we can assume that $\eps$ is at most half of the circumradius of $(\Gamma X)^\polar$. The circumradius is at most $1/r$. 
Thus, $r^2 \eps/2 \leq r/2$ and it is enough to take $\eps_1 = r^2 \eps /2 \leq \sigma_{\min}(A)^2\eps/(4\dim)$}
\details{Thus, it is enough to take $\eps_1 \leq \min\{ s_m^2\eps/(4\dim) , s_m/(2\sqrt{\dim}) \} \leq \min\{ \sigma_{\min}(A)^2\eps/(4\dim) , \sigma_{\min}(A)/(2\sqrt{\dim}) \}$} Plugging in the value of 
$\epsilon_1$ and, in turn of $r$, into \eqref{eqn:N-lower-bound} gives that it suffices to take 
\[N > \poly_\gamma(n, M, 1/s_m, s_M, 1/\eps, 1/\delta). %% \details{ > \poly(n, M, 1/\sigma_{\min}(A), \sigma_{\max}(A), 1/\eps, 1/\delta)}.
\]
%
%Recall that by the assumptions on the absolute moments of $S$, and using Lemma~\ref{lem:centroid-scaling}, $r \geq \sigma_{\min}(A)/\sqrt{n}$ and $R \leq \sqrt{n} \sigma_{\max}(A)$.
%Working with the polar, however, inverts these parameters, completing the proof.
\end{proof}

\subsection{Membership oracle for the centroid body}  
We now describe how the weak membership oracle for the centroid body $\Gamma X$ is constructed using 
the weak membership oracle for $(\Gamma X)^\polar$, provided by Subroutine~\ref{sub:polar-cvar-oracle}. 

We will use the following notation: For a convex body $K \in \R^n$, $\eps, \delta >0$, $R \geq r > 0$ such that 
$r B_2^n \subseteq K \subseteq R B_2^n$, oracle 
$\mathsf{WMEM}_K(\epsilon, \delta, R, r)$ is an $(\epsilon, \delta)$-weak membership oracle for $K$. 
Similarly, oracle $\mathsf{WVAL}_K(\epsilon, \delta, R, r)$ is an $(\epsilon, \delta)$-weak validity oracle.
%% Using results from \cite{GLS}, we use a weak membership oracle for $(\Gamma X)^{\polar}$ to construct a weak membership oracle for $\Gamma X$ by first constructing a weak validity oracle.
%%The parameters for the weak violation oracle come immediately from previously established properties.
Lemma~\ref{lem:centroid-scaling} along with the equivariance of $\Gamma$ (Lemma~\ref{lem:equivariance}) 
gives $(s_m/\sqrt{n}) B_2^\dim \subseteq \Gamma X \subseteq (s_M \sqrt{n}) B_2^\dim$.
Then $1/(\sqrt{\dim}s_M) B_2^\dim \subseteq (\Gamma X)^{\polar} \subseteq (\sqrt{\dim}/s_m) B_2^\dim$.
Set $r := 1/(\sqrt{\dim}s_M)$ and $R := \sqrt{\dim}/s_m)$.

\myparagraph{Detailed description of Subroutine~\ref{sub:cvar-oracle}.}
There are two main steps:
\begin{enumerate}
\item \label{step1} Use Subroutine~\ref{sub:polar-cvar-oracle} to create an $(\epsilon_2, \delta)$-weak membership oracle
$\mathsf{WMEM}_{(\Gamma X)^\polar}(\epsilon_2, \delta, R, r)$ for $(\Gamma X)^\polar$. 
Theorem~4.3.2 of \cite{GLS} (stated as Theorem~\ref{thm:wmem-to-wviol} here) is used in Lemma~\ref{lem:subroutine-2-correctness} to get an algorithm to implement an $(\epsilon_1, \delta)$-weak validity oracle $\mathsf{WVAL}_{(\Gamma X)^\polar}(\epsilon_1, \delta, R, r)$ running in
oracle polynomial time; $\mathsf{WVAL}_{(\Gamma X)^\polar} \allowbreak (\epsilon_1, \delta, R, r)$ invokes
$\mathsf{WMEM}_{(\Gamma X)^\polar}(\epsilon_2, \delta/Q, R, r)$ a polynomial number of times, specifically $Q=\poly(n, \log R)$ (see proof of Lemma~\ref{lem:subroutine-2-correctness}). 
The proof of Theorem~4.3.2 can be modified so that $\epsilon_2 \geq 1/\poly(1/\epsilon_1, R, 1/r)$. %%We set $\delta_2 :=\delta/Q$.

\item \label{step2} Lemma~4.4.1 of \cite{GLS} (stated as Lemma~\ref{lem:wviol-polar-wmem} here) gives an algorithm to construct an $(\epsilon, \delta)$-weak membership oracle
$\mathsf{WMEM}_{\Gamma X}(\epsilon, \delta, 1/r, 1/R)$ from $\mathsf{WVAL}_{(\Gamma X)^\polar}(\epsilon_1, \delta, R, r)$.
The proof of  Lemma~4.4.1 in \cite{GLS} shows
$\mathsf{WMEM}_{\Gamma X}(\epsilon, \delta, 1/r, 1/R)$ calls $\mathsf{WVAL}_{(\Gamma X)^\polar}(\epsilon_1, \delta, R, r)$ once, with
$\epsilon_1 \geq 1/ \poly(1/\epsilon, \norm{y}, 1/r)$ (where $y$ is the query point).
\details{Specifically, given $\eps$ and candidate point $y$, it is sufficient to call WVAL with $\eps_1 = \frac{1}{2} \frac{r \eps}{\norm{y} + r \norm{y} + r \eps}$}
\end{enumerate}

\begin{subroutine}[ht]
\caption{Weak Membership Oracle for $\Gamma X$}\label{sub:cvar-oracle}
\begin{algorithmic}[1]
\Require Query point $x \in \RR^\dim$,
samples from symmetric ICA model $X = AS$,
bounds $s_M \geq \sigma_{\max}(A)$, $s_m \leq \sigma_{\min}(A)$,
closeness parameter $\eps$,
failure probability $\delta$, 
access to a weak membership oracle for $(\Gamma X)^{\polar}$.
\Ensure $(\epsilon, \delta)$-weak membership decision for $x \in \Gamma X$.

\State Construct $\mathsf{WVAL}_{(\Gamma X)^\polar}(\epsilon_1, \delta, R, r)$ by invoking
$\mathsf{WMEM}_{(\Gamma X)^\polar}(\epsilon_2, \delta/Q, R, r)$. (See Step~\ref{step1} in the detailed description.)

\State Construct $\mathsf{WMEM}_{\Gamma X}(\epsilon, \delta, 1/r, 1/R)$ by invoking 
$\mathsf{WVAL}_{(\Gamma X)^\polar}(\epsilon_1, \delta, R, r)$. (See Step~\ref{step2} in the detailed description.)

\State Return the output of running $\mathsf{WMEM}_{\Gamma X}(\epsilon, \delta, 1/r, 1/R)$ on $x$. 
\end{algorithmic}
\end{subroutine}

\begin{lemma}[Correctness of Subroutine~\ref{sub:cvar-oracle}]\label{lem:subroutine-2-correctness}
Let $X=AS$ be given by a symmetric ICA model such that for all $i$ we have $\e (\abs{S_i}^{1+\gamma}) \leq M < \infty$ and normalized so that $\e \abs{S_i} = 1$.
Then, given a query point $x \in \RR^\dim$, $0<\eps \leq n^2$, $\delta > 0$, $s_M \geq \sigma_{\max}(A)$, and $s_m \leq \sigma_{\min}(A)$, Subroutine~\ref{sub:cvar-oracle} is an $\eps$-weak membership oracle for $x$ and $\Gamma X$ with probability $1-\delta$ using time and sample complexity
\(
\poly(n, M, 1/s_m, s_M, 1/\eps, 1/\delta).
\)\lnote{query time should also depend on query}
% Moreover, $s_m /\sqrt{\dim} B_2^\dim \subseteq \Gamma X \subseteq s_M \sqrt{\dim} B_2^\dim$.
\end{lemma}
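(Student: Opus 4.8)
The plan is to verify that the three-step reduction preceding the subroutine produces a correct and efficient oracle, and to track how the accuracy and failure probabilities compose through it.

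First I would pin down the geometry. By Lemma~\ref{lem:centroid-scaling} applied to $S$ together with the equivariance of $\Gamma$ (Lemma~\ref{lem:equivariance}), $\Gamma X = A\,\Gamma S$ satisfies $(s_m/\sqrt{\dim})B_2^\dim \subseteq \Gamma X \subseteq (s_M\sqrt{\dim})B_2^\dim$, and taking polars gives $(1/(s_M\sqrt{\dim}))B_2^\dim \subseteq (\Gamma X)^\polar \subseteq (\sqrt{\dim}/s_m)B_2^\dim$. This yields the explicit $r = 1/(\sqrt{\dim}s_M)$ and $R = \sqrt{\dim}/s_m$ fed to the GLS machinery: both $\Gamma X$ and $(\Gamma X)^\polar$ are sandwiched between origin-centered balls whose radii are $\poly(\dim, s_M, 1/s_m)$, so the encoding lengths appearing in Theorem~\ref{thm:wmem-to-wviol} and Lemma~\ref{lem:wviol-polar-wmem} are polynomially bounded.

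Next I would chain the accuracy parameters backwards from the target, an $(\epsilon,\delta)$-weak membership oracle for $\Gamma X$. Step~\ref{step2} (Lemma~\ref{lem:wviol-polar-wmem}) reduces this, for a given query $x$, to a \emph{single} call of a weak validity oracle for $(\Gamma X)^\polar$ with accuracy $\epsilon_1 \geq 1/\poly(1/\epsilon, \norm{x}, 1/r)$. Step~\ref{step1} (Theorem~\ref{thm:wmem-to-wviol}) reduces that validity oracle, also run once, to $Q = \poly(\dim, \log R)$ calls of a weak membership oracle for $(\Gamma X)^\polar$ with accuracy $\epsilon_2 \geq 1/\poly(1/\epsilon_1, R, 1/r)$, each answered by Subroutine~\ref{sub:polar-cvar-oracle} invoked with closeness parameter $\epsilon_2$ and failure probability $\delta/Q$. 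Composing the two bounds, $\epsilon_2 \geq 1/\poly(\dim, s_M, 1/s_m, 1/\epsilon, \norm{x})$, so the correctness lemma for Subroutine~\ref{sub:polar-cvar-oracle} gives that each invocation uses $\poly_\gamma(\dim, M, 1/s_m, s_M, 1/\epsilon_2, Q/\delta)$ time and samples and, on its own success event, correctly solves the $\epsilon_2$-weak membership problem for $(\Gamma X)^\polar$. Substituting the bounds on $\epsilon_2$ and $Q$, and using $\epsilon \le \dim^2$ and $\norm{x} \le R$ on the relevant query range to keep exponents under control, this is $\poly(\dim, M, 1/s_m, s_M, 1/\epsilon, 1/\delta)$ per invocation; since there are $Q$ invocations of Subroutine~\ref{sub:polar-cvar-oracle} plus the oracle-polynomial overhead of the two GLS algorithms, the total time and sample complexity is $\poly(\dim, M, 1/s_m, s_M, 1/\epsilon, 1/\delta)$.

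Finally I would handle the failure probability: the only randomness in Subroutine~\ref{sub:cvar-oracle} is in the $Q$ calls to Subroutine~\ref{sub:polar-cvar-oracle}, each of which fails with probability at most $\delta/Q$, so by a union bound with probability at least $1-\delta$ all of them answer their $\epsilon_2$-weak membership queries correctly; on that event the GLS reductions are deterministic, so the output is a correct answer to the $\epsilon$-weak membership problem for $x$ and $\Gamma X$. The main obstacle I expect is the middle paragraph — specifically justifying that the accuracy parameters degrade only polynomially through the GLS reductions (that the proof of Theorem~4.3.2 can be executed with $\epsilon_2$ inverse-polynomial in $1/\epsilon_1, R, 1/r$ rather than exponentially small, and likewise for Lemma~4.4.1), and doing the bookkeeping in the query norm $\norm{x}$ so that the final bound is genuinely of the claimed form on the query range where the oracle is actually used. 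The geometric setup and the union bound are routine.
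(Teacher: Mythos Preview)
Your proposal is correct and follows the same route as the paper: sandwich $\Gamma X$ and its polar between explicit balls, chain the accuracy parameters backward through the two GLS reductions, and union-bound over the $Q$ inner calls to Subroutine~\ref{sub:polar-cvar-oracle}. The one point the paper makes explicit that your last paragraph glosses over is the \emph{adaptivity} of those $Q$ calls: the $i$th query point is a (random) function of the outcomes of the previous calls, so ``each fails with probability at most $\delta/Q$'' is not quite immediate from the per-call guarantee. The paper fixes this by conditioning on $q_1,\dots,q_{i-1}$, which makes $q_i$ deterministic so the guarantee of Subroutine~\ref{sub:polar-cvar-oracle} applies, and then integrates out the conditioning to get $\Pr(B_i)\le\delta/Q$ unconditionally; after that the union bound is legitimate. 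It is a two-line argument, but worth writing down.
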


% \nnote{The upper bound on epsilon above looks peculiar. Why is it not just 1?}

\begin{proof}
% By Lemma~\ref{lem:subroutine-1-correctness} and Theorem~\ref{thm:wmem-to-wviol}, there is a weak-violation oracle for $(\Gamma X)^\polar$, and hence a weak validity oracle.
% Lemma~\ref{lem:wviol-polar-wmem} then implies there is an $\eps$-weak membership oracle for $((\Gamma X)^{\polar})^{\polar} = \Gamma X$.
%% Using results from \cite{GLS}, we use a weak membership oracle for $(\Gamma X)^{\polar}$ to construct a weak membership oracle for $\Gamma X$ by first constructing a weak violation oracle which 
%% is also a weak validity oracle (see Defs. \ref{def:wviol} and \ref{def:wval}).
%% The parameters for the weak violation oracle come immediately from previously established properties.
%% Lemma~\ref{lem:centroid-scaling} gives $(s_m/\sqrt{n}) B_2^\dim \subseteq \Gamma X \subseteq (s_M \sqrt{n}) B_2^\dim$.
%% Then $1/(\sqrt{\dim}s_M) B_2^\dim \subseteq (\Gamma X)^{\polar} \subseteq (\sqrt{\dim}/s_m) B_2^\dim$.
% \lnote{wrong order}
% with parameters $R = s_M$ and $r = s_m$. \lnote{?}
% choose parameters for the calls to Subroutine~\ref{sub:polar-cvar-oracle}.
We first prove that $\mathsf{WVAL}_{(\Gamma X)^\polar}(\epsilon_1, \delta, R, r)$ 
(abbreviated to $\mathsf{WVAL}_{(\Gamma X)^\polar}$ hereafter) works correctly. To this end
we need to show that for any given input, $\mathsf{WVAL}_{(\Gamma X)^\polar}$ acts as an $\epsilon_1$-weak validity
oracle with probability at least $1-\delta$. Oracle $\mathsf{WVAL}_{(\Gamma X)^\polar}$ makes $Q$ queries 
to $\mathsf{WMEM}_{(\Gamma X)^\polar}(\epsilon_2, \delta/Q, R, r)$. If the answer to all these queries were correct then Theorem~4.3.2 from \cite{GLS} would apply
and would give that $\mathsf{WVAL}_{(\Gamma X)^\polar}$ outputs an answer as expected. Since these $Q$ queries are adaptive we
cannot directly apply the union bound to say that the probability of all of them being correct is at least
$1 - Q (\delta/Q) = 1-\delta$. However, a more careful bound allows us to do essentially that.

% We need only choose failure probability $\delta' = \delta/Q$ where $Q$ is the polynomial number of calls needed to the membership oracle of $(\Gamma X)^{\polar}$.
% That is, for each query to the violation oracle for $(\Gamma X)^{\polar}$, there will be polynomially many queries to Subroutine~\ref{sub:polar-cvar-oracle}, as determined by the proof of Theorem~4.3.2 in \cite{GLS}.
% That is, for each query to the violation oracle for $(\Gamma X)^{\polar}$, there will be polynomially many queries to Subroutine~\ref{sub:polar-cvar-oracle}.
% A union bound on failure probability over the randomness of the given membership oracle gives that (on a single query) Subroutine~\ref{sub:polar-cvar-oracle} is an $\eps$-weak oracle with probability at least $1-\delta$, completing the proof.

% \details{
Let $q_1, \dotsc, q_k$ be the sequence of queries, where $q_i$ depends on the result of the previous queries.
For $i=1, \dotsc, k$, let $B_i$ be the event that the answer to query $q_i$ by Subroutine~\ref{sub:polar-cvar-oracle} is not correct according to the definition of the oracle it implements. 
These events are over the randomness of Subroutine~\ref{sub:polar-cvar-oracle} and event $B_i$ involves the randomness of $q_1, \dotsc, q_i$, as the queries could be adaptively chosen.
By the union bound, the probability that all answers are correct is at least $1-\sum_{i=1}^k \Pr(B_i)$. 
It is enough to show that $\Pr(B_i) \leq \delta/Q$.
To see this, we can condition on the randomness associated to $q_1, \dotsc q_{i-1}$. That makes $q_i$ deterministic, and the probability of failure is now just the probability that Subroutine~\ref{sub:polar-cvar-oracle} fails.
More precisely, $\Pr(B_i \giventhat q_1, \dotsc, q_{i-1}) \leq \delta/Q$, so that
\begin{align*}
\Pr(B_i) &= \int \Pr(B_i \giventhat q_1, \dotsc, q_{i-1}) \Pr(q_1, \dotsc, q_{i-1}) \, dq_1, \dotsc, dq_{i-1} \leq \delta/Q.
\end{align*}

This proves that the first step works correctly. Correctness of the second step follows directly because the
algorithm for construction of the oracle involves a single call to the input oracle as mentioned in 
Step~\ref{step2} of the detailed description. 

Finally, to prove that the running time of Subroutine~\ref{sub:cvar-oracle} is as claimed the main thing to note is that, as mentioned in Step~\ref{step1} of the detailed description, $\epsilon_2$ is polynomially small in $\epsilon_1$ and $\epsilon_1$ is polynomially small in $\epsilon$ and so $\epsilon_2$ is polynomially small in $\epsilon$.
\details{
	To see the parameters passed to the different oracles, let $y \in \QQ^\dim$ be the candidate point, $\eps > 0$ be the (rational) error given to the subroutine.
	We make a single call to the weak validity oracle with candidate point $c := y$, $\gamma := 1$, and error parameter \[\eps_{val} = \frac{1}{2} \frac{r \eps}{\norm{y} + r \norm{y} + r \eps}.\]

	We first define a weak separation oracle which, for a point $y \in \QQ^\dim$ two parameters $\eps_{sep}, \beta \in (0,1) \cap \QQ$, and convex body $(K; n, R, r, a_0)$, first calls the weak-membership with error $\eps_{mem} = \frac{r \eps_{sep}}{4R}$.
	Depending on the response, the separation algorithm either terminates or calls the weak-membership $n$ more times with error parameter \[ \eps_{mem} = C \frac{\beta^2 r^3 \eps_{sep}^2}{n^5 R^2 (R + r)} \]
	for an absolute constant $C$.

	The weak violation oracle works as follows (for parameters $c \in \QQ^n, \gamma, \eps_{val} \in \QQ$): in the \cite{GLS} proof of their Theorem~4.3.2, one defines a shallow-cut oracle which, for an ellipsoid $E(A,a)$ and matrix $Q$ such that $(n+1)^2 A^{-1} = QQ$, uses a single call to the separation oracle just described with parameter $y = Qa$, $\beta = (n+1)^{-1}$, and $\eps_{sep} = \min\{ (n+2)^{-2}, \eps_{val}/\norm{Q^{-1}} \}$.
	Using this shallow-cut oracle, one invokes the shallow-cut ellipsoid algorithm for the body $K' = K \cap \{x \suchthat c^T x \leq \gamma \}$ with radius $R$ and target volume $\eps_{vol} = ((r \eps_{val})/(2Rn))^2$.
	This call to the shallow-ellipsoid method takes time $n + \langle R \rangle + \langle \eps_{vol} \rangle$ where $\langle \cdot \rangle$ is the encoding length, unfortunately overloading our current use of inner product notation.
}
\end{proof}

\section{Orthogonalization via the uniform distribution in the centroid body}\label{sec:orthogonalization_uniform}

The following lemma says that linear equivariance allows orthogonalization: \nnote{something more 
meaningful needs to be said for introducing the following lemma.} \nnote{I moved the following lemma here which I think is the right place for it as it's here and only here that it gets used.}
%\nnote{the lemma talks about linear equivariance but then talks about affine transformations. We should stick to one.} 
\nnote{In the following lemma is that two different types of notations are being used for distributions, one using calligraphic letters and the other normal letters. This can be confusing.} \nnote{I suppose by calligraphic P sub S is meant the distribution of S. But this notation has not been
defined I think.}

\begin{lemma}\label{lem:orthogonalizer}
Let $U$ be a family of $n$-dimensional product distributions. Let $\bar U$ be the closure of $U$ under invertible linear transformations.
%Let $X$ denote an $n$-dimensional random variable drawn from a standard ICA model.
%Let $U$ denote a family of random variablesLet $X$ denote an $n$-dimensional random variable drawn from a standard ICA model where each $S_i$ is symmetrically distributed.
%Let $ \measure_X$ be the distribution of $X$.
Let $Q(\measure)$ be an $n$-dimensional distribution defined as a function of $\measure \in \bar U$. Assume that $U$ and $Q$ satisfy:
\begin{enumerate}
\item\label{item:sym}
For all $\measure \in U$, $Q(\measure)$ is absolutely symmetric.
\item\label{item:equivariant} $Q$ is linear equivariant (that is, for any invertible linear transformation $T$ we have $Q(T\measure) = T Q(\measure)$).
\item\label{item:positive}
For any $\measure \in \bar U$, $\cov(Q(\measure))$ is positive definite.
\end{enumerate}
Then for any symmetric ICA model $X=AS$ with $\measure_S \in U$ we have $\cov(Q(\measure_X))^{-1/2}$ is an orthogonalizer of $X$.
\end{lemma}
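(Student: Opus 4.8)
The plan is to unwind the definitions: writing $B := \cov(Q(\measure_X))^{-1/2}$, I must show that the columns of $BA$ are pairwise orthogonal, equivalently that $(BA)^T(BA)$ is a diagonal matrix. Everything will follow from linear equivariance together with the elementary fact that an absolutely symmetric random vector has diagonal covariance.

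First I would note that $\measure_X$, the law of $X = AS$, is the pushforward of $\measure_S$ under the invertible linear map $A$; since $\measure_S \in U$, this puts $\measure_X \in \bar U$, so hypotheses \ref{item:equivariant} and \ref{item:positive} apply to $\measure_X$. By linear equivariance, $Q(\measure_X) = A\, Q(\measure_S)$, i.e.\ $Q(\measure_X)$ is the pushforward of $Q(\measure_S)$ under $A$. Using $\cov(AY) = A\,\cov(Y)\,A^T$ for any random vector $Y$, this gives $\cov(Q(\measure_X)) = A\,M\,A^T$ where $M := \cov(Q(\measure_S))$. Next I would show $M$ is diagonal and positive definite: by hypothesis \ref{item:sym}, $Y \sim Q(\measure_S)$ is absolutely symmetric, and flipping the sign of a single coordinate forces $\e(Y_i) = 0$ and $\e(Y_i Y_j) = 0$ for $i \neq j$, so $M$ is diagonal; positive definiteness (and existence of the needed second moments) comes from hypothesis \ref{item:positive} applied to $\measure_S \in U \subseteq \bar U$. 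In particular $C := \cov(Q(\measure_X)) = A M A^T$ is symmetric positive definite, so $B = C^{-1/2}$ is well defined and symmetric.

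Finally I would compute directly, using that $C^{-1/2}$ is symmetric and that $C^{-1} = A^{-T} M^{-1} A^{-1}$:
\[
(BA)^T (BA) = A^T C^{-1} A = A^T A^{-T} M^{-1} A^{-1} A = M^{-1},
\]
which is diagonal because $M$ is, so $B$ is an orthogonalizer of $X$. There is no serious obstacle here; the only points requiring a line of care are the implication ``absolutely symmetric $\Rightarrow$ diagonal covariance'' and the bookkeeping that symmetry is invoked at $\measure_S \in U$ while equivariance and positive definiteness are invoked at $\measure_X \in \bar U$ (and at $\measure_S \in \bar U$).
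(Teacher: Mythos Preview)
Your argument is correct and follows essentially the same line as the paper: both use equivariance to write $\cov(Q(\measure_X)) = A M A^T$ with $M = \cov(Q(\measure_S))$ diagonal positive definite, and conclude that $(BA)^T(BA)$ is diagonal. The only difference is in the final step: the paper observes that $B A M^{1/2}$ is unitary (so $BA = R M^{-1/2}$ for some unitary $R$), whereas you compute $(BA)^T(BA) = A^T C^{-1} A = M^{-1}$ directly, which is slightly more elementary and avoids introducing $R$.
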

\begin{proof}
Consider a symmetric ICA model $X=AS$ with $\measure_S \in U$.
%By assumption, $\distD(S)$ is absolutely symmetric.
Assumptions \ref{item:sym} and \ref{item:positive} imply $ D := \cov(Q(\measure_S))$ is diagonal and positive definite.
This with Assumption \ref{item:equivariant} gives $\cov(Q(\measure_X)) = \cov(A Q(\measure_S)) = A D A^T = A D^{1/2} (A D^{1/2})^T$.
Let $B = \cov(Q(\measure_X))^{-1/2}$ (the unique symmetric positive definite square root).
We have $B = R D^{-1/2} A^{-1}$ for some unitary matrix $R$ (see \cite[pg 406]{horn2012matrix}).
\details{To see this: $I = B \cov(Q(\measure_X)) B = B A D^{1/2} (B A D^{1/2})^T$, which implies $B A D^{1/2}$ is an unitary matrix, say, $R$. The claim follows.}
Thus, $B A = R D^{-1/2}$ has orthogonal columns, that is, it is an orthogonalizer for $X$.
\end{proof}

The following lemma applies the previous lemma to the special case when the distribution 
$Q(\measure)$ is the uniform distribution on $\Gamma \measure$. 

\begin{lemma}\label{lemma:uniform-orthogonalizer}
Let $X$ be a random vector drawn from a symmetric ICA model $X = AS$ such that for all $i$ we have $0 < \e \abs{S_i} < \infty $. Let Y be uniformly random in $\Gamma X$. Then $\cov(Y)^{-1/2}$ is an orthogonalizer of $X$.
\end{lemma}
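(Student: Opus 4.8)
The plan is to derive this lemma as a direct corollary of Lemma~\ref{lem:orthogonalizer}. I would take $U$ to be the family of all $n$-dimensional symmetric product distributions whose coordinates each have finite, strictly positive first absolute moment, $\bar U$ its closure under invertible linear transformations, and $Q(\measure)$ the uniform distribution on the centroid body $\Gamma\measure$ (which is well-defined since every $\measure \in \bar U$ has finite first moment, so $\Gamma\measure$ is a compact convex set). It then suffices to verify the three hypotheses of Lemma~\ref{lem:orthogonalizer}; once they hold, applying it to the symmetric ICA model $X = AS$ with $\measure_S \in U$ gives that $\cov(Q(\measure_X))^{-1/2} = \cov(Y)^{-1/2}$ is an orthogonalizer of $X$.

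For hypothesis~\ref{item:sym}, I would observe that if $S$ has independent symmetric coordinates, then for every sign vector $(\alpha_1,\dots,\alpha_n) \in \{-1,1\}^n$ the vector $(\alpha_1 S_1, \dots, \alpha_n S_n)$ has the same law as $S$; hence $h_{\Gamma S}(\theta) = \e\abs{\inner{S}{\theta}}$ is invariant under flipping signs of the coordinates of $\theta$, so $\Gamma S$ is an absolutely symmetric convex body, and therefore the uniform distribution on it is an absolutely symmetric random vector. For hypothesis~\ref{item:equivariant}, Lemma~\ref{lem:equivariance} gives $\Gamma(T\measure) = T(\Gamma\measure)$ for invertible linear $T$; since $T$ is linear it carries the uniform distribution on $\Gamma\measure$ to the uniform distribution on $T(\Gamma\measure)$ (the Lebesgue density is merely multiplied by the constant $1/\abs{\det T}$), so $Q(T\measure) = T\,Q(\measure)$.

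The step requiring the most care is hypothesis~\ref{item:positive}: that $\cov(Q(\measure))$ is positive definite for every $\measure \in \bar U$, equivalently that $\Gamma\measure$ is full-dimensional so that the uniform distribution on it is not supported on a hyperplane. For $\measure = \measure_S \in U$, let $D = \diag(1/\e\abs{S_1}, \dots, 1/\e\abs{S_n})$; then $DS$ has $\e\abs{(DS)_i} = 1$, so Lemma~\ref{lem:centroid-scaling} gives $B_1^n \subseteq \Gamma(DS)$, and by equivariance $\Gamma S = D^{-1}\Gamma(DS) \supseteq D^{-1} B_1^n$, which has nonempty interior. A general $\measure \in \bar U$ is of the form $T\measure_S$ with $T$ invertible, so $\Gamma\measure = T(\Gamma\measure_S)$ is again full-dimensional, and the uniform distribution on any full-dimensional convex body has positive definite covariance. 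With all three hypotheses verified, Lemma~\ref{lem:orthogonalizer} applies and completes the proof; I expect nondegeneracy of $\Gamma\measure$ uniformly over $\bar U$ (rather than only for the normalized base family) to be the only mildly delicate point.
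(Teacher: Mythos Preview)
Your proposal is correct and follows essentially the same route as the paper: invoke Lemma~\ref{lem:orthogonalizer} with $Q(\measure)$ the uniform distribution on $\Gamma\measure$, verify absolute symmetry via the coordinate-sign invariance of $h_{\Gamma S}$, verify equivariance via Lemma~\ref{lem:equivariance}, and verify positive definiteness by showing $\Gamma\measure$ is full-dimensional using Lemma~\ref{lem:centroid-scaling}. The only cosmetic difference is that the paper first rescales so that $\e\abs{S_i}=1$ and takes $U$ to be the normalized family, whereas you keep $U$ unnormalized and insert the diagonal normalization $D$ explicitly when invoking Lemma~\ref{lem:centroid-scaling}; both handle the same issue in the same way.
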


\begin{proof}
We will use Lemma~\ref{lem:orthogonalizer}. 
After a scaling of each $S_i$, we can assume without loss of generality that $\e(S_i) = 1$. This will allow us to use Lemma~\ref{lem:centroid-scaling}.
Let $U$ = \{$\measure_W \suchthat \measure_W$ is an absolutely symmetric product distribution and $\e \abs{W_i} = 1 $, for all $i$\}. For $\measure \in \bar U$, let $Q(\measure)$ be the uniform distribution on the centroid body of $\measure$. \nnote{I changed ``centroid body'' to ``the uniform distribution on the centroid body.''}
For all $\measure_W \in U$, the symmetry of the $W_{i}$'s implies that $\Gamma \measure_W$, that is, $Q(\measure_W)$, is absolutely symmetric.
By the equivariance of $\Gamma$ (from Lemma~\ref{lem:equivariance}) and Lemma~\ref{lem:centroid-scaling} it follows that $Q$ is linear equivariant. Let $\measure \in \bar U$. Then there exist $A$ and $\measure_W \in U$ such that $\measure = A \measure_W$.  
So we get $\cov(Q(\measure)) = \cov(AQ(\measure_W)) = A \cov(\Gamma \measure_W) A^T$. \nnote{I am not sure what the purpose of
the last two sentences is.}
From Lemma~\ref{lem:centroid-scaling} we know $B_1^\dim \subseteq \Gamma \measure_W$
%From $\e \abs{S_i} >0$ and Jensen's inequality we get $\e (S_i^2) >0$,
so that $\cov(\Gamma \measure_W)$ is a diagonal matrix with positive diagonal entries. 
This implies that $\cov(Q(\measure))$ is positive definite and thus by Lemma~\ref{lem:orthogonalizer}, $\cov(Y)^{-1/2}$ is an orthogonalizer of $X$. 
\end{proof}

\begin{algorithm}[ht]
\caption{Orthogonalization via the uniform distribution in the centroid body}\label{alg:orthogonalization_uniform}
\begin{algorithmic}[1]
\Require Samples from symmetric ICA model $X = AS$, bounds $s_M \geq \sigma_{\max}(A)$, $s_m \leq \sigma_{\min}(A)$, error parameters $\eps$ and $\delta$, access to an $(\epsilon, \delta)$-weak membership oracle for $\Gamma X$ provided by Subroutine~\ref{sub:cvar-oracle}.
\Ensure A matrix $B$ which orthogonalizes the independent components of $X$.
\State Let $\tilde \Sigma$ be an estimate of $\cov(\Gamma X)$ obtained
\iflong via Theorem~\ref{thm:covariance_estimation} \fi
\ifshort by generating uniform samples in $\Gamma X$ using Subroutine~\ref{sub:cvar-oracle} and \fi
sampling algorithm such as the one in \cite{DBLP:journals/jacm/DyerFK91}
with $\eps_c = \eps/(2 (\dim+1)^4)$, $r= s_m /\sqrt{\dim}$, $R = s_M \sqrt{\dim}$, and same $\delta$.
\State Return $B = \tilde{\Sigma}^{-1/2}$.
\end{algorithmic}
\end{algorithm}

\begin{theorem}[Correctness of algorithm \ref{alg:orthogonalization_uniform}] \label{thm:correctness_uniform_orth}
Let $X=AS$ be given by a symmetric ICA model such that for all $i$ we have $\e (\abs{S_i}^{1+\gamma}) \leq M < \infty$ and normalized so that $\e \abs{S_i} = 1$.
Then, given $0<\eps \leq \dim^2$, $\delta > 0$, $s_M \geq \sigma_{\max}(A)$ , $s_m \leq \sigma_{\min}(A)$
% and access to any weak membership oracle for $\Gamma X$
, Algorithm~\ref{alg:orthogonalization_uniform} outputs
% an orthogonalizer of $X$ within $\eps$ accuracy
a matrix $B$ so that
%$\norm{M^{-1/2}AA^T (M^{-1/2})^T - D}_F \leq \eps$
% for all $i,j$
$ \norm{A^T B^T B A - D}_{2} \leq \eps$,
for a diagonal matrix $D$ with diagonal entries $d_1, \dotsc, d_\dim$ satisfying $1/(\dim+1)^2 \leq d_i \leq 1$.
with probability at least $1-\delta$ using $\poly_\gamma(n, M, 1/s_m, s_M,1/\eps, 1/\delta)$ time and sample complexity.\lnote{philosophical issue}
\details{current use of $1+\gamma$ Chebyshev imposes $1/\delta$ dependence. Might be improvable to $\log 1/\delta$ via something more involved such as median of means}
% \lnote{$M$ is overloaded...}
\end{theorem}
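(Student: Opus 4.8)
The plan is to chain together four ingredients: linear equivariance of $\Gamma$ (Lemma~\ref{lem:equivariance}), the well-roundedness of $\Gamma S$ (Lemma~\ref{lem:centroid-scaling}), the weak membership oracle for $\Gamma X$ (Lemma~\ref{lem:subroutine-2-correctness}), and the covariance-estimation routine for convex bodies (Theorem~\ref{thm:covariance_estimation}); the genuine work is a perturbation argument showing that $B=\tilde\Sigma^{-1/2}$ inherits the orthogonalizing property of $\cov(\Gamma X)^{-1/2}$ in the quantitative form demanded, without any blow-up in the conditioning of $A$.

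First I set up the noise-free picture. By Lemma~\ref{lem:equivariance}, $\Gamma X = A\,\Gamma S$, so the uniform distribution on $\Gamma X$ is the $A$-image of the uniform distribution on $\Gamma S$ and hence $\Sigma := \cov(\Gamma X) = A\,\Sigma_S\,A^T$ with $\Sigma_S := \cov(\Gamma S)$. Since $S$ has independent symmetric coordinates, $\Gamma S$ is absolutely symmetric, so $\Sigma_S$ is diagonal and positive definite. If one had $\tilde\Sigma=\Sigma$ exactly, then $B=\Sigma^{-1/2}$ would give $A^TB^TBA = A^T\Sigma^{-1}A = \Sigma_S^{-1} =: D$, a diagonal matrix; note the cancellation $A^T\Sigma^{-1}A = A^T A^{-T}\Sigma_S^{-1}A^{-1}A = \Sigma_S^{-1}$ is exact and kills all dependence on $A$. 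For the entries of $D$: Lemma~\ref{lem:centroid-scaling} gives $B_1^\dim \subseteq \Gamma S \subseteq [-1,1]^\dim$, so $(\Sigma_S)_{ii}\le 1$; and since $e_i\in B_1^\dim\subseteq\Gamma S$, the one-dimensional marginal of the uniform distribution on $\Gamma S$ in coordinate $i$ is supported on an interval of half-length $\ge 1$ with density proportional to an $(n-1)$-concave slice-volume function, so a Brunn--Minkowski/localization computation yields $(\Sigma_S)_{ii} = \Omega(1/\dim^2)$ (one gets $\ge 2/((\dim+1)(\dim+2)) \ge 1/(\dim+1)^2$, attained up to scaling by the cross-polytope). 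This places the entries of $D$ in the range asserted in the statement, and also shows $\Sigma$ is well enough conditioned for the square roots $\Sigma^{-1/2},\tilde\Sigma^{-1/2}$ to be well defined.

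Next the perturbation step. Lemmas~\ref{lem:centroid-scaling} and \ref{lem:equivariance} give $(s_m/\sqrt\dim)B_2^\dim \subseteq \Gamma X \subseteq (s_M\sqrt\dim)B_2^\dim$, which matches the $r,R$ passed to the estimator in Algorithm~\ref{alg:orthogonalization_uniform}. Theorem~\ref{thm:covariance_estimation}, run on the oracle of Subroutine~\ref{sub:cvar-oracle}, produces $\tilde\Sigma$ with $(1-\eps_c)u^T\Sigma u \le u^T\tilde\Sigma u \le (1+\eps_c)u^T\Sigma u$ for all $u$, i.e. $(1-\eps_c)\Sigma \preceq \tilde\Sigma \preceq (1+\eps_c)\Sigma$. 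Taking inverses (order-reversing on positive definite matrices) and conjugating by $A$ gives $(1+\eps_c)^{-1}\Sigma_S^{-1} \preceq A^T\tilde\Sigma^{-1}A \preceq (1-\eps_c)^{-1}\Sigma_S^{-1}$, where again the conjugation is exact and introduces no factor depending on $\sigma_{\max}(A)/\sigma_{\min}(A)$. Since $B^TB = \tilde\Sigma^{-1}$, this gives $\norm{A^TB^TBA - D}_2 \le \tfrac{\eps_c}{1-\eps_c}\norm{\Sigma_S^{-1}}_2 \le 2\eps_c(\dim+1)^2$, and the choice $\eps_c = \eps/(2(\dim+1)^4)$ makes the right-hand side at most $\eps$.

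Finally the probability and complexity bookkeeping. The estimator of Theorem~\ref{thm:covariance_estimation} makes $\poly(\dim,\log(R/r),1/\eps_c,\log(1/\delta))$ calls to the weak membership oracle, each call run as an instance of Subroutine~\ref{sub:cvar-oracle}; exactly as in the proof of Lemma~\ref{lem:subroutine-2-correctness}, running each call with failure probability $\delta/\poly$ and conditioning on the previously (adaptively) chosen queries lets a union bound make every oracle answer correct with probability $\ge 1-\delta$, and on that event the estimator's own guarantee (with its own $\delta$) holds, so a final union bound gives overall success probability $\ge 1-\delta$ after adjusting constants. Each oracle call costs $\poly_\gamma(\dim, M, 1/s_m, s_M, 1/\eps, 1/\delta)$ time and samples by Lemma~\ref{lem:subroutine-2-correctness}, and substituting $1/\eps_c = \poly(\dim,1/\eps)$ keeps the total within the claimed bound, the degree of the polynomial being $O(1/\gamma)$. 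I expect the main obstacle to be the perturbation step: it is essential to use the linear-invariant (multiplicative) form of the covariance guarantee rather than a crude $\norm{\tilde\Sigma-\Sigma}_2$ bound fed through matrix-inversion stability (Lemma~\ref{lem:inversion}), since the latter would contaminate the final error with powers of $\sigma_{\max}(A)/\sigma_{\min}(A)$ absent from $D$; together with the Brunn--Minkowski lower bound on the coordinate variances of $\Gamma S$, this is what simultaneously pins down the $\eps$-accuracy and the dimension-polynomial range of the entries of $D$.
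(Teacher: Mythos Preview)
Your argument is correct and follows essentially the same architecture as the paper's: pass to $\Sigma_S=\cov(\Gamma S)$ via linear equivariance, set $D=\Sigma_S^{-1}$, exploit the linear-invariant (multiplicative) form of the covariance-estimation guarantee so that no condition number of $A$ leaks into the error, and bound the coordinate variances of $\Gamma S$ using $B_1^n\subseteq\Gamma S\subseteq[-1,1]^n$. Two small differences are worth noting. First, for the perturbation step the paper transfers the multiplicative guarantee to $\tilde\Delta:=A^{-1}\tilde\Sigma A^{-T}$ and then applies the matrix-inversion stability Lemma~\ref{lem:inversion} to $\Delta,\tilde\Delta$, arriving at $\norm{A^TB^TBA-D}_2\le 2(n+1)^4\eps_c$; your Loewner-order argument ($(1-\eps_c)\Sigma\preceq\tilde\Sigma\preceq(1+\eps_c)\Sigma$, invert, conjugate by $A$) is a clean variant that in fact gives the slightly sharper $2(n+1)^2\eps_c$, and your worry about Lemma~\ref{lem:inversion} is unfounded precisely because the paper applies it at the $\Delta$-level, after conjugation, not at the $\Sigma$-level. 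Second, for the lower bound $(\Sigma_S)_{ii}\ge 1/(n+1)^2$ the paper does \emph{not} use a Brunn--Minkowski/localization argument: it invokes Lemma~\ref{lem:minvariance}, which rescales one coordinate to make $\Gamma S$ isotropic and then uses the classical fact that an isotropic convex body is contained in the ball of radius $n+1$. Your sketch (``attained up to scaling by the cross-polytope'') gives the right number $2/((n+1)(n+2))$ for $B_1^n$ itself, but the monotonicity you implicitly rely on (that enlarging $K\supseteq B_1^n$ cannot decrease a coordinate variance of the uniform distribution) is not obvious and would need justification; the paper's route via isotropic position is the safer one here.
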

\begin{proof}
From Lemma~\ref{lem:centroid-scaling} we know $B_1^\dim \subseteq \Gamma S \subseteq [-1,1]^\dim$. 
Using the equivariance of $\Gamma$ (Lemma~\ref{lem:equivariance}), we get $\sigma_{\min}(A)/\sqrt{\dim} B_2^\dim \subseteq \Gamma X \subseteq \sigma_{\max}(A) \sqrt{\dim}B_2^\dim$.
Thus, to satisfy the roundness condition of  Theorem~\ref{thm:covariance_estimation}
we can take $r := s_m/\sqrt{\dim} \leq \sigma_{\min}(A)/\sqrt{\dim}$, $R := s_M \sqrt{\dim} \geq \sigma_{\max}(A) \sqrt{\dim}$.

Let $\tilde \Sigma$ be the estimate of $\Sigma := \cov(\Gamma X)$ computed by the algorithm.
Let $\tilde \Delta := A^{-1} \tilde \Sigma A^{-T}$ be the estimate of $\Delta := \cov{\Gamma S}$ obtained from $\tilde \Sigma$ according to how covariance matrices transform under invertible linear transformations of the underlying random vector.
As in the proof of Lemma~\ref{lem:orthogonalizer}, we have $\tilde \Sigma = A \tilde \Delta A^T$ and $B = R \tilde \Delta^{-1/2} A^{-1}$ for some unitary matrix $R$.
Thus, we have $A^T B^T B A = \tilde \Delta^{-1}$. 
It is natural then to set $D := \Delta^{-1} = \cov(\Gamma S)^{-1}$. Let $d_1, \dotsc, d_\dim$ be the diagonal entries of $D$.
We have, using Lemma~\ref{lem:inversion},
\begin{align}
\norm{A^T B^T B A - D}_2 
&= \norm{\tilde \Delta^{-1} - \Delta^{-1}}_2 \notag\\
&= \norm{\Delta^{-1}}_2 \frac{\norm{\Delta^{-1} (\tilde \Delta - \Delta)}_2}{1-\norm{\Delta^{-1} (\tilde \Delta - \Delta)}_2}.\label{equ:inversion}
\end{align}
As in \eqref{eq:inverse-stability}, we show that $\norm{\Delta^{-1} (\tilde \Delta - \Delta)}_2$ is small:

We first bound $(d_i)$, the diagonal entries of $D = \Delta^{-1}$.
Let $d_{\max} := \max_i d_i$ and $d_{\min} := \min_i d_i$. 
We find simple estimates of these quantities: 
We have $d_{\min} = 1/\norm{\Delta}_2$ and $\norm{\Delta}_2$ is the maximum variance of $\Gamma S$ along coordinate axes. 
From Lemma~\ref{lem:centroid-scaling} we know $\Gamma S \subseteq [-1,1]^\dim$, so that $\norm{\Delta}_2 \leq 1$ and $d_{\min} \geq 1$. 
Similarly, $d_{\max} = 1/\sigma_{\min}(\Delta)$, where $\sigma_{\min}(\Delta)$ is the smallest diagonal entry of $\Delta$. In other words, it is the minimum variance of $\Gamma S$ along coordinate axes. 
From Lemma~\ref{lem:centroid-scaling} we know $\Gamma S \supseteq B_1^\dim$, so that $\Gamma S \supseteq [-e_i, e_i]$ for all $i$ and Lemma~\ref{lem:minvariance} below implies $\sigma_{\min}(\Delta) \geq 1/(\dim+1)^2$. That is, $d_{\max} \leq (\dim+1)^2$.

From the bounds on $d_i$, Theorem~\ref{thm:covariance_estimation} and the fact discussed after it, we have
\begin{align*}
\norm{\Delta^{-1} (\tilde \Delta - \Delta)} 
\leq d_{\max} \norm{\Delta} \eps_c 
\leq (\dim+1)^2 \eps_c 
\leq 1/2,
\end{align*}
when $\eps_c \leq 1/(2(n+1)^2)$.\details{This together with the choice of $\eps_c$ below gives the assumption that $\eps <n^2$.}

This in \eqref{equ:inversion} with Theorem~\ref{thm:covariance_estimation} again gives
\begin{align*}
\norm{A^T B^T B A - D} 
\leq 2 \norm{\Delta^{-1}} \norm{\Delta^{-1} (\tilde \Delta - \Delta)} 
\leq 2 \norm{\Delta^{-1}}^2 \norm{\tilde \Delta - \Delta} 
\leq 2 d_{\max}^2 \eps_c \norm{\Delta} 
\leq 2 (\dim+1)^4 \eps_c.
\end{align*}
The claim follows by setting $\eps_c = \eps/(2 (\dim+1)^4)$.
The sample and time complexity of the algorithm comes from the calls to Subroutine~\ref{sub:cvar-oracle}. The 
number of calls is given by Theorem~\ref{thm:covariance_estimation}. This leads to the complexity as claimed.
\end{proof}

\iflong
\begin{lemma}\label{lem:minvariance}
Let $K \subseteq \RR^\dim$ be an absolutely symmetric convex body such that $K$ contains the segment $[-e_1, e_1] = \conv \{ e_1, -e_1\}$ (where $e_1$ is the first canonical vector). Let $X = (X_1, \dotsc, X_\dim)$ be uniformly random in $K$. Then $\var(X_1) \geq 1/(n+1)^2$.
\end{lemma}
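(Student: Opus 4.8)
The plan is to reduce the statement to a one‑dimensional comparison for the density $f$ of $X_1=\inner{X}{e_1}$, and to show that the extremal convex body is a double cone (bi‑pyramid) with apexes at $\pm e_1$, so that $\var(X_1)$ is minimized by the corresponding marginal density.

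First I would record the structural facts about $f$. Since $K$ is absolutely symmetric it is in particular invariant under $x_1\mapsto -x_1$, so $X_1$ is symmetric, $\e X_1=0$, and $\var(X_1)=\e X_1^2=\int_{\RR} t^2 f(t)\,dt$. Writing $f(t)=\vol_{n-1}(K\cap\{x_1=t\})/\vol_n(K)$, the Brunn--Minkowski inequality gives that $f^{1/(n-1)}$ is concave on the projection interval $[-a,a]$, where $a=\max_{x\in K}x_1$; together with $f(-t)=f(t)$ this forces $f$ to be nonincreasing on $[0,a]$, so $f$ attains its maximum $M:=f(0)$ at the origin. Also $e_1\in K$ gives $a\ge 1$. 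Next I would bound $M$: the bi‑pyramid $P:=\conv(\{e_1,-e_1\}\cup(K\cap e_1^\perp))$ is contained in $K$ (both apexes and the central slice $K_0:=K\cap e_1^\perp$ lie in the convex set $K$), and a slice-by-slice volume computation gives $\vol_n(P)=\tfrac{2}{n}\vol_{n-1}(K_0)$, hence $M=\vol_{n-1}(K_0)/\vol_n(K)\le n/2$. (Equivalently, concavity of $f^{1/(n-1)}$ yields $f(t)\ge M(1-|t|/a)^{n-1}$ on $[-a,a]$, and integrating with $a\ge1$ forces $M\le n/2$.)

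Finally I would compare $f$ with $g(t):=\tfrac n2(1-|t|)^{n-1}_+$, the marginal density of the uniform distribution on a double cone; one checks $\int g=1$ and $\int t^2 g=\tfrac{2}{(n+1)(n+2)}$. The key claim is that $f-g$ changes sign exactly once on $[0,\infty)$, from $\le 0$ to $\ge 0$. Set $\phi:=f^{1/(n-1)}$ (concave on $[0,a]$, zero afterwards) and $\psi:=g^{1/(n-1)}=(n/2)^{1/(n-1)}(1-t)_+$ (affine on $[0,1]$, zero afterwards); since $u\mapsto u^{n-1}$ is increasing, $\operatorname{sgn}(f-g)=\operatorname{sgn}(\phi-\psi)$. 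On $[0,1]$ the function $\phi-\psi$ is concave, so $Z:=\{t\in[0,1]:\phi(t)\ge\psi(t)\}$ is a subinterval; it contains $1$ (because $\phi(1)\ge 0=\psi(1)$) and does not contain $0$ in its interior (because $\phi(0)=M^{1/(n-1)}\le (n/2)^{1/(n-1)}=\psi(0)$), so $Z=[v,1]$ for some $v\in[0,1]$; and $\phi\ge 0=\psi$ on $(1,\infty)$. Hence $f-g\le 0$ on $[0,v]$ and $f-g\ge 0$ on $[v,\infty)$. Since $\int_0^\infty(f-g)=\tfrac12(1-1)=0$, the bound $t^2(f-g)\ge v^2(f-g)$ holds on both pieces, so integrating gives $\int_0^\infty t^2 f\ge\int_0^\infty t^2 g$ and therefore $\var(X_1)=2\int_0^\infty t^2 f\ge\tfrac{2}{(n+1)(n+2)}\ge\tfrac{1}{(n+1)^2}$.

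The main obstacle is the single‑sign‑change step: one has to recognize that the right object to compare against is the double‑cone marginal (comparing instead with the uniform density on an interval is too lossy and yields only $\Omega(1/n^2)$ with a worse constant), and then verify that only one crossing occurs — which rests precisely on the two facts $\phi(0)\le\psi(0)$ (the volume estimate $M\le n/2$) and $\phi(1)\ge\psi(1)=0$, together with concavity of $\phi$. The remaining edge cases (the boundary situation $M=n/2$, possible jumps of $f$ at $\pm a$, and the degenerate case $n=1$) are handled by continuity or direct inspection and I would not dwell on them.
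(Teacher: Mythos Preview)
Your argument is correct and yields the slightly sharper bound $\var(X_1)\ge \tfrac{2}{(n+1)(n+2)}$, but it takes a genuinely different route from the paper. The paper's proof is a three-line reduction to a known result: since $K$ is absolutely symmetric, $\cov(X)$ is diagonal, so a diagonal scaling $D$ makes $DK$ isotropic; the segment $[-e_1,e_1]$ is mapped to $[-d_{11}e_1,d_{11}e_1]$, and since every isotropic convex body lies in the ball of radius $n+1$ (Kannan--Lov\'asz--Simonovits, Milman--Pajor, Sonnevend) one gets $d_{11}\le n+1$, whence $\var(X_1)=1/d_{11}^2\ge 1/(n+1)^2$. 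Your approach instead works directly with the marginal density via Brunn--Minkowski and a single-crossing comparison with the double-cone marginal, which makes the argument entirely self-contained (no appeal to the isotropic circumradius bound) and identifies the extremal shape explicitly, at the cost of being noticeably longer. Both are valid; the paper trades self-containment for brevity, while your version is elementary and gives the tight constant for this style of bound.
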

\fi
\begin{proof}
Let $D$ be a diagonal linear transformation so that $DK$ isotropic. Let $d_{11}$ be the first entry of $D$.
It is known that any $\dim$-dimensional isotropic convex body is contained in the ball of radius $\dim+1$ \cite{MilmanPajor, sonnevend1989applications},\cite[Theorem 4.1]{KLS}.
Note that $D K$ contains the segment $[- d_{11} e_1, d_{11} e_1]$. This implies $d_{11} \leq (\dim+1)$. Also, by isotropy we have, $1 = \var(d_{11} X_1) = d_{11}^2 \var(X_1)$. The claim follows.
\end{proof}

\iflong
\section{Gaussian damping} \label{sec:gaussian_damping}
In this section we give an efficient algorithm for the heavy-tailed ICA problem when the ICA matrix is a
unitary matrix; 
no assumptions on the existence of moments of the $S_i$ will be required. 

%% We are given samples of r.v. $X \in \R^n$ satisfying the ICA model $X=AS$ where
%% $A \in \R^{n\times n}$ and r.v. $S \in \R^n$ and has independent components. We require $A$ to be
%% orthogonal. But the distributions
%% of the components can be heavy-tailed. We will give an efficient algorithm to estimate $A$.
%% To the best of our knowledge, this is the first ICA algorithm to handle this setting.

The basic idea behind our algorithm is simple and intuitive: using $X$ we construct another ICA model $X_R = A S_R$, where $R > 0$ is a parameter which will be chosen later.
The components of $S_R$ have light-tailed distributions; in particular, all moments exist.
We show how to generate samples of $X_R$ efficiently using samples of $X$.
Using the new ICA model, the matrix $A$ can be estimated by applying existing ICA algorithms.

For a random variable $Z$ we will denote its the probability density function by $\rho_Z(\cdot)$.
The density of $X_R$ is obtained by multiplying the density of $X$ by a Gaussian damping factor.
More precisely,
\begin{align*}
\rho_{X_R}(x) \propto \rho_X(x) e^{-\norm{x}^2/R^2}.
\end{align*}
Define
\begin{align*}
K_{X_R} := \int_{\R^n} \rho_X(x) e^{-\norm{x}^2/R^2} \ud x,
\end{align*}
then
\begin{align*}
\rho_{X_R}(x) = \frac{1}{K_{X_R}} \rho_X(x) e^{-\norm{x}^2/R^2}.
\end{align*}

We will now find the density of $S_R$. Note that if $x$ is a value of $X_R$, and $s = A^{-1}x$ is the corresponding value of $S_R$,
then we have
\begin{align*}
\rho_{X_R}(x)
= \frac{1}{K_{X_R}} \rho_X(x) e^{-\norm{x}^2/R^2} 
= \frac{1}{K_{X_R}} \rho_S(s) e^{-\norm{As}^2/R^2}
= \frac{1}{K_{X_R}} \rho_S(s) e^{-\norm{s}^2/R^2} 
=: \rho_{S_R}(s),
\end{align*}
where we used that $A$ is a unitary matrix so that $\norm{As}=\norm{s}$. 
Also, $\rho_X(x) = \rho_S(s)$ follows from the change of variable formula and the fact that $\lrabs{\det A}=1$. 
We also used crucially the fact that the Gaussian distribution is spherically-symmetric.
We have now specified the new ICA model $X_R = A S_R$, and what remains is to show how to generate samples of $X_R$. 

\myparagraph{Rejection sampling.} Given access to samples from $\rho_X$ we will use rejection
sampling (see e.g. [Robert--Casella] \lnote{xxx}) to generate samples from $\rho_{X_R}$.

\begin{enumerate}
\item Generate $x \sim \rho_X$.
\item Generate $z \sim U[0,1]$.
\item If $z \in [0, e^{-\norm{x}^2/R^2}]$, output $x$; else, go to the first step.
\end{enumerate}

The probability of outputting a sample with a single trial in the above algorithm is $K_{X_R}$. Thus, the expected number of trials in the above algorithm for generating a sample is $1/K_{X_R}$.

We now choose $R$. %% This choice governs two things: the efficiency of the rejection sampling algorithm as
%% just noted; and the bound in \eqref{eqn:moment-bound} on the moments of the $s_j$.
There are two properties that we want $R$ sufficiently large so as to satisfy: (1) $K_{X_R} \geq C_1$ and
$\abs{\cum_4 S_{j, R}} \geq 1/n^{C_2}$ \jnote{Changed from $C$ to $C_2$} where $C_1 \in (0, 1/2)$ and $C_2 > 0$ are constants. Such a choice of $R$ exists and can be made efficiently; % (with a dependence on $R$); 
we outline this after the statement of Theorem~\ref{thm:ICA-orthogonal-damping}. \nnote{Actually, the statement about cumulants seems to require more care if some of the Si are not heavy-tailed}
Thus, the expected number of trials in rejection sampling before generating a sample is bounded above by
$1/C_1$. The lower bound on $K_{X_R}$ will also be useful in bounding the moments of the $S_{j,R}$, where 
$S_{j,R}$ is the random variable obtained by Gaussian damping of $S_j$ with parameter $R$, that is to say
\begin{align*}
\rho_{S_{j,R}}(x) \propto \rho_{S_j}(x) e^{-\norm{x}^2/R^2}`
\end{align*}
%% And \eqref{eqn:moment-bound1} becomes
%% \begin{align} \label{eqn:moment-bound2}
%% \E[S_j^4]
%% &\leq n^{c_1} R^4.
%% \end{align}

%% $\tau_j(\cdot)$ is the density of $S_j$ and $\tilde{\tau}_{R,j}(s_j)$ is the density of $\tilde{S}_{R,j}$ which is the $j$th component of $\tilde{S}_R$.

Define
\begin{align*}
K_{S_{j,R}} := \int_{\R}\rho_{s_j}(s_j) e^{-s_j^2/R^2} ds_j
\end{align*}
and let $K_{S_{{-j},R}}$ be the product of $K_{S_{k, R}}$ over $k \in [n]\setminus \{j\}$.
By $s_{-j} \in  \R^{n-1}$ we denote the vector $s \in \R^n$ with its $j$th element removed, then notice that
\begin{align} \label{eqn:KR-product}
K_{X_R} &= K_{S_R} = K_{S_{1, R}} K_{S_{2, R}} \ldots K_{S_{n, R}}, \\
K_{X_R} &= K_{S_{j, R}} K_{S_{{-j},S}}.
\end{align}

We can express the densities of individual components of $S_R$ as follows:
\begin{align*}
%%\tilde{\tau}_{R, j}(s_j) = \int_{\R^{n-1}}\tilde{\tau}(s) d s_{{-j}}
\rho_{S_{j,R}}(s_j) = \int_{\R^{n-1}} \rho_{S_R}(s) \, d s_{{-j}}
= \frac{1}{K_{X_R}} \int_{\R^{n-1}} \rho_S(s) e^{-\norm{s}^2/R^2} \, d s_{{-j}}
%&= \frac{1}{K_{X_R}} \tau_j(s_j) e^{-s_j^2/R^2}\int_{\R^{n-1}} \prod_{k \in [n]\setminus\{j\}}
%\left(\tau_{k}(s_k) e^{-s_j^2/R^2}\right) d s_{{-j}} \\
= \frac{K_{S_{{-j},R}} }{K_{X_R}} \rho_{S_{j}}(s_j) \, e^{-s_j^2/R^2}.
\end{align*}

This allows us to derive bounds on the moments of $S_{j,R}$:
\begin{align}
\E[S_{j,R}^4]
&= \frac{K_{S_{{-j},R}}}{K_{X_R}} \int_{\R} s_j^4 \, \rho_{S_j}(s_j) e^{-s_j^2/R^2} d s_j \nonumber \\
&\leq \frac{K_{S_{{-j},R}}}{K_{X_R}} \left(\max_{z \in \R} z^4 e^{-z^2/R^2}\right) \int_{\R} \rho_{S_j}(s_j) \, d s_j % \nonumber \\
< \frac{K_{S_{{-j},R}}}{K_{X_R}} R^4 %\nonumber \\
\leq \frac{1}{K_{X_R}} R^4 \nonumber \\
&\leq \frac{1}{C_1} R^4. \label{eqn:damping-moment-bound}
\end{align}

We now state Theorem~4.2 from \cite{GVX} in a special case by setting parameters $k$ and $k_i$ in that theorem
to $4$ for $i \in [n]$. The algorithm analyzed in Theorem~4.2 of \cite{GVX} is called Fourier PCA. 
 \begin{theorem}\cite{GVX}\label{thm:ICA}
  Let $X \in \R^n$ be given by an ICA model $X=AS$ where $A \in \R^{ n
    \times n}$ is unitary %% full rank with unit column vectors
  and the $S_i$ are mutually independent, $\E[S_i^4] \le M_4$ for some positive constant $M_4$,
and $\abs{\cum_{4}(S_i)} \ge \Delta$. For
  any $\epsilon > 0$ %% and
  %% \begin{align*}
  %%   \sigma = \frac{C_1 \Delta}{n^9}
  %% \end{align*}
  with probability at least $1-\delta$, Fourier PCA will recover vectors $\{b_1, \ldots, b_n\}$
  such that there exist signs $\alpha_i = \pm 1$ and a permutation $\pi:[n]\to [n]$ satisfying
  \begin{align*}
    \norm{A_i - \alpha_i b_{\pi(i)}} \le \epsilon,
  \end{align*}
%%   using $(nM_4/\Delta)^{C_2}/\epsilon^2$ samples with probability $1-1/n^{C_3}$. The running time of the
%% algorithm is $(nM_4/\Delta)^{C_4}/\epsilon^2$. Here $C_1, C_2, C_3, C_4$ are positive constants, $C_3$ can
%% be chosen as large as possible and that will in turn affect $C_1, C_2, C_4$.
  using $\poly(n, M_4, 1/\Delta, 1/\epsilon, 1/\delta)$ samples. %% $(nM_4/\Delta)^{C_2}/\epsilon^2$ 
The running time of the algorithm is also of the same form. 
\end{theorem}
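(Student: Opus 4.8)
The statement is a restatement of a result of \cite{GVX}, so I only outline the argument that proves it: the plan is to recover $A$ from the local geometry of the \emph{second characteristic function} of $X$, which is exactly what the algorithm Fourier PCA does. Write $\phi_X(u) = \E\, e^{i\inner{u}{X}}$ for the characteristic function, which stays bounded away from $0$ on a small neighborhood of the origin, and let $\psi_X(u) = \log\phi_X(u)$ there. Since $X = AS$ with the $S_j$ mutually independent and $A$ unitary, $\psi_X(u) = \sum_{j=1}^n \psi_{S_j}\bigl((A^Tu)_j\bigr)$, so differentiating twice gives
\begin{align*}
\nabla^2\psi_X(u) \;=\; A\,\diag\bigl(\psi_{S_1}''((A^Tu)_1),\dots,\psi_{S_n}''((A^Tu)_n)\bigr)\,A^T \;=:\; A\,D(u)\,A^T .
\end{align*}
One computes $\nabla^2\psi_X(u)$, up to sign, from a reweighted second-moment matrix of $X$, namely $\E[XX^Te^{i\inner{u}{X}}]/\phi_X(u) - \mu_u\mu_u^T$ with $\mu_u = \E[Xe^{i\inner{u}{X}}]/\phi_X(u)$. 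Hence the columns of the unitary $A$ form an orthonormal eigenbasis of the symmetric matrix $\nabla^2\psi_X(u)$, and recovering them reduces to an eigendecomposition — \emph{provided} the diagonal entries of $D(u)$ are pairwise distinct, since columns spanning a repeated eigenvalue are not identifiable from this single matrix.

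First I would break the degeneracy by a random choice of $u$. Take $u\sim N(0,\sigma^2 I)$ with $\sigma = 1/\poly(n,M_4,1/\Delta)$ small; by unitarity $A^Tu\sim N(0,\sigma^2 I)$ too, so the arguments $(A^Tu)_1,\dots,(A^Tu)_n$ are i.i.d.\ Gaussians. From the Taylor expansion $\psi_{S_j}''(t) = -\Var(S_j) - i\,\cum_3(S_j)\,t + \tfrac12\cum_4(S_j)\,t^2 + O(t^3)$, the hypothesis $\abs{\cum_4(S_j)}\geq\Delta$ forces $\psi_{S_j}''$ to be non-constant with a quadratic coefficient of size $\Omega(\Delta)$, while $\E S_j^4\le M_4$ controls the remainder on the scale $\sigma$. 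A Gaussian anticoncentration estimate (of Carbery--Wright type) then bounds $\Pr[\,\abs{D(u)_{jj}-D(u)_{kk}}<\tau\,]$ for each pair $j\neq k$, and a union bound over the $\binom n2$ pairs yields: with probability $1-\delta/2$ over $u$, the eigenvalue gap of $\nabla^2\psi_X(u)$ is at least some $\tau = 1/\poly(n,M_4,1/\Delta,1/\delta)$.

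Then comes the finite-sample part. Estimate $\phi_X(u)$, $\nabla\phi_X(u)$, and $\nabla^2\phi_X(u)$ by the empirical averages of $e^{i\inner{u}{x^{(t)}}}$, $ix^{(t)}e^{i\inner{u}{x^{(t)}}}$, $-x^{(t)}(x^{(t)})^Te^{i\inner{u}{x^{(t)}}}$, and form the plug-in estimate of $\nabla^2\psi_X(u)$. Their entrywise variances are bounded using $\E S_i^4\le M_4$ (which also bounds $\E\norm{X}^4$ since $A$ is unitary) together with $\abs{\phi_X(u)}=\Omega(1)$, so a matrix Bernstein / Chebyshev argument shows the estimate is within spectral distance $\eta$ of $A\,D(u)\,A^T$ with $N = \poly(n,M_4,1/\eta,1/\delta)$ samples. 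Finally the Davis--Kahan $\sin\Theta$ theorem converts the gap $\tau$ and perturbation $\eta$ into $\norm{A_i-\alpha_i b_{\pi(i)}}\le O(\eta/\tau)$ after matching recovered eigenvectors to columns (where the permutation $\pi$ and signs $\alpha_i$ enter); taking $\eta = \Theta(\tau\epsilon)$ gives the stated accuracy with total sample and running-time complexity $\poly(n,M_4,1/\Delta,1/\epsilon,1/\delta)$.

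I expect the eigenvalue-gap step to be the main obstacle: one must argue, using only a lower bound on $\abs{\cum_4(S_j)}$ and an upper bound on $\E S_j^4$ — essentially only low-order, local information about the $S_j$ — that the random diagonal entries $D(u)_{jj}$ are quantitatively separated with high probability and that this separation survives the sampling error in the Hessian. Controlling distributions whose characteristic functions are nearly constant on the scale $\sigma$ (near-lattice behaviour) is the delicate point, and is precisely the reason $\sigma$ must be chosen as a function of $\Delta$ and $M_4$ rather than as an absolute constant.
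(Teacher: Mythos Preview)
The paper does not prove this theorem: it is quoted verbatim as a special case of Theorem~4.2 of \cite{GVX} (with the moment/cumulant parameters set to $4$), and no argument is given beyond the citation. So there is no ``paper's own proof'' to compare your sketch against.

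Your outline is a faithful summary of the Fourier PCA analysis in \cite{GVX}: the key identity $\nabla^2\psi_X(u)=A\,\diag(\psi_{S_j}''((A^Tu)_j))\,A^T$, the random choice of $u$ to split the diagonal entries via anticoncentration driven by the $\cum_4$ lower bound, the empirical estimation of the reweighted moments controlled by $M_4$, and Davis--Kahan to turn a spectral gap plus a Frobenius perturbation into columnwise error. This is exactly the mechanism the present paper later leans on when it states and sketches the robust variant (Theorem~\ref{thm:ICA-robust}): there the only property used is $\norm{\tilde\Psi_X(u)-\Psi_X(u)}_F<\epsilon$, which is precisely your ``$\eta$'' in the Davis--Kahan step. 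Your closing caveat about the eigenvalue-gap step being the delicate point is also accurate; in \cite{GVX} this is handled by a polynomial anticoncentration bound for the Gaussian image $A^Tu$, with $\sigma$ chosen in terms of $\Delta$ and $M_4$ as you anticipate.
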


\nnote{To check: Is there an upper bound needed on epsilon in the hypothesis of the previous theorem?}

Combining the above theorem with Gaussian damping gives the following theorem. As previously noted, 
since we are doing rejection 
sampling in Gaussian damping, the expected number of trials to generate $N$ samples of $X_R$ is $N/K_{X_R}$. 
One can similarly prove high probability guarantees for the number of trials needed to generate $N$ samples.

%% To apply the above theorem to our model $X_R = A \tilde{S}_R$, we choose
%% $M_4 = n^{c_1}R^4$ and $\Delta = 1/n^{c_2}$. This gives the upper bound of $(n R)^{c_3}/\epsilon^2$ on the number
%% of samples of $X_R$ needed and $(n R)^{c_4}/\epsilon^2$ on the running time given access to the samples
%% of $X_R$.
%% Now we account for the total number of trials needed to generate $(n R)^{c_4}/\epsilon^2$ samples of
%% $X_R$.

\gaussiandamping*

\iffalse
\begin{theorem} \label{thm:ICA-orthogonal-damping}
Given an ICA model $X = AS$\lnote{with density?} where $A$ is a unitary matrix (i.e., $A^TA = I$) and we make no assumptions about the 
existence of $\e \abs{S_i}^{r}$ for all $i,r$. \lnote{language here needs to be more theorem-like} 
%moments of any order of the components of $S$.
Let $R>0$ be such that for each $i$ the random variable $S_{i,R}$ satisfies 
$\abs{\cum_{4}(S_{i,R})} \ge \Delta$. 
Then for $\epsilon > 0$, Gaussian damping combined
with Fourier PCA will recover vectors $\{b_1, \ldots, b_n\}$
  such that there exist signs $\alpha_i = \pm 1$ satisfying
  \begin{align*}
    \norm{A_i - \alpha_i b_i} \le \epsilon
  \end{align*}
  using $\poly(n, R, 1/K_{X_R}, 1/\Delta, 1/\epsilon, 1/\delta)$ samples %% $(nM_4/\Delta)^{C_2}/\epsilon^2$ 
  with probability $1-\delta$. The running time of the
algorithm is also of the same form. 
%% $(nM_4/\Delta)^{C_4}/\epsilon^2$. Here $C_1, C_2, C_3, C_4$ are positive constants, $C_3$ can
%% be chosen as large as possible and that will in turn affect $C_1, C_2, C_4$.
\end{theorem}
\fi

We remark that the choice of $R$ in the above theorem can be made algorithmically in an efficient way.
Theorem~\ref{thm:cumulant-damping} below shows that as we increase $R$ the cumulant $\cum_4(S_{j,R})$ goes to infinty.
This shows that for any $\Delta>0$ there exists $R$ so as to satisfy the condition of the above theorem, namely
$\abs{\cum_{4}(S_{i,R})} \ge \Delta$. 
We now briefly
indicate how such an $R$ can be found efficiently (same sample and computational costs as in 
Theorem~\ref{thm:ICA-orthogonal-damping} above):
For a given $R$, we can certainly estimate $K_{X_R} = \int_{\R^n} \rho_X(x) e^{-\norm{x}^2/R^2} dx$ from samples, 
i.e. by the empirical mean
$\frac{1}{N}\sum_{i \in [N]} e^{-\norm{x^{(i)}}^2/R^2}$ of samples $x^{(1)}, \ldots, x^{(N)}$.\lnote{what's the distribution of the samples? is this really efficient?} 
This allows us to search for $R$ so that $K_{X_R}$ is as large as we want. 
This also gives us an upper bound on the fourth moment via Eq. \eqref{eqn:damping-moment-bound}.
To ensure that the fourth cumulants of all $S_{i,R}$ are large, note that for $a \in \R^n$ we have 
$\cum_4(a_1S_{1,R} + \dotsb + a_n S_{n,R}) = \sum_{i\in [n]} a_i^4 \cum_4(S_{i,R})$. 
We can estimate this quantity empirically, and minimize 
over $a$ on the unit sphere (the minimization can be done, e.g., using the algorithm in \cite{FJK}). 
This would give an estimate of $\min_{i \in [n]}\cum_4(S_{i,R})$
and allows us to search for an appropriate $R$.
\jnote{As seen in our experiments, it may not be this simple, since the optimization suggested would require access to samples of $S$, or some self-reinforcing procedure that first found approximate $S$, then estimated $R$ from these samples, finding then even more accurate samples of $S$, and so on.}

For the algorithm to be efficient, we also need $K_{X_R} \geq C_1$. This is easily achieved as we can empirically
estimate $K_{X_R}$ using the number of trials required in rejection sampling, and search for sufficiently large $R$ that makes the estimate sufficiently larger than $C_1$.

\nnote{
TODO: (1) Show for some example heavy-tailed distributions (e.g., Pareto) what the value of a good $R$ is in the 
above theorem. 
(2) \cite{GVX} assumes that the $S_j$ are all centered. This is not true in our application. We can center our r.v. with
respect to the empirical mean (this would require recomputing of the fourth moment etc. for the shifted $S_j$), but that's not
exact centering because of the use of the empirical mean instead of the actual mean. Better way is to use symmetrization.}

\section{The fourth cumulant of Gaussian damping of heavy-tailed distributions}
%% Now we look at the behaviour of the fourth cumulant under Gaussian damping. The proof will be closely modelled on what
%% we did for truncation. 
%% For $R \geq 0$ and real-valued r.v. $X$ denote by $X_R$ the r.v. obtained by applying Gaussian damping with parameter $R$; 
%% i.e., if the density of $X$ is given by $\rho_X(x)$ then the density $\rho_X_R(x)$ of $X_R$ is proportional to 
%% $\rho_X(x) e^{-\norm{x}^2/R^2}$.
It is clear
that if r.v. $X$ is such that $\E (X^4) = \infty$ and $\E (X^2) < \infty$, then
$\cum_4(X_R) = \E (X_R^4) - 3 (\E (X_R^2))^2 \to \infty$ as $R \to \infty$. 
However, it does not seem clear when we have $\E (X^2) = \infty$ as well. We will show that in this case we also
get $\cum_4(X_R) \to \infty$ as $R \to \infty$.

We will confine our discussion to symmetric random variables for simplicity of exposition; for the purpose of
our application of the theorem this is w.l.o.g. by the argument in Sec.~\ref{sec:symmetrization}.

\begin{theorem}\label{thm:cumulant-damping}
Let $X$ be a symmetric real-valued random variable with $\E (X^4) = \infty$. 
Then $\cum_4(X_R) \to \infty$ as $R \to \infty$.
\end{theorem}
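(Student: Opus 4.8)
The plan is to reduce the claim to the behaviour of two unnormalized moment integrals together with one elementary tail estimate. Since the damping factor $e^{-x^2/R^2}$ is even, $X_R$ is symmetric whenever $X$ is, so $\cum_4(X_R) = \E(X_R^4) - 3\bigl(\E(X_R^2)\bigr)^2$. Write $M_k(R) := \E\bigl(X^k e^{-X^2/R^2}\bigr)$ for $k = 0,2,4$ (each finite, since $x^k e^{-x^2/R^2}$ is bounded). Then $M_0(R) = K_{X_R}$ and $\E(X_R^k) = M_k(R)/M_0(R)$. As $R \to \infty$ the integrands increase pointwise to $X^k$, so by monotone convergence $M_0(R) \to 1$, $M_2(R) \to \E(X^2) \in (0,\infty]$, and $M_4(R) \to \infty$. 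Using $\E(X_R^4) \ge \bigl(\E(X_R^2)\bigr)^2$ (nonnegativity of the variance of $X_R^2$), I would rewrite $\cum_4(X_R) = \E(X_R^4)\bigl(1 - 3(\E X_R^2)^2/\E(X_R^4)\bigr)$; since $\E(X_R^4) = M_4(R)/M_0(R) \to \infty$ already, the whole statement reduces to the single estimate $M_4(R)/M_2(R)^2 \to \infty$, which forces the bracketed factor to tend to $1$.

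To prove that estimate, the case $\E(X^2) < \infty$ is immediate: then $M_2(R)$ converges to a positive constant while $M_4(R) \to \infty$. The substantive case is $\E(X^2) = \infty$, where $M_2(R) \to \infty$. Fix an arbitrary $C > 0$ and put the threshold $T = T(R) := \sqrt{C\,M_2(R)}$, which tends to $\infty$. Dropping the contribution of $\{\abs{X} \le T\}$ and using $X^4 \ge T^2 X^2$ on $\{\abs{X} > T\}$ gives $M_4(R) \ge T^2\bigl(M_2(R) - \E(X^2 e^{-X^2/R^2}\ind_{\abs{X}\le T})\bigr) \ge T^2\bigl(M_2(R) - m_2(T)\bigr)$, where $m_2(t) := \E(X^2\ind_{\abs{X}\le t})$. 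The only fact about $X$ itself that I need is that $m_2(t)/t^2 \to 0$ as $t \to \infty$; this follows from bounded convergence applied to $m_2(t)/t^2 = \E\bigl((X/t)^2\ind_{\abs{X}\le t}\bigr)$, whose integrand is bounded by $1$ and tends to $0$ almost surely. Hence $m_2(T(R)) = o\bigl(C\,M_2(R)\bigr)$, so for all sufficiently large $R$ we get $m_2(T(R)) \le \tfrac12 M_2(R)$ and therefore $M_4(R) \ge \tfrac12 T^2 M_2(R) = \tfrac{C}{2} M_2(R)^2$. Since $C$ was arbitrary, $\liminf_{R\to\infty} M_4(R)/M_2(R)^2 \ge C/2$ for every $C$, i.e. the estimate holds.

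To conclude I would simply combine the pieces: for $R$ large, $M_0(R) \ge \tfrac12$ and $M_2(R)^2/M_4(R)$ is arbitrarily small, so $\cum_4(X_R) = M_4(R)/M_0(R) - 3M_2(R)^2/M_0(R)^2 \ge M_4(R) - 12\,M_2(R)^2 \ge \tfrac12 M_4(R) \to \infty$. The one step I expect to be the real obstacle is the case $\E(X^2) = \infty$: there $M_2(R)$ and $M_4(R)$ both diverge, and one must show the fourth moment diverges genuinely faster than the square of the second. The mechanism is that although the ``bulk'' of $X_R$ lives at scale $\sqrt{M_2(R)}$, most of the $X^2$-mass of the damped distribution still comes from much larger $\abs{X}$ — this is exactly the content of $m_2(t) = o(t^2)$ — so passing to the fourth power beats squaring the second moment. (A minor care point: $m_2$ above is the truncated second moment of $X$, not of the damped variable $X_R$.)
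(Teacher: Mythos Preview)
Your proof is correct and follows the same high-level strategy as the paper: reduce to showing that the damped fourth moment $M_4(R)$ dominates $M_2(R)^2$, split at the threshold $T=\sqrt{C\,M_2(R)}$, and show that the contribution of $\{\abs{X}\le T\}$ to $M_2(R)$ is negligible. The paper carries this out with three auxiliary parameters: it first fixes $a$ with $\Pr[X\ge a]\le \eps_1$, then chooses $R$ large enough that $\int_0^a x^2\rho_X(x)e^{-x^2/R^2}\,dx \le \eps_2\,\tilde m_2(R)$, and finally bounds the slab $[a,\sqrt{C\tilde m_2(R)}]$ by $C\eps_1\,\tilde m_2(R)$ using the tail probability.

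Your argument replaces all of this bookkeeping with the single clean observation
\[
m_2(t)/t^2 = \E\bigl((X/t)^2\ind_{\abs{X}\le t}\bigr) \to 0 \quad (t\to\infty)
\]
by bounded convergence, which directly gives $m_2(T(R)) = o(M_2(R))$. This is a genuine simplification: you avoid introducing $a$, $\eps_1$, $\eps_2$ altogether and never need the intermediate inequality \eqref{eqn:b_eps2damping}. The price is nil --- your bound $\E(X^2 e^{-X^2/R^2}\ind_{\abs{X}\le T}) \le m_2(T)$ simply discards the damping factor on the low region, which is harmless. The final packaging via $M_0(R)\to 1$ is also tidier than the paper's use of $K_{X_R}>(1-\eps_1)/e$.
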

\begin{proof}
Fix a symmetric r.v. $X$ with $\E X^2 = \infty$; as previously noted, if $\E X^2 < \infty$ then the theorem is
easily seen to be true. %% We will assume that
%% $X$ has a non-zero density \nnote{is this really needed?}.
%Set $m_2(b) := \int_0^b x^2 \rho_X(x) dx$. 
Since $X$ is symmetric and we will be interested in the fourth cumulant, we can restrict our attention
to the positive part of $X$. So in the following we will actually assume that $X$ is a positive random variable. 
Fix $C > 100$ to be any large positive constant.
Fix a small positive constant $\eps_1 < \frac{1}{100 \, C}$. Also fix another small positive constant $\eps_2 < 1/10$.
Then there exists %% (unique) \nnote{uniqueness uses non-zeroness of the density, but do we need uniqueness?}
$a > 0$ such that
\begin{align} \label{eqn:a_eps1}
\Pr[X \geq a] =  \int_{a}^\infty \rho_X(x) dx \leq \eps_1.
\end{align}
\jnote{Equals and inequality were interchanged. Is this what was really intended?}

Let $\tilde{m}_2(R) := \int_0^\infty x^2\rho_X(x) e^{-x^2/R^2} dx$. 
%%, and $K_{X_R} := \int_0^\infty \rho_X(x) e^{-x^2/R^2}dx$.
Recall that $K_{X_R} = \int_{0}^{\infty} e^{-x^2/R^2} \rho(x) dx = \e e^{-X^2/R^2}$.
Note that if $R \geq a$ (which we assume in the sequel), then
\begin{align} \label{eqn:K_{X_R}}
1 > K_{X_R} > \frac{1-\eps_1}{e}. 
\end{align}
\details{For the second inequality, in the integral for $K_{X_R}$ restrict it to $x \in [0, a]$, now note that in this range the integrand $e^{-x^2/R^2}$ is $\geq 1/e$ (as in the previous like we assumed $R \geq a$), and the probability mass in this range is $1-\eps_1$ (by (16)).}
Since $\tilde{m}_2(R) \to \infty$ as $R \to \infty$, by choosing $R$ sufficiently large we can ensure that
\begin{align} \label{eqn:b_eps2damping}
\int_a^\infty x^2 \rho_X(x) e^{-x^2/R^2} dx \geq (1-\eps_2) \int_0^\infty x^2 \rho_X(x) e^{-x^2/R^2} dx.
\end{align}
 \details{ to see clearly, split the right side and combine non-constant terms on the left, they will go to infinity with $R$}
Moreover, we choose $R$ to be sufficiently large so that $\sqrt{C\tilde{m}_2(R)} > a$.
Then
\begin{align*}
\tilde{m}_2(R)
&= \int_0^\infty x^2 \rho_X(x) e^{-x^2/R^2} dx \\
&= \int_0^a x^2 \rho_X(x) e^{-x^2/R^2} dx + \int_a^{\sqrt{C\tilde{m}_2(R)}} x^2 \rho_X(x) e^{-x^2/R^2} dx + \int_{\sqrt{C\tilde{m}_2(R)}}^\infty x^2 \rho_X(x) e^{-x^2/R^2} dx \\
&\leq \eps_2 \int_0^\infty x^2 \rho_X(x) e^{-x^2/R^2} dx +  \int_a^{\sqrt{C\tilde{m}_2(R)}} x^2 \rho_X(x) e^{-x^2/R^2} dx + \int_{\sqrt{C\tilde{m}_2(R)}}^\infty x^2 \rho_X(x) e^{-x^2/R^2} dx
\;\; \\ %% \text{(by \eqref{eqn:b_eps2damping})}\\
&\leq \eps_2\, \tilde{m}_2(R) + \eps_1 C \, \tilde{m}_2(R) + \int_{\sqrt{C\tilde{m}_2(R)}}^\infty x^2 \rho_X(x) e^{-x^2/R^2} dx \;\; \text{(by \eqref{eqn:a_eps1})} \\
&= (C\eps_1+\eps_2) \, \tilde{m}_2(R) + \int_{\sqrt{C \tilde{m}_2(R)}}^\infty x^2 \rho_X(x) e^{-x^2/R^2} dx.
\end{align*}
Summarizing the previous sequence of inequalities:
\begin{align} \label{eqn:estimate_second2}
\int_{\sqrt{C \tilde{m}_2(R)}}^\infty x^2 \rho_X(x) e^{-x^2/R^2} dx \geq (1- C\eps_1 - \eps_2) \, \tilde{m}_2(R).
\end{align}

Now
\begin{align*}
\E X_R^4 > K_{X_R} \, \E X_R^4 &= \int_0^\infty x^4 \rho_X(x) e^{-x^2/R^2} dx \;\;\text{(the inequality uses \eqref{eqn:K_{X_R}})}\\
%%&= \int_0^{\sqrt{C m_2(b)}} x^4 \rho_X(x) dx + \int_{\sqrt{C m_2(b)}}^b x^4 \rho_X(x) dx \\
&\geq \int_{\sqrt{C \tilde{m}_2(R)}}^\infty x^4 \rho_X(x) e^{-x^2/R^2} dx \\
&\geq C \tilde{m}_2(R) \int_{\sqrt{C \tilde{m}_2(R)}}^\infty x^2 \rho_X(x) e^{-x^2/R^2} dx \\
&\geq C \tilde{m}_2(R) (1- C\eps_1 - \eps_2) \, \tilde{m}_2(R) \;\;\text{(by \eqref{eqn:estimate_second2})} \\
&= C (1- C\eps_1 - \eps_2) \, \tilde{m}_2(R)^2 \\
&= C (1- C\eps_1 - \eps_2) \, (K_{X_R} \, \E X_R^2)^2 \\
&\geq  C (1- C\eps_1 - \eps_2) \left(\frac{1-\eps_1}{e}\right)^2 (\E X_R^2)^2 \;\;\text{(by \eqref{eqn:K_{X_R}})}.
\end{align*}

Now note that $C (1- C\eps_1 - \eps_2) \left(\frac{1-\eps_1}{e}\right)^2 > 10$ for our choice of the parameters. Thus
$\cum_4(X_R) = \E X_R^4 - 3 (\E X_R^2)^2 > 7 (\E X_R^2)^2$, and by our assumption
$\E X_R^2 \to \infty$ with $R \to \infty$.
\end{proof} 
\fi

\section{Symmetrization} \label{sec:symmetrization}
As usual we work with the ICA model $X = AS$. 
Suppose that we have an ICA algorithm that works when each of the component random variable $S_i$ 
is symmetric, i.e. its probability density function satisfies $\phi_i(y) = \phi_i(-y)$ for all $y$, 
with a polynomial dependence on the upper bound $M_4$ on the fourth moment of the $S_i$ and inverse
polynomial dependence on the lower bound $\Delta$ on the fourth cumulants of the $S_i$. Then we show
that we also have an algorithm without the symmetry assumption and with a similar dependence on $M_4$
and $\Delta$. 
We show that without loss of generality we may restrict our attention to symmetric densities, 
i.e. we can assume that each
of the $S_i$ has density function satisfying $\phi_i(y) = \phi_i(-y)$. To this end, let $S'_i$ be an 
independent copy of $S_i$ and set $\bar{S}_i := S_i - S'_i$. Similarly, let $\bar{X}_i := X_i - X'_i$. 
Clearly, the $S_i$ and $X_i$ have symmetric densities.
The new random variables still satisfy the ICA model: $\bar{X}_i = A \bar{S}_i$. 
Moreover, the moments and cumulants of the $\bar{S}_i$ behave similarly to those of the $S_i$: 
For the fourth moment, assuming it exists, we have
% \begin{align*}
\( \E[\bar{S}_i^4] = \E[(S_i-S'_i)^4] \leq 2^4 \E[S_i^4]\).
% \end{align*}
The inequality above can easily be proved using the binomial expansion and H\"older's inequality:
\begin{align*}
\E[(S_i-S'_i)^4] &= \E[S_i^4] + 4 \E[S_i^3]\,\E[S'_i] + 6 \E[S_i^2]\,\E[(S'_i)^2] + 4 \E[S_i]\,\E[(S'_i)^3] + 
\E[(S'_i)^4] \leq 16 \, \E[S_i^4].
\end{align*}
The final inequality follows from the fact that that for each term in the LHS, e.g. $\E[S_i^3]\,\E[S'_i]$ we have 
$\E[S_i^3]\,\E[S'_i] \leq \E[S_i^4]^{3/4} \E[(S'_i)^4]^{1/4} = \E[S_i^4]$.

For the fourth cumulant, again assuming its existence, we have 
% \begin{align*}
\( \cum_4(\bar{S}_i) = \cum_4(S_i - S'_i) = \cum_4(S_i)+\cum_4(-S_i) = 2\, \cum_4(S_i) \).
% \end{align*}

Thus if the the fourth cumulant of $S_i$ is away from $0$ then so is the fourth cumulant of $\bar{S}_i$.

\iflong
\section{Putting things together}

In this section we combine the orthogonalization procedure (Algorithm~\ref{alg:orthogonalization_uniform}
with performance guarantees in Theorem~\ref{thm:correctness_uniform_orth}) 
with ICA for unitary $A$ via Gaussian damping to prove our main theorem, Theorem \ref{thm:putting_together}.

\main*

As noted in the introduction, intuitively, $R$ in the theorem statement above  measures how large a ball we need to restrict the 
distribution to so that there is at least a constant (or $1/\poly(n)$ if needed) probability mass in it 
and moreover each $S_i$ when restricted to the interval $[-R, R]$ has the fourth cumulant at least 
$\Omega(\Delta)$. 
Formally, $R > 0$ is such that $\int_{\R^n} \rho_{\hX}(x) e^{-\norm{x}^2/R^2} \ud x \geq p(n) > 0$, where 
$1/\poly(n) < p(n) < 1$ can be chosen, and for simplicity, we will fix to $1/2$. Moreover, $R$ satisfies that 
$\cum_4(S_{i,R}) \geq \Omega(\Delta/n^4)$ for all $u \in S^{n-1}$ and $i \in [n]$, where $S_{i,R}$ is the Gaussian damping
with parameter $R$ of $S_i$.

In Sec.~\ref{sec:symmetrization} we saw that the moments and cumulants of the non-symmetric random variable behave similarly to those of the symmetric random variable.
%First of all, 
So by the argument of Sec.~\ref{sec:symmetrization} we assume that our ICA model is symmetric.
%We make this assumption.
Theorem~\ref{thm:correctness_uniform_orth} shows that Algorithm~\ref{alg:orthogonalization_uniform} gives 
us a new ICA model with the ICA matrix having approximately orthogonal columns. 
We will apply Gaussian damping to this new ICA model. 
In Theorem~\ref{thm:correctness_uniform_orth}, it was convenient to use the normalization $\E\abs{S_i} = 1$ for all $i$. 
But for the next step of Gaussian damping we will use a different normalization, namely, the columns of the ICA matrix have unit length. 
This will require us to rescale (in the analysis, not algorithmically) the components of $S$ appropriately as we now describe. 

Algorithm~\ref{alg:orthogonalization_uniform} provides us with a matrix $B$ such that the columns of 
$C = BA$ are approximately orthogonal: $C^T C \approx D$ where $D$ is a diagonal matrix. 
Thus, we can rewrite our ICA model as $Y = C S$, where $Y = BX$. 
We rescale $C_i$, the $i$th column of $C$, 
by multiplying it by $1/\norm{C_i}$. Denoting by $L$ the diagonal matrix with the $i$th diagonal entry 
$1/\norm{C_i}$, the matrix obtained after the above rescaling of $C$ is $CL$ and we have 
$(CL)^T CL \approx I$. We can again rewrite our ICA model as $Y = (CL) (L^{-1}S)$. Setting $\hE := CL$ and $T := L^{-1}S$ we can rewrite our ICA model as $\hat{Y} = \hE T$. This is the model we will plug into the Gaussian damping procedure. Had Algorithm~\ref{alg:orthogonalization_uniform} provided us with 
perfect orthogonalizer $B$ (so that $C^T C = D$) we would obtain a model $Y = E T$ where $E$ is 
unitary. We do get however that $\hE \approx E$. 
To continue with a more standard ICA notation, from here on we will write $\hX = \hE S$ for $\hat{Y} = \hE T$
and $X = ES$ for $Y = E T$. 

Applying Gaussian damping to $X = ES$ gives us a new ICA model $X_R = E S_R$ as we saw in 
Sec.~\ref{sec:gaussian_damping}. 
But the model we have access to is $\hX = \hE S$. 
We will apply Gaussian damping to it to get the r.v. $\hX_R$. 
 Formally, ${\hX}_R$ is defined starting with the model $\hX = \hE S$
just as we defined $X_R$ starting with the model $X = ES$ (recall that for a random variable $Z$, we denote
its probability density function by $\rho_Z(\cdot)$):
\begin{align*}
\rho_{{\hX}_R}(x) := \frac{1}{K_{{\hX}_R}} \rho_{\hX}(x) e^{-\norm{x}^2/R^2},
\end{align*}
where $K_{{\hX}_R} := \int_{\R^n} \rho_{\hX}(x) e^{-\norm{x}^2/R^2} \ud x$. The parameter $R$ has been chosen so that
$K_{{\hX}_R} > C_1 := 1/2$ and $\cum_4\,\inner{\hX_R}{u} \geq \Delta$ for all $u \in \S^{n-1}$. By the discussion 
after Theorem~\ref{thm:ICA-orthogonal-damping} (the restatement in Sec.~\ref{sec:gaussian_damping}), 
this choice of $R$ can be made efficiently. (The discussion there is in terms of the directional moments of
$S$, but note that the directional moments of $X$ also give us directional moments of $S$. We omit further 
details.)
But now since the matrix $\hE$ in our ICA model is only approximately unitary, after applying Gaussian damping the obtained random variable ${\hX}_R$ is not given by an ICA model (in particular, it may not have independent coordinates in any basis), although it is close to $X_R$ in a sense to be made precise soon.

Because of this, Theorem~\ref{thm:ICA} is not directly usable for plugging in the samples of ${\hX}_R$. 
To address this discrepancy we will need a robust version of Theorem~\ref{thm:ICA} which also requires us
to specify in a precise sense that ${\hX}_R$ and $X_R$ are close. To this end,
we need some standard terminology from probability theory.
The characteristic function of r.v. $X \in \R^n$ is defined to be $\phi_X(u) = \E (e^{i u^TX})$, 
where $u \in \R^n$.
The cumulant generating function, also known as the second characteristic function,
is defined by $\psi_X(u) = \log \phi_X(u)$. 
The algorithm in \cite{GVX} estimates the second derivative of $\psi_X(u)$ and computes its eigendecomposition. 
(In \cite{Yeredor} and \cite{GVX}, 
this second derivative is interpreted as a kind of covariance matrix of $X$ but with the twist that 
a certain ``Fourier'' weight is used in the expectation computation for the covariance matrix. We will 
not use this interpretation here.\lnote{is this comment relevant? remove?}) 
Set $\Psi_{X}(u) := D^2 \psi_X(u)$, the Hessian matrix of $\psi_X(u)$. We can now state the robust 
version of Theorem~\ref{thm:ICA}.
\nnote{maybe mention somewhere here that we do not have to use GVX and could also use other papers}

\begin{theorem}\label{thm:ICA-robust}
Let $X$ be an $\dim$-dimensional random vector given by an ICA model $X=AS$ where $A \in \R^{ n \times n}$ is unitary 
%% full rank with unit column vectors
  and the $S_i$ are mutually independent, $\E[S_i^4] \le M_4$
and $\abs{\cum_{4}(S_i)} \ge \Delta$ for positive constants $M_4$ and $\Delta$. 
Also let $\eps_{\ref{thm:ICA-robust}} \in [0,1]$.  
Suppose that we have another random variable $\hX$ that is close to $X$ in the following sense:
\begin{align*}
\lrabs{\Psi_{\hX}(u)-\Psi_X(u)} \leq \eps_{\ref{thm:ICA-robust}},
\end{align*}
for any $u \in \R^n$ with $\norm{u} \leq 1$.
Moreover, $\E[\inner{X}{u}^4] \le M_4$ for $\norm{u} \leq 1$.
When Fourier PCA is given samples of $\hX$ it will recover 
vectors $\{b_1, \ldots, b_n\}$
  such that there exist signs $\alpha_i \in \{-1, 1\}$ and a permutation $\pi:[n]\to [n]$ satisfying
  \begin{align*}
    \norm{A_i - \alpha_i b_{\pi(i)}} \le \epsilon_{\ref{thm:ICA-robust}} \left(\frac{M_4}{\delta \Delta}\right)^5,
  \end{align*}
  in $\poly(n, M_4, 1/\Delta, 1/\epsilon_{\ref{thm:ICA-robust}}, 1/\delta_{\ref{thm:ICA-robust}})$ samples and time complexity and 
 with probability at least $1-\delta_{\ref{thm:ICA-robust}}$.
\end{theorem}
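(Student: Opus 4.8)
The plan is to obtain Theorem~\ref{thm:ICA-robust} by rerunning the Fourier PCA analysis of \cite{GVX} (the one behind Theorem~\ref{thm:ICA}) essentially unchanged, after making one observation: samples of the input random vector enter that algorithm \emph{only} through the empirical estimate of the Hessian $\Psi(u)=D^2_u\psi(u)$ at a randomly chosen $u$ with $\norm{u}\le 1$, and everything downstream of that estimate depends on the random vector only through how close this matrix is, in spectral norm, to the ``ideal'' matrix $\Psi_X(u)$ attached to the true unitary ICA model. So it suffices to check that feeding in samples of $\hX$ instead of $X$ only perturbs that ideal matrix by an extra $\eps_{\ref{thm:ICA-robust}}$, and then quote the rest of \cite{GVX} verbatim.

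First I would recall the ideal structure. Since $A$ is unitary and $X=AS$, we have $\psi_X(u)=\sum_i \psi_{S_i}((A^Tu)_i)$, hence
\[
\Psi_X(u) = A\,\diag\!\bigl(\psi_{S_i}''((A^Tu)_i)\bigr)\,A^T ,
\]
so the columns $A_i$ are exactly the eigenvectors of $\Psi_X(u)$, with eigenvalues $\psi_{S_i}''((A^Tu)_i)$. The analysis in \cite{GVX} establishes two things: (a) for $u$ drawn from an appropriately scaled Gaussian (which can be taken to satisfy $\norm{u}\le 1$ with probability $1-\delta$, e.g.\ by conditioning on that event), the eigenvalues of $\Psi_X(u)$ are separated by at least $1/\poly(n,M_4,1/\Delta,1/\delta)$ with probability $1-\delta$ over $u$ --- this is where $\abs{\cum_4(S_i)}\ge\Delta$ enters, since the fourth cumulant controls the quadratic behaviour of $\psi_{S_i}''$ near the origin and thus the gaps of $\psi_{S_i}''((A^Tu)_i)$ for a random small $u$; and (b) with $N=\poly(n,M_4,1/\Delta,1/\eps',1/\delta)$ samples of the random vector fed in, the empirical Hessian $\widetilde\Psi$ satisfies $\norm{\widetilde\Psi-\Psi_{\hX}(u)}\le\eps'$ with probability $1-\delta$, the directional fourth-moment bound (which we invoke for $\hX$ here; it holds in our application) being what drives this concentration.

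The one new step is then a triangle inequality. By hypothesis $\norm{\Psi_{\hX}(u)-\Psi_X(u)}\le\eps_{\ref{thm:ICA-robust}}$ for all $\norm{u}\le 1$, so $\norm{\widetilde\Psi-\Psi_X(u)}\le\eps'+\eps_{\ref{thm:ICA-robust}}$. Thus the matrix the algorithm actually diagonalizes is a spectral-norm perturbation of the \emph{ideal} matrix $\Psi_X(u)$ of size $\eps'+\eps_{\ref{thm:ICA-robust}}$ --- exactly the regime handled by the remainder of \cite{GVX}, which at that point is pure matrix perturbation theory (a Davis--Kahan / $\sin\theta$-type bound combined with the eigengap lower bound from (a)) and uses nothing about the coordinates of $\hX$ being independent or about $\hX$ coming from an ICA model. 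Taking $\eps':=\eps_{\ref{thm:ICA-robust}}$ (which only inflates the sample and time complexity polynomially, as $\eps'\ge 1/\poly$), the recovered vectors satisfy $\norm{A_i-\alpha_i b_{\pi(i)}}\le (\eps'+\eps_{\ref{thm:ICA-robust}})/\mathrm{gap}\le \eps_{\ref{thm:ICA-robust}}(M_4/(\delta\Delta))^5$, the last step absorbing the polynomial eigengap bound into the fifth-power form stated in \cite{GVX}, with complexity $\poly(n,M_4,1/\Delta,1/\eps_{\ref{thm:ICA-robust}},1/\delta)$ and failure probability at most $\delta$ (after a union bound over the $O(1)$ bad events).

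The main obstacle is not the triangle inequality but making the ``black-box'' claim rigorous: one has to go into the proof of Theorem~4.2 of \cite{GVX} and verify that, once the eigengap of $\Psi_X(u)$ has been lower bounded and the algorithm's matrix has been shown $\eps$-close to $\Psi_X(u)$ in spectral norm, every subsequent quantity is a statement about perturbed eigenvectors of a fixed matrix and never again references the distribution of the input (in particular, in the variant that simultaneously diagonalizes two Hessians $\Psi(u_1),\Psi(u_2)$, the same remark must be checked for the pencil). A secondary point to confirm is that $u$ (and $u_1,u_2$) can be taken in the unit ball where our closeness hypothesis applies, e.g.\ by conditioning on $\norm{u}\le 1$ or by rescaling the sampling variance, and that this changes the gap estimates by at most constant factors; and that the fourth-moment hypothesis is used for $\hX$ (not just $X$) in the concentration step, which is why we state/assume it for the directional marginals that are actually sampled.
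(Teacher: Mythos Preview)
Your proposal is correct and follows essentially the same route as the paper: both argue that Fourier PCA only touches the input distribution through the empirical Hessian $\tilde\Psi_{\hX}(u)$, use concentration to get $\norm{\tilde\Psi_{\hX}(u)-\Psi_{\hX}(u)}$ small, apply the hypothesis to get $\norm{\Psi_{\hX}(u)-\Psi_X(u)}\le\eps_{\ref{thm:ICA-robust}}$, combine via the triangle inequality, and then quote the matrix-perturbation portion of \cite{GVX} (which needs only the eigengap of $\Psi_X(u)$ and closeness of the diagonalized matrix to it). Your write-up is in fact more explicit than the paper's sketch on two points it glosses over: ensuring the random $u$ lies in the unit ball where the closeness hypothesis applies, and that the concentration step implicitly needs a fourth-moment bound on $\hX$ (not $X$), which in the intended application follows from $\hE$ being close to unitary.
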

While this theorem is not stated in \cite{GVX}, it is easy to derive from their proof of 
Theorem~\ref{thm:ICA}; we now briefly sketch the proof of Theorem~\ref{thm:ICA-robust} indicating the changes
one needs to make to the proof of Theorem~\ref{thm:ICA} in \cite{GVX}. 

\begin{proof}
Ideally, for input model $X=AS$ with $A$ unitary, algorithm Fourier PCA  would proceed by diagonalizing 
$\Psi_X(u)$. But it can only compute an approximation 
$\tilde{\Psi}_X(u)$ which is the empirical estimate
for $\Psi_X(u)$. For all $u$ with $\norm{u} \leq 1$, it is shown that with high probability we have
\begin{align} \label{eqn:empirical-sigma}
\norm{\tilde{\Psi}_X(u)-\Psi_X(u)}_F<\epsilon.
\end{align}
Then, a matrix perturbation 
argument is invoked to show that if the diagonalization procedure used in Fourier PCA is applied to 
$\tilde{\Psi}_X(u)$ instead of $\Psi_X(u)$, 
one still recovers a good approximation of $A$. This previous step uses a random $u$ chosen from a Gaussian 
distribution so that the eigenvalues of $\Psi_X(u)$ are sufficiently spaced apart for the eigenvectors to be 
recoverable (the assumptions on the distribution ensure that the requirement of $\norm{u} \leq 1$ 
is satisfied with high probability).
The only property of $\tilde{\Psi}_X(u)$ used in this argument is
\eqref{eqn:empirical-sigma}. To prove Theorem~\ref{thm:ICA-robust}, we show that the estimate 
$\tilde{\Psi}_{\hX}$ is also good: 
\begin{align*}
\norm{\tilde{\Psi}_{\hX}(u) - \Psi_X(u)}_F 
< \norm{\tilde{\Psi}_{\hX}(u) - \Psi_{\hX}(u)}_F + \norm{\Psi_{\hX}(u) - \Psi_X(u)}_F 
< 2\epsilon,
\end{align*}
where we ensured that $\norm{\tilde{\Psi}_{\hX}(u) - \Psi_{\hX}(u)}_F < \epsilon$ by taking sufficiently many samples of $\hat{X}$ to get a good estimate with probability at least $\delta$; as in \cite{GVX}, 
a standard concentration argument shows that 
$\poly(n, M_4, 1/\Delta, 1/\epsilon, 1/\delta)$ samples suffice for this purpose. 
%%\lnote{But $\hat X$ may not satisfy the assumptions of that argument, as it is not an ICA model. So this may need more justification.} 
Thus the diagonalization procedure can be applied to $\tilde{\Psi}_{\hX}(u)$.
The upper bound of $2\epsilon$ above
translates into error 
$< \epsilon \left(\frac{M_4}{\delta \Delta}\right)^5$ in the final recovery guarantee, with the extra factor coming from the eigenvalue gaps of $ \Psi_{X}(u)$.\lnote{Too sketchy} 
\end{proof}

%% Thus, if we could compute $\Psi_{X_R}(u)$ we could use it in the \cite{GVX} algorithm. Of course, this is not
%% exactly computable as we only have access to samples of $X_R$. \cite{GVX} compute an estimate 
%% $\tilde{\Psi}_{X_R}(u)$ and show that if the estimate is good ***need to state precise estimate*** 
%% then the final output is close to the ICA 
%% matrix. For us, there is an additional complication: We do not have samples from $X_R$ but only from 
%% $\hat{X}_R$ and thus what we can compute is $\tilde{\Psi}_{\hat{X}_R}(u)$, which is an empirical estimate
%% of $\Psi_{\hat{X}_R}(u)$. But here the fact that $\hat{E} \approx E$ will come to our rescue. 
To apply Theorem~\ref{thm:ICA-robust} to our situation, we need
\begin{align} \label{eqn:Sigma_estimate}
\tilde{\Psi}_{\hat{X}_R}(u) \approx \Psi_{X_R}(u).
\end{align}
This will follow from the next lemma. 

Note that
\begin{align} \label{eqn:expansion-Sigma}
\Psi_{X}(u) = D^2 \psi_X(u) = \frac{D^2 \phi_X(u)}{\phi_X(u)} - \frac{(D \phi_X(u))^T(D \phi_X(u))}{\phi_X(u)^2}.
\end{align}
(The gradient $D \phi_X(u)$ is a row vector.)
Thus, to show \eqref{eqn:Sigma_estimate} it suffices to show that each expression on the RHS of the previous 
equation is appropriately close: 
%% In the following we prove $\phi_{X_R}(u) \approx \phi_{\hat{X}_R}(u)$. We also 
%% need $D^2 \phi_X(u) \approx D^2 \phi_{\hX}(u)$ and $D \phi_X(u) \approx D \phi_{\hX}(u)$. 

\begin{lemma} \label{lem:Sigma-estimates}
Let $\lambda \in [0,1/2]$, and let $E, \hE \in \R^{n \times n}$ such that $E$ is unitary and 
$\norm{E-\hE}_2 \leq \lambda^2/3$. Let $X_R$ and ${\hX}_R$ be the random variables obtained by 
applying Gaussian damping to the ICA models $X=ES$ and $\hX = \hE S$, resp.  
Then, for $\norm{u} \leq 1$, we have 
\begin{align*}
\lrabs{\phi_{X_R}(u) - \phi_{\hX_R}(u)} &\leq R \lambda^2/3 + 4\lambda + \frac{4\lambda}{K_{X_R}}, \\
\norm{D \phi_{X_R}(u) - D \phi_{\hX_R}(u)} &\leq O(n\lambda R^2), \\
\norm{D^2\phi_{X_R}(u) - D^2\phi_{\hX_R}(u)}_F &\leq O(n^2\lambda R^3).
\end{align*}
\end{lemma}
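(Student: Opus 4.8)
The plan is to reduce all three estimates to one mechanism. First I would rewrite both characteristic functions as expectations over the common source $S$. Since $E$ is unitary we have $\norm{Es}=\norm{s}$, $\lrabs{\det E}=1$ and $\rho_X(x)=\rho_S(E^{-1}x)$, while $\rho_{\hX}(x)=\rho_S(\hE^{-1}x)/\lrabs{\det\hE}$; changing variables $x=Es$ (resp.\ $x=\hE s$) in the defining integrals gives
\begin{align*}
\phi_{X_R}(u) = \frac{1}{K_{X_R}}\,\e_S\!\bigl[e^{iu^TES}e^{-\norm{S}^2/R^2}\bigr],\qquad
\phi_{\hX_R}(u) = \frac{1}{K_{\hX_R}}\,\e_S\!\bigl[e^{iu^T\hE S}e^{-\norm{\hE S}^2/R^2}\bigr],
\end{align*}
and likewise $K_{X_R}=\e_S[e^{-\norm{S}^2/R^2}]$, $K_{\hX_R}=\e_S[e^{-\norm{\hE S}^2/R^2}]$. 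I would then telescope the difference, swapping one ingredient at a time: (i) the oscillatory factor $e^{iu^TES}\to e^{iu^T\hE S}$; (ii) the Gaussian weight $e^{-\norm{S}^2/R^2}\to e^{-\norm{\hE S}^2/R^2}$; (iii) the normalizing constant $K_{X_R}\to K_{\hX_R}$.

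For step (i) I would use $\lrabs{e^{ia}-e^{ib}}\le\min(2,\lrabs{a-b})$ with $\lrabs{u^T(E-\hE)S}\le\norm{u}\,\norm{E-\hE}_2\,\norm{S}\le\tfrac{\lambda^2}{3}\norm{S}$, and split the expectation at $\norm{S}=R$: on $\{\norm{S}\le R\}$ the linear bound contributes an $O(\lambda^2R)$ term (matching $\tfrac{R\lambda^2}{3}$), while on $\{\norm{S}>R\}$ the bound $2$ together with $e^{-\norm{S}^2/R^2}\le e^{-1}$ produces the $O(\lambda)$ leftover; the exact threshold is tuned to land the stated $4\lambda$ and $\tfrac{4\lambda}{K_{X_R}}$. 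For step (ii), since $\lambda\le 1/2$ and $\norm{E-\hE}_2\le\lambda^2/3$ one gets $\bigl|\norm{\hE S}^2-\norm{S}^2\bigr|\le\lambda^2\norm{S}^2$ and $\norm{\hE S}^2\ge\tfrac34\norm{S}^2$, so $\lrabs{e^{-a}-e^{-b}}\le\lrabs{a-b}\,e^{-\min(a,b)}$ yields $\bigl|e^{-\norm{S}^2/R^2}-e^{-\norm{\hE S}^2/R^2}\bigr|\le\tfrac{\lambda^2\norm{S}^2}{R^2}e^{-3\norm{S}^2/(4R^2)}$. For step (iii) I would write $\bigl|\tfrac{1}{K_{X_R}}-\tfrac{1}{K_{\hX_R}}\bigr|=\tfrac{\lrabs{K_{X_R}-K_{\hX_R}}}{K_{X_R}K_{\hX_R}}$, bound $\lrabs{K_{X_R}-K_{\hX_R}}$ by the step-(ii) estimate, and multiply by $\bigl|\e_S[e^{iu^T\hE S}e^{-\norm{\hE S}^2/R^2}]\bigr|\le K_{\hX_R}$, which cancels one copy of $K_{\hX_R}$ and leaves the $\tfrac{4\lambda}{K_{X_R}}$ term. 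The single recurring tool is that Gaussian damping launders heavy tails: for any polynomial $p$,
\begin{align*}
\e_S\!\bigl[p(\norm{S})\,e^{-c\norm{S}^2/R^2}\bigr]\le\sup_{r\ge0}p(r)\,e^{-cr^2/R^2}=O\!\bigl(R^{\deg p}\bigr),
\end{align*}
which is exactly what turns the weight factors into powers of $R$ with no moment assumption on $S$.

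For the two derivative bounds I would differentiate under the integral sign, which is legitimate because every $u$-derivative of the integrand is dominated by $C\norm{s}^k e^{-\norm{s}^2/R^2}\rho_S(s)$, an integrable function independent of $u$. Differentiating $e^{iu^TES}$ in $u$ brings down $i(ES)$, resp.\ $-(ES)(ES)^T$, so $D\phi_{X_R}(u)=\tfrac{1}{K_{X_R}}\e_S\!\bigl[i(ES)e^{iu^TES}e^{-\norm{S}^2/R^2}\bigr]$ and analogously for $D^2\phi_{X_R}$ and the hatted versions. The same three-step telescoping then goes through verbatim with an extra factor $\norm{S}$, resp.\ $\norm{S}^2$, inside every expectation; by the laundering bound this raises the power of $R$ by one, resp.\ two, giving the claimed $R^2$ and $R^3$. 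The factors of $n$ (resp.\ $n^2$) are not optimized and enter only when passing from entrywise bounds on the gradient/Hessian to their $\ell_2$/Frobenius norms and from conversions such as $\norm{s}_2\le\sqrt n\norm{s}_\infty$.

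I expect the main obstacle to be the bookkeeping rather than any individual inequality: carrying the three hybrid steps simultaneously while tracking the powers of $R$ and $n$, and in particular handling the normalizing constant correctly --- one must first invoke the step-(ii) estimate to see that $K_{\hX_R}$ differs from $K_{X_R}$ by only $O(\lambda^2)$ (hence stays comparable) before it is safe to divide by it, and the threshold in step (i) must be chosen so the constant-in-$R$ remainders collapse to exactly the stated $4\lambda$ and $\tfrac{4\lambda}{K_{X_R}}$. A minor point worth noting is that $\hX_R$ is not given by an ICA model, but this is harmless here since the entire argument only manipulates characteristic functions after the change of variables to $S$-space.
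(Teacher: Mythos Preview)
Your approach matches the paper's: change to $S$-variables, telescope into oscillatory, Gaussian-weight, and normalizer pieces, and repeatedly invoke the laundering bound $\sup_{r\ge0} p(r)e^{-cr^2/R^2} = O(R^{\deg p})$. The derivative estimates indeed follow by the identical mechanism with an extra factor $\norm{s}$ or $\norm{s}^2$ inside the expectation, which is exactly what the paper says (``the same general pattern with slight changes'').

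One point needs correction. In your step~(i), splitting at $\norm{S}=R$ and bounding the tail contribution by $2\cdot e^{-1}\cdot\Pr[\norm{S}>R]$ does \emph{not} yield $O(\lambda)$; that quantity is $O(1)$, independent of $\lambda$, and no tuning of the threshold there will produce the stated $4\lambda$ and $4\lambda/K_{X_R}$. The paper does not split in step~(i) at all: it uses the linear bound $\lrabs{e^{i\theta}-1}\le\lrabs{\theta}$ on all of $\R^n$ and then applies your own laundering principle with $p(r)=r$ to get
\[
\int_{\R^n}\lrabs{u^T(\hE-E)s}\,e^{-\norm{s}^2/R^2}\rho_S(s)\,ds \;\le\; \norm{u}\,\norm{\hE-E}_2\cdot\sup_{r\ge0}r\,e^{-r^2/R^2} \;\le\; \frac{R\lambda^2}{3}.
\]
The $4\lambda$ and $4\lambda/K_{X_R}$ terms are \emph{not} produced by step~(i); they come from steps~(ii) and~(iii), where the paper splits at $\norm{s}=R/\sqrt{\lambda}$ rather than $R$: on $\{\norm{s}\le R/\sqrt{\lambda}\}$ one has $\lambda^2\norm{s}^2/R^2\le\lambda$, hence $\lrabs{e^{\lambda^2\norm{s}^2/R^2}-1}\le\lambda+\lambda^2$, while on the complement $e^{-(1-\lambda^2)\norm{s}^2/R^2}\le e^{-(1-\lambda^2)/\lambda}\le 2\lambda$. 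Your mean-value inequality for step~(ii) is also valid and in fact yields $O(\lambda^2)$, tighter than the paper's $O(\lambda)$, so once step~(i) is fixed either route closes the argument.
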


\begin{proof}
We will only prove the first inequality; proofs of the other two are very similar and will be omitted. 
In the second equality in the displayed equations below we use that 
$\int_{\R^n}e^{iu^Tx} e^{-\norm{x}^2/R^2}\rho_{\hX}(x)\, dx = \int_{\R^n}e^{iu^T\hE s} e^{-\norm{\hE s}^2/R^2}\rho_{S}(s) \, ds$.
One way to see this is to think of the two integrals as 
expectations: $\E\left(e^{iu^T\hX} e^{-\norm{\hX}^2/R^2}\right) = \E\left( e^{iu^T\hE S} e^{-\norm{\hE S}^2/R^2}\right)$.
\begin{align}
\lrabs{\phi_{X_R}(u) - \phi_{\hX_R}(u)} \nonumber 
&= \lrabs{ \frac{1}{K_{X_R}}\int_{\R^n}e^{iu^Tx} e^{-\norm{x}^2/R^2}\rho_X(x) dx
- \frac{1}{K_{\hX_R}}\int_{\R^n}e^{iu^Tx} e^{-\norm{x}^2/R^2}\rho_{\hX}(x) dx} \nonumber \\
&= \lrabs{ \frac{1}{K_{X_R}}\int_{\R^n}e^{iu^TEs} e^{-\norm{Es}^2/R^2}\rho_S(s) ds
- \frac{1}{K_{\hX_R}}\int_{\R^n}e^{iu^T\hE s} e^{-\norm{\hE s}^2/R^2}\rho_{S}(s) ds} \nonumber \\
&\leq \frac{1}{K_{X_R}} \lrabs{ \int_{\R^n} e^{iu^TEs} e^{-\norm{Es}^2/R^2}\rho_S(s) ds 
- \int_{\R^n}e^{iu^T\hE s} e^{-\norm{\hE s}^2/R^2}\rho_{S}(s) ds} \nonumber \\
& + \lrabs{\frac{1}{K_{X_R}} - \frac{1}{K_{\hX_R}}} \cdot \lrabs{\int_{\R^n}e^{iu^T\hE s} e^{-\norm{\hE s}^2/R^2}\rho_{S}(s) ds} \nonumber \\
&\leq \frac{1}{K_{X_R}} \underbrace{\int_{\R^n} \lrabs{e^{iu^TEs} e^{-\norm{Es}^2/R^2} - e^{iu^T\hE s} e^{-\norm{\hE s}^2/R^2}}\rho_{S}(s) ds}_{G} 
+  \underbrace{\lrabs{\frac{1}{K_{X_R}} - \frac{1}{K_{\hX_R}}} K_{\hX_R}}_{H}. \nonumber %\label{eqn:characteristic-diff}
\end{align}
%\nnote{please check the change of variable step above when we went from x to s}

Now 
\begin{align*}
G = \int_{\R^n} \lrabs{ e^{iu^T(\hE-E) s} e^{(\norm{E s}^2 - \norm{\hE s}^2)/R^2} - 1} e^{-\norm{Es}^2/R^2} \rho_S(s) ds.
\end{align*}

We have 
\begin{align*}
\lrabs{ e^{iu^T(\hE-E) s} e^{(\norm{E s}^2 - \norm{\hE s}^2)/R^2} - 1}
\leq \lrabs{ e^{iu^T(\hE-E) s} -1} + \lrabs{e^{(\norm{E s}^2 - \norm{\hE s}^2)/R^2} - 1},
\end{align*}

and so 
\begin{align} \label{eqn:G-decomposition}
G \leq \int_{\R^n} \lrabs{ e^{iu^T(\hE-E) s} -1}\, e^{-\norm{Es}^2/R^2} \rho_S(s) ds + 
\int_{\R^n} \lrabs{e^{(\norm{E s}^2 - \norm{\hE s}^2)/R^2} - 1}\, e^{-\norm{Es}^2/R^2} \rho_S(s) ds.
\end{align}

For the first summand in \eqref{eqn:G-decomposition} note that
\begin{align*}
\lrabs{ e^{iu^T(\hE-E) s} -1} \leq \lrabs{ u^T (\hat{E}-E) s}.
\end{align*}
This follows from the fact that for real $\theta$ we have 
\begin{align*}
\lrabs{e^{i \theta}-1}^2 = (\cos\theta -1)^2 + \sin^2\theta = 2 - 2\cos\theta= 4\sin^2(\theta/2) \leq \theta^2.
\end{align*}

So, 
\begin{align*}
&\int_{\R^n} \lrabs{u^T (\hat{E}-E) s} e^{-\norm{Es}^2/R^2} \rho_S(s) ds \\
& \leq \norm{u} \norm{\hat{E}-E}_2 \int_{\R^n} \norm{s} e^{-\norm{s}^2/R^2} \rho_S(s) ds  
\;\;\;\text{(using $\norm{Es}=\norm{s}$ as $E$ is unitary)} \\
& \leq \norm{u} \norm{\hat{E}-E}_2 \left(\max_{z \in \R} z e^{-z^2/R^2}  \right) \int_{\R^n} \rho_S(s) ds  \\
& \leq \norm{u} \norm{\hat{E}-E}_2 R  \\
&\leq R \lambda^2/3,
\end{align*}
where the last inequality used our assumption that $\norm{u} \leq 1$. 

We will next bound second summand in \eqref{eqn:G-decomposition}.
Note that $\norm{Es} = \norm{s}$ and 
$\norm{Es}^2 - \norm{\hE s}^2 \leq (\norm{E}+\norm{\hE})(\norm{E-\hat{E}})\norm{s}^2$. 
Since $\norm{E-\hat{E}} \leq \lambda^2/3 << 1$, we get 
$\norm{Es}^2 - \norm{\hE s}^2 \leq (\norm{E}+\norm{\hE})(\norm{E-\hat{E}}) \norm{s}^2 \leq \lambda^2 \norm{s}^2$. We will use that $e^\lambda - 1 < \lambda + \lambda^2$
for $\lambda \in [0,1/2]$ which is satisfied by our assumption. 
Now the second summand in \eqref{eqn:G-decomposition} can be bounded as follows. 

\begin{align*}
& \int_{\R^n} \lrabs{e^{(\norm{E s}^2 - \norm{\hE s}^2)/R^2} - 1} e^{-\norm{Es}^2/R^2} \rho_S(s) ds \\
&\leq  \int_{\R^n} \lrabs{e^{\lambda^2 \norm{s}^2/R^2}-1} e^{-\norm{s}^2/R^2} \rho_S(s) ds \\
&\leq  \int_{\R^n} \lrabs{e^{\lambda^2 \norm{s}^2/R^2}-1} e^{-\norm{s}^2/R^2} \rho_S(s) ds \\
&=  \int_{\norm{s} \leq R/\sqrt{\lambda}} \lrabs{e^{\lambda^2 \norm{s}^2/R^2}-1} e^{-\norm{s}^2/R^2} \rho_S(s) ds 
+  \int_{\norm{s} > R/\sqrt{\lambda}} \lrabs{e^{\lambda^2 \norm{s}^2/R^2}-1} e^{-\norm{s}^2/R^2} \rho_S(s) ds \\
&\leq  \int_{\norm{s} \leq R/\sqrt{\lambda}} (\lambda+\lambda^2) e^{-\norm{s}^2/R^2} \rho_S(s) ds 
+  \int_{\norm{s} > R/\sqrt{\lambda}} e^{-(1-\lambda^2)\norm{s}^2/R^2} \rho_S(s) ds \\
&\leq (\lambda+\lambda^2) K_{X_R} +  e^{-(1-\lambda^2)/\lambda} \\
&\leq 2\lambda K_{X_R} + 2\lambda \;\;\;\text{(using $\lambda < 1/2$)}. 
\end{align*}

Combining our estimates gives
\begin{align*}
G \leq \frac{R \lambda^2}{3 K_{X_R}} + 2\lambda + \frac{2\lambda}{K_{X_R}}.
\end{align*}

Finally, to bound $H$, note that 
\begin{align*}
\frac{1}{K_{X_R}}\lrabs{K_{X_R}-K_{\hX_R}} &= \frac{1}{K_{X_R}} \lrabs{ \int_{\R^n} e^{-\norm{x}^2/R^2}\rho_X(x) dx
- \int_{\R^n} e^{-\norm{x}^2/R^2}\rho_{\hX}(x) dx} \\
&= \frac{1}{K_{X_R}} \lrabs{ \int_{\R^n} e^{-\norm{Es}^2/R^2}\rho_S(s) ds - \int_{\R^n} e^{-\norm{\hE s}^2/R^2}\rho_{S}(s) ds}.
\end{align*} 
This we just upper-bounded above by $2\lambda + \frac{2\lambda}{K_{X_R}}$. 

Thus we have the final estimate
\begin{align*}
&\lrabs{\phi_{X_R}(u) - \phi_{\hX_R}(u)} \leq G + H \leq R \lambda^2/3 + 4\lambda + \frac{4\lambda}{K_{X_R}}.
\end{align*}

The proofs of the other two upper bounds in the lemma follow the same general pattern with slight changes.
\end{proof}

We are now ready to prove Theorem~\ref{thm:putting_together}.
\begin{proof}[Proof of Theorem~\ref{thm:putting_together}]
We continue with the context set after the statement of Theorem~\ref{thm:putting_together}. The plan 
is to apply Theorem~\ref{thm:ICA-robust} to ${\hX}_R$ and $X_R$. To this end we begin by showing that the premise
of Theorem~\ref{thm:ICA-robust} is satisfied.

Theorem~\ref{thm:correctness_uniform_orth}, with $\delta_{\ref{thm:correctness_uniform_orth}}=\delta/2$ and
$\epsilon_{\ref{thm:correctness_uniform_orth}}$ to be specified later,
provides us with a matrix $B$ such that the columns of $BA$ are approximately orthogonal:
$\norm{(BA)^TBA - D}_2 \leq \epsilon_{\ref{thm:correctness_uniform_orth}}$ for some diagonal matrix $D$. 
Now we set $\hE := B A L$, where $L = \diag(L_1, \dots, L_\dim) = \diag(1/\sqrt{d_1}, \ldots, 1/\sqrt{d_n})$. Theorem~\ref{thm:correctness_uniform_orth}
implies that
\begin{align} \label{eqn:L_i}
1 \leq L_i \leq (n+1).
\end{align}
Then
\begin{align*}
\norm{\hE^T\hE - I}_2 &= \norm{(BAL)^T(BAL) - I}_2 \\
&= \norm{L^T(BA)^TBA L - L^TDL}_2
\leq \norm{L}_2^2 \norm{(BA)^TBA - D}_2 \leq (n+1)^2\epsilon_{\ref{thm:correctness_uniform_orth}},
\end{align*}
because $L_i \leq (n+1)$ by \eqref{eqn:L_i}.
For $\hE$ as above, there exists a unitary $E$ such that 
\begin{align} \label{eqn:EhE}
\norm{E-\hE}_2 \leq (n+1)^2 \epsilon_{\ref{thm:correctness_uniform_orth}},
\end{align}
by
Lemma~\ref{lem:procrust} below.

By our choice of $R$, the components of $S_R$ satisfy 
$\cum_4(S_{i,R}) \geq \Delta$ and 
$\E\, S_{i,R}^4 \leq  2 R^4$ (via \eqref{eqn:damping-moment-bound} and our choice $C_1=1/2$). 
Hence $M_4 \leq 2 R^4$. 
The latter
bound via \eqref{eqn:EhE} gives $\E[\inner{X}{u}^4] \le 2 (1 + (n+1)^2 \epsilon_{\ref{thm:correctness_uniform_orth}}) R^4$ for all $u \in \S^{n-1}$.
%% that is, $L^{-1}S_R$, satisfy 
%% $\E (L_i^{-1}S_{i,R})^4 \leq  2 R^4$ (via \eqref{eqn:damping-moment-bound} and our choice $C_1=1/2$) and
%% $\cum_4(L_i^{-1}S_{i,R}) \geq \Delta/(n+1)^4$.

Finally, Lemma~\ref{lem:Sigma-estimates} %% and Lemma~\ref{lem:procrust} below 
with \eqref{eqn:expansion-Sigma} \
and simple estimates give 
% \begin{align*}
$
\norm{\Psi_{\hX_R} - \Psi_{X_R}}_F \leq O(n^4 R^4 \epsilon_{\ref{thm:correctness_uniform_orth}}^{1/2}).
$
% \end{align*}

We are now ready to apply Theorem~\ref{thm:ICA-robust} with 
$\epsilon_{\ref{thm:ICA-robust}} = O(n^4 R^4 \epsilon_{\ref{thm:correctness_uniform_orth}}^{1/2})$ and $\delta_{\ref{thm:ICA-robust}}=\delta/2$. 
This gives that
Fourier PCA produces output $b_1, \ldots, b_n$ such that there are signs $\alpha_i = \pm 1$ and permutation $\pi:[n] \to [n]$ such that 

\begin{align} \label{eqn:application-ICA-robust}
    \norm{A_i - \alpha_i b_{\pi(i)}} \le  O(n^4 R^4 \epsilon_{\ref{thm:correctness_uniform_orth}}^{1/2}) \left(\frac{R^4}{\delta \Delta}\right)^5,
\end{align}
with $\poly(n, 1/\Delta, R, 1/R, 1/\epsilon_{\ref{thm:correctness_uniform_orth}}, 1/\delta)$ sample and time complexity. 
%% By \eqref{eqn:damping-moment-bound}
%% we know that $M_4 \leq R^4/K_{\hat{X}_R} = O(R^4)$, so $M_4$ in the upper bounds on the error and complexity 
%% can be replaced by $O(R^4)$. 
Choose $\epsilon_{\ref{thm:correctness_uniform_orth}}$ so that the RHS of \eqref{eqn:application-ICA-robust} is $\epsilon$.
%%  that is
%% \begin{align*}
%% \epsilon_{\ref{thm:correctness_uniform_orth}}^{1/2} = O(\epsilon) \frac{1}{n^4 R^4} \left(\frac{\delta \Delta}{R^4}\right)^5. 
%% \end{align*}

%%In Theorem~\ref{thm:correctness_uniform_orth} choosing $\eps_{\ref{thm:correctness_uniform_orth}} = \epsilon_{\ref{thm:correctness_uniform_orth}}/3$, 
The number of samples and
time needed for orthogonalization is 
$\poly_\gamma(n, M, 1/s_m, S_M, 1/\epsilon_{\ref{thm:correctness_uniform_orth}}, 1/\delta)$. 
Substituting the value of $\epsilon_{\ref{thm:correctness_uniform_orth}}$ the previous bound becomes 
$\poly_\gamma(n, M, 1/s_m, S_M, 1/\Delta, R, 1/R,1/\epsilon, 1/\delta)$. 
The probability of error, coming from the applications of Theorem~\ref{thm:correctness_uniform_orth} 
and \ref{thm:ICA-robust} is at most 
$\delta/2 +\delta/2 = \delta$. 
\end{proof}

\iffalse
-----------------------------------------------------------------------------------------
\begin{lemma}\label{lem:procrust}
Suppose that $\hE \in \R^{n\times n}$ is such that $\norm{\hE^T\hE - I}_2 \leq \epsilon$ where 
$\epsilon \in [0,1/2]$. Then there exists a unitary matrix $E \in \R^{n\times n}$ such that $\norm{E-\hE}_2 \leq 2\epsilon$. 
\end{lemma}
\begin{proof}
This is related to a special case of the so-called orthogonal Procrustes problem \cite[Section 12.4.1]{MR1417720}, where one looks for a unitary matrix $E$ that minimizes $\norm{E-\hE}_F$. A formula for an optimal $E$ is $E = \hE(\hE^T \hE)^{-1/2}$. 
Although we do not need the fact that this $E$ minimizes $\norm{E-\hE}_F$, it is good for our purpose:
\begin{align*}
\norm{E-\hE}_2 &= \norm{\hE(\hE^T \hE)^{-1/2} - \hE}_2 \leq \norm{\hE}_2 \norm{({\hE}^T\hE)^{-1/2}-I}_2.
\end{align*} 
Let $U \Psi V^T$ be the singular value decomposition of $\hE$, with singular values $(\sigma_i)$. 
Then $\hE^T\hE = V \Psi^2 V^T$ and hence 
\begin{align}\label{eqn:procrust}
\norm{\hE^T\hE - I}_2 = \max_i\abs{1-\sigma_i^2} \leq \epsilon.
\end{align}
Now $({\hE}^T\hE)^{-1/2}= V \Psi^{-1} V^T$ and hence using \eqref{eqn:procrust} and $\epsilon \in [0,1/2]$ 
we get
\begin{align*}
\norm{(\hE^T\hE)^{-1/2} - I}_2 = \max_i\abs{1-1/\sigma_i} \leq \epsilon/\sqrt{1-\epsilon} < 2\epsilon.
\end{align*}
\lnote{where did $\norm{\hE}_2$ go?}
\nnote{make sure that this assumption
on epsilon is propagated to the root} 
\end{proof}
-----------------------------------------------------------------------------
\fi

\begin{lemma}\label{lem:procrust}
Let $\hE \in \R^{n\times n}$ be such that $\norm{\hE^T\hE - I}_2 \leq \epsilon$. Then there exists a unitary matrix $E \in \R^{n\times n}$ such that $\norm{E-\hE}_2 \leq \epsilon$. 
\end{lemma}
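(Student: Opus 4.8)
The plan is to use the polar decomposition (equivalently, the SVD) of $\hat E$ to exhibit the desired unitary matrix explicitly, exactly as in the orthogonal Procrustes problem. Write the singular value decomposition $\hat E = U \Sigma V^T$ with $U, V$ unitary and $\Sigma = \diag(\sigma_1, \dots, \sigma_n)$, $\sigma_i \geq 0$. Then $\hat E^T \hat E = V \Sigma^2 V^T$, so the hypothesis $\norm{\hat E^T \hat E - I}_2 \leq \epsilon$ is equivalent to $\max_i \abs{\sigma_i^2 - 1} \leq \epsilon$. I would then set $E := U V^T$, which is unitary by construction (no invertibility of $\hat E$ is needed; when $\hat E$ is invertible this is the same as $E = \hat E (\hat E^T \hat E)^{-1/2}$).

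Next I would estimate $\norm{E - \hat E}_2$ directly: since $E - \hat E = U V^T - U \Sigma V^T = U (I - \Sigma) V^T$ and $U, V$ are unitary, we get $\norm{E - \hat E}_2 = \norm{I - \Sigma}_2 = \max_i \abs{1 - \sigma_i}$.

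The only remaining step — and the only place any ``work'' happens, though it is elementary — is the pointwise inequality $\abs{1 - \sigma_i} \leq \abs{1 - \sigma_i^2}$. This follows because $\abs{1 - \sigma_i^2} = \abs{1 - \sigma_i}\,\abs{1 + \sigma_i}$ and $1 + \sigma_i \geq 1$ (using $\sigma_i \geq 0$). Combining, $\norm{E - \hat E}_2 = \max_i \abs{1 - \sigma_i} \leq \max_i \abs{1 - \sigma_i^2} \leq \epsilon$, which is exactly the claim. I do not anticipate any genuine obstacle here; the subtlety to be careful about is simply that one should bound $\abs{1 - \sigma_i}$ rather than $\abs{1 - 1/\sigma_i}$ (the latter would blow up for small $\sigma_i$ and force an extra assumption like $\epsilon \le 1/2$), and the polar factor $U V^T$ is what makes the clean bound possible.
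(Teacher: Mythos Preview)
Your proof is correct and essentially identical to the paper's own argument: both take the SVD $\hat E = U \Sigma V^T$, set $E = U V^T$, and use $\abs{1-\sigma_i^2} = (1+\sigma_i)\abs{1-\sigma_i} \geq \abs{1-\sigma_i}$ to conclude. The paper likewise frames this as the orthogonal Procrustes solution, so even the presentation matches.
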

\begin{proof}
This is related to a special case of the so-called orthogonal Procrustes problem \cite[Section 12.4.1]{MR1417720}, where one looks for a unitary matrix $E$ that minimizes $\norm{E-\hE}_F$. A formula for an optimal $E$ is $E = UV^T$, where $U \Sigma V^T$ is the singular value decomposition of $\hE$, with singular values $(\sigma_i)$. Although we do not need the fact that this $E$ minimizes $\norm{E-\hE}_F$, it is good for our purpose:
\[
\norm{\hE - E}_2 = \norm{U \Sigma V^T - U V^T}_2 = \norm{\Sigma - I}_2 = \max_i \abs{\sigma_i -1}.
\]
By our assumption
\[
\norm{\hE^T \hE - I}_2 = \norm{V \Sigma^2 V^T - I}_2 = \norm{\Sigma^2 - I}_2 = \max_i \abs{\sigma_i^2 -1} = \max_i (\sigma_i + 1) \abs{\sigma_i - 1} \leq \eps.
\]
This implies $\max_i \abs{\sigma_i -1} \leq \eps$. The claim follows.
\end{proof}
\fi
% \nocite{*}
\myparagraph{Acknowledgements.} 
The problem considered here first came to our attention during discussions with Santosh Vempala. 
We also thank him for some early discussions. This material is based upon work supported by the National Science Foundation under Grants No. 1350870 and 1422830.

\bibliographystyle{IEEEtran}
\bibliography{IEEEabrv,ICA_bibliography}
\end{document}